\providecommand*{\boxast}{%
  \mathbin{% as \boxplus and \boxtimes
    \mathpalette\@boxit{*}%
  }%
}
\newcommand*{\@boxit}[2]{%
  % #1: math style (\displaystyle, \textstyle, ...)
  % #2: symbol to be boxed that is centered around the math axis
  \sbox0{$\m@th#1\Box$}%
  % Manual correction for font bounding boxes:
  \ifx#1\displaystyle \ht0=\dimexpr\ht0+.05ex\relax \fi
  \ifx#1\textstyle \ht0=\dimexpr\ht0+.05ex\relax \fi
  \ifx#1\scriptstyle \ht0=\dimexpr\ht0+.04ex\relax \fi
  \ifx#1\scriptscriptstyle \ht0=\dimexpr\ht0+.065ex\relax \fi
  \sbox2{$#1\vcenter{}$}% \ht2 is positionn of math axis
  \rlap{%
    \hbox to \wd0{%
      \hfill
      \raisebox{%
        \dimexpr.5\dimexpr\ht0+\dp0\relax-\ht2\relax
      }{$\m@th#1#2$}%
      \hfill
    }%
  }%
  \Box
}
\def\BState{\State\hskip-\ALG@thistlm}
\newcommand{\tsn}[1]{{\left\vert\kern-0.25ex\left\vert\kern-0.25ex\left\vert #1 
    \right\vert\kern-0.25ex\right\vert\kern-0.25ex\right\vert}}
\definecolor{darkred}{RGB}{150,0,0}
\newcommand{\distas}{\overset{\text{i.i.d.}}{\sim}}
\definecolor{darkgreen}{RGB}{0,150,0}
\definecolor{darkblue}{RGB}{0,0,200}
\newtheorem{theorem}{Theorem}[section]
\newtheorem{assumption}{Assumption}
\newtheorem{lemma}[theorem]{Lemma}
\newtheorem{definition}[theorem]{Definition}
\newcommand{\eps}{\varepsilon}
\newcommand{\epst}{\tilde{\varepsilon}}
\newcommand{\vs}{\vspace{-0.0cm}}
\newcommand{\remove}[1]{}
\newcommand{\bXi}{{\bar{\Xi}}}
\newcommand{\beq}{\begin{equation}}
\newcommand{\eeq}{\end{equation}}
\newcommand{\nn}{\nonumber}
\newcommand{\la}{\lambda}
\newcommand{\where}{\quad\text{where}\quad}
\newcommand{\nt}{{u}}
\newcommand{\Lc}{{\cal{L}}}
\newcommand{\Uc}{{\cal{U}}}
\newcommand{\tang}[1]{\text{cot}(#1)}
\newcommand{\bti}{\bt_{\text{init}}}
\newcommand{\bSi}{{\boldsymbol{{\Sigma}}}}
\newcommand{\bmu}{{\boldsymbol{{\mu}}}}
\newcommand{\Iden}{{\mtx{I}}}
\newcommand{\Gm}{\bar{\Gamma}}
\newcommand{\z}{{\vct{z}}}
\newcommand{\tn}[1]{\|{#1}\|_{\ell_2}}
\newcommand{\tsub}[1]{\|{#1}\|_{\psi_2}}
\newcommand{\Ac}{\mathcal{A}}
\newcommand{\Dc}{\mathcal{D}}
\newcommand{\Rc}{\mathcal{R}}
\newcommand{\bt}{{\boldsymbol{\beta}}}
\newcommand{\bth}{{\boldsymbol{\hat{\beta}}}}
\newcommand{\Sc}{\mathcal{S}}
\newcommand{\Sct}{\mathcal{U}}
\newcommand{\Lch}{{\cal{L}_{\Sc}}}
\newcommand{\tell}{{\tilde{\ell}}}
\newcommand{\Lct}{{\tilde{\cal{L}}}}
\newcommand{\Lcth}{{\tilde{\cal{L}}_{\Uc}}}
\newcommand{\Nn}{\mathcal{N}}
\newcommand{\vb}{\vct{v}}
\newcommand{\abb}{\mtx{\bar{a}}}
\newcommand{\li}{\left<}
\newcommand{\ri}{\right>}
\newcommand{\s}{\vct{s}}
\newcommand{\ab}{\vct{a}}
\newcommand{\bb}{\vct{b}}
\newcommand{\h}{\vct{h}}
\newcommand{\g}{{\vct{g}}}
\newcommand{\corr}[1]{{\rho}(#1)}
\newcommand{\Fc}{\mathcal{F}}
\newcommand{\xb}{\bar{\x}}
\newcommand{\xp}{\x'}
\newcommand{\yh}{\hat{y}}
\newcommand{\xt}{\tilde{\x}}
\newcommand{\zt}{\tilde{\z}}
\newcommand{\yt}{\tilde{\y}}
\newcommand{\st}{\star}
\newcommand{\cA}{\mathcal{A}}
\newcommand{\x}{\vct{x}}
\newcommand{\y}{\vct{y}}
\newcommand{\bgl}{{~\big |~}}
\definecolor{emmanuel}{RGB}{255,127,0}
\newcommand{\R}{\mathbb{R}}
\newcommand{\Pro}{\mathbb{P}}
\newcommand{\conv}{\overset{\Pro}{\rightarrow}}
\newcommand{\sgn}[1]{\textrm{sgn}(#1)}
\newcommand{\E}{\operatorname{\mathbb{E}}}
\newcommand{\e}{\mathrm{e}}
\newcommand{\vct}[1]{\bm{#1}}
\newcommand{\mtx}[1]{\bm{#1}}
\numberwithin{equation}{section} 
\def \endprf{\hfill {\vrule height6pt width6pt depth0pt}\medskip}
\newenvironment{proof}{\noindent {\bf Proof} }{\endprf\par}
\title{Statistical and Algorithmic Insights for\\Semi-supervised Learning with Self-training}
\author{%
  Samet Oymak\thanks{Email: \texttt{oymak@ece.ucr.edu}. Department of Electrical and Computer Engineering, University of California, Riverside.}\quad\quad\quad\quad\quad \quad\quad\quad\quad\quad Talha Cihad Gulcu\thanks{Email: \texttt{tcgulcu@gmail.com}.}
%  \\
%  
%  \\
%  
}
\begin{document}

\maketitle

\begin{abstract}
Self-training is a classical approach in semi-supervised learning which is successfully applied to a variety of machine learning problems. Self-training algorithm generates pseudo-labels for the unlabeled examples and progressively refines these pseudo-labels which hopefully coincides with the actual labels. This work provides theoretical insights into self-training algorithm with a focus on linear classifiers. We first investigate Gaussian mixture models and provide a sharp non-asymptotic finite-sample characterization of the self-training iterations. Our analysis reveals the provable benefits of rejecting samples with low confidence and demonstrates that self-training iterations gracefully improve the model accuracy even if they do get stuck in sub-optimal fixed points. We then demonstrate that regularization and class margin (i.e.~separation) is provably important for the success and lack of regularization may prevent self-training from identifying the core features in the data. Finally, we discuss statistical aspects of empirical risk minimization with self-training for general distributions. We show how a purely unsupervised notion of generalization based on self-training based clustering can be formalized based on cluster margin. We then establish a connection between self-training based semi-supervision and the more general problem of learning with heterogenous data and weak supervision.% and discuss connections to self-training as well as unsupervised learning.

%Self-training is a classical approach in semi-supervised learning which is successfully applied to a variety of machine learning problems. Self-training algorithm generates pseudo-labels for the unlabeled examples and progressively refines these pseudo-labels which hopefully coincides with the actual labels. This work provides theoretical insights into self-training algorithm with a focus on linear classifiers. We first investigate Gaussian mixture models and provide a sharp non-asymptotic characterization of the self-training iterations. Our analysis reveals the provable benefits of rejecting samples with low confidence and demonstrates that self-training iterations gracefully improve the model accuracy even if they do get stuck in sub-optimal fixed points. Focusing on general mixture models, we demonstrate that regularization and class margin is crucial for the success and lack of regularization may prevent self-training from identifying the core features in the data. Finally, for empirical risk minimization on general datasets, we provide generalization error bounds on learning with heterogenous datasets and discuss connections to self-training as well as unsupervised learning.

% ability of self-training based on Rademacher complexity and contrast the algorithmic . %Besides semisupervised learning, proposed bounds are applicable to general heterogenous losses besides
%We argue that self-training picks up the principal components of the data. Building on this, we show that proper regularization and class margin are critical factors for the success of self-training.
\end{abstract}

\vs\section{Introduction}\vs\vs

The recent widespread success of deep neural networks rely on the presence of large labeled datasets to a significant extent. Unfortunately, such good-quality datasets may not be readily available for variety of practical applications. Indeed, a grand challenge in {{expanding machine learning to new domains}} is the cost of obtaining good quality labels. This is especially true for privacy and safety sensitive tasks that are abundant in critical domains such as healthcare and defense. On the other hand, unlabeled data can be relatively cheap to obtain and may be more abundant. This necessitates semi/unsupervised learning algorithms that can go beyond supervised learning and efficiently utilize unlabeled data.

Semi-supervised learning (SSL) techniques aim to reduce the dependence on the labeled data by making use of unlabeled data. A large number of approaches for SSL involve an extra loss term accounting for unlabeled data which is expected to help the model better generalize to unseen data. Self-training, consistency training and entropy minimization are among some of the core methods (discussed in Section~\ref{prior_art} in more detail) used for the purpose of SSL. Despite its popularity and practical success, we still don't have a fundamental understanding of when and why self-training algorithms work. For instance, self-training algorithms gradually utilizes unlabeled data by first incorporating the most reliable pseudo-labels. Are there setups where rejecting unreliable examples provably help? Similarly, generating and overfitting to incorrect pseudo-labels is a natural concern in SSL. On the other hand, recent empirical and theory literature suggests that, for supervised learning, interpolating to training data performs surprisingly well even when the model perfectly interpolates and achieves zero training loss \cite{belkin2019reconciling,hastie2019surprises,zhang2016understanding}. How crucial is regularization when it comes to learning with unlabeled data? Finally, for which datasets, self-training finds useful models that generalize better and what structural assumptions on the data are key to success?%For instance, to what extent generalization with purely unsupervised learning can be formalized?% to what extent generalization in unsupervised learning can be form
%Finally, it is important to understand to what exte semi-supervised learning to related core problems. 
%in connection to semi-supervised learning%
%self-training and Should we use more regularization or less regularization when learning with unlabeled data? %  algorithmic choices matter is through a theore

%\subsection{}
\vspace{5pt}
\noindent{\bf{Contributions.}} This paper takes a step towards addressing the aforementioned questions by studying algorithmic fundamentals of SSL. Specifically, we make the following contributions.

%\begin{itemize}
$\bullet$ {\bf{Self-training for Gaussian Mixture Models:}} One way to understand the algorithmic performance is by focusing on fundamental dataset models such as Gaussian mixtures and conducting a careful analysis capturing exact algorithmic performance. We study the problem of learning a linear classifier with self-training under a Gaussian mixture model (GMM). We precisely calculate the distributional properties of self-training iterations. Specifically we capture the evolution of the correlation between the optimal classifier and the self-training output in a non-asymptotic fashion. This reveals (non)-asymptotic formulae exactly characterizing the performance of self-training with linear models. We present \remove{scenarios and} associated numerical experiments demonstrating the classification performances under various scenarios which also reveals the provable benefits of rejecting weak examples.%and discuss the results in detail.

$\bullet$ {\bf{Algorithmic Insights: The Role of Distribution and Regularization:}} Next, we explore the importance of distributional properties by considering a more general family of mixture models where the means of mixture components are continuously distributed. This reveals that as long as there is a margin (i.e.~separation) between the means, unlabeled data improves the performance, however without margin, un-regularized algorithm provably gets stuck under least-squares loss. We then show how ridge regularization and early stopping can mitigate this issue by encouraging self-training to pick up the principal eigendirections in the data in a similar fashion to power iteration. We also discuss similar benefits of regularization for logistic regression.

$\bullet$ {\bf{Statistical Insights: Empirical Risk Minimization with Self-Training:}} Focusing on general data distributions, we consider ERM with self-training. When the problem is purely unsupervised, we discuss how an unsupervised notion of generalization can be formalized based on the margin induced by the clusters found by self-training. Secondly, we discuss the loss landscapes of the supervised and unsupervised components of self-training.  Inspired from the seminal results of \cite{balcan2010discriminative}, we connect self-training based semi-supervised learning to the more general problem of learning with heterogenous datasets and formalize how unlabeled and labeled data can be viewed as weak-supervision and strong supervision respectively.
%we show t provide generalization bounds for heterogeneous loss functions that involve two distributions which can then be specialized to pseudo-labeling approach. These bounds show that weak distribution can help greatly narrow down the search space for the optimal model. %When 

% We identify some conditions (such as positive margin for $X$) under which self-training process will make the classifier vector converge to the true value. We also illustrate the benefits of ridge regularization and early stopping by deriving the form that self-learning output takes for Gen-MM model.
%Hence we observe that the margin condition for useful classifier solutions can be waived by ridge regularization or early stopping.

\remove{
For least-squares loss, we demonstrate that a notion of margin is critical for guiding pseudo-labeling towards useful solutions. We also show distributional bias can be encouraged by ridge regularization (or early stopping) to overcome the margin requirement.
}
\remove{
\item {\bf{Learning with distribution mismatch:}} We provide generalization bounds for heterogeneous loss functions that involve two distributions which can then be specialized to pseudo-labeling approach. These bounds show that weak distribution can help greatly narrow down the search space for the optimal model. %When specialize arguments based on Rademacher complexity immediately extend to pseudo-labeling loss to establish useful generalization guarantees. This indicates that % Such guarantees can be merged 
}
%\end{itemize}

\subsection{Prior Art}\vs
\label{prior_art}
The benefits of using unlabeled data for learning models is subject of a rich literature since 70s which consider a variety of settings such as generative models \cite{castelli1995exponential,nigam2000text},
semi-supervised support vector machines \cite{vapnik1998statistical,joachims1999transductive}, graph-based models 
\cite{blum2001learning,belkin2006manifold,zhu2003semi}, or co-training \cite{blum1998combining} and multiview models\cite{sindhwani2005co}. The relative value of labeled and unlabeled samples in a detection-estimation theoretical framework is examined in \cite{castelli1996relative}.
A line of work is related to how the presence of unlabeled data be useful to limit Radamacher complexity\cite{bartlett2002rademacher}. For example,
the compatibility of a target function with respect to a data distribution is considered by \cite{balcan2010discriminative}, where the authors illustrate how enough unlabeled data can be useful to reduce the size of the search space.
It is demonstrated by several papers \cite{oneto2011impact,oneto2015local,oneto2016global} that the additional unlabeled data can be used to improve the tightness of the Radamacher complexity (RC) based bounds. 
A sharper generalization error bound for multi-class learning with the help of additional unlabeled data is presented by \cite{li2019multi}, along with an efficient multi-class classification algorithm using local Radamacher complexity and unlabeled samples.
Apart from that, semi supervised learning (SSL) is a versatile approach for training models without using a large amount of data. 
SSL algorithms can achieve performance improvement with low cost, and there are a large number of SSL methods \cite{miyato2018virtual,sajjadi2016regularization,laine2016temporal,tarvainen2017mean,berthelot2019mixmatch,xie2019unsupervised,berthelot2020remixmatch,lee2013pseudo,sajjadi2016mutual} available in the literature.% Various impossibility results for SSL and the examination of assumptions under which there is a gain using SSL is covered in the review article \cite{mey2019improvability}.

A large portion of SSL methods relies on generating an artificial label for unlabeled data and
training the model to predict those artificial labels when
the unlabeled data is used as the input. Pseudo-labeling \cite{lee2013pseudo} is one of such methods where the class prediction of the model is used for training purposes. Consistency regularization is also an important component of many SSL algorithms. Consistency regularization \cite{tarvainen2017mean,sajjadi2016regularization,laine2016temporal} is based on the approach that the model is supposed to 
generate similar outputs when perturbed version of the same data is applied as the input. Adversarial transformation is used by \cite{miyato2018virtual} in the loss function of consistency training,
and cross-entropy loss instead of squared loss function 
appears in the works \cite{miyato2018virtual,xie2019unsupervised}.
There are also hybrid algorithms combining diverse mechanisms. For example, Fix-Match \cite{sohn2020fixmatch} combines pseudo-labeling and consistency training to generate artificial labels. Mix-Match \cite{berthelot2019mixmatch}, ReMixMatch \cite{berthelot2020remixmatch}, unsupervised data augmentation \cite{xie2019unsupervised} are among other composite approaches.
Self training in the setting of domain adaptation is covered by the papers \cite{long2013transfer,inoue2018cross}. Class balance \cite{zou2018unsupervised} and confidence regularization \cite{zou2019confidence} for self-training are among other lines of works. Gradual domain adaptation in regularized models is analyzed by \cite{kumar2020understanding}.  
The papers \cite{carmon2019unlabeled,zhai2019adversarially,najafi2019robustness,stanforth2019labels} show theoretically and empirically how semi-supervised learning procedure can achieve high robust accuracy and improve adversarial robustness.

\remove{
In this work, we consider different binary mixture models and identify some conditions(such as margin) under which pseudo-learning can provide useful results. We present statistical guarantees showing how the self-training process make the classifier weights converge to the true values.  We corroborate our theoretical results with experimental findings. Our problem setup and the theorems we present are novel, and cannot be viewed as the extensions of the previous works on SSL methods. We also focus on heterogeneous loss functions involving two different distribution, and compute the form that Radamacher complexity takes for such a setting.
}

\vs\section{Problem setup}\vs
Let us first fix the notation. Given an event $E$, let $1(E)$ be the indicator function of $E$ which is $1$ if $E$ happens and $0$ otherwise. We use $X\bgl E$ to denote the conditional random variable induced by a random variable $X$ given an event $E$. We will refer the vectors with unit Euclidean norm as unit norm. Given two vectors $\ab,\bb$, their correlation is denoted by $\rho(\ab,\bb)=\frac{\li\ab,\bb\ri}{\tn{\ab}\tn{\bb}}$. Related to correlation, we define {\em{co-tangent of the angle between two vectors}} to be
\[
\tang{\ab,\bb}=\frac{\rho(\ab,\bb)}{\sqrt{1-\rho(\ab,\bb)^2}},
\]
which will be useful for cleaner notation. As $\tang{\ab,\bb}\rightarrow\infty$, the two vectors become perfectly correlated i.e.~$\rho(\ab,\bb)\rightarrow 1$. Let $Q(\cdot)$ be the tail of a standard normal variable and $Q_X$ be the tail of the distribution of a random variable $X$. $\conv$ denotes convergence in probability. $a\wedge b$ and $a\vee b$ returns minimum and maximum of two scalars. Finally, $(a)_+$ returns $a\vee 0$.

%We use $X\bgl E$ to denote a random variable with the conditional distribution of $X$ given an event $E$.

Let $\Sc=(y_i,\x_i)_{i=1}^n\in\{-1,1\}\times \R^p$ be independent and identically distributed (i.i.d) labeled sampled distributed as $\Dc=\Dc_{y| \x}\times \Dc_{\x}$ and let $\Sct=(\x_i)_{i=n+1}^{n+\nt}$ be i.i.d.~unlabeled samples distributed with the marginal distribution $\Dc_{\x}$. Let $f:\R^p\rightarrow\R$ be a prediction function (e.g.~a neural network) and let $\yh_f(\x)$ be the hard-label ($-1,1$) assigned to $f(\x)$ defined as
\[
\yh_f(\x)=\begin{cases}1\quad~~\text{if}\quad f(\x)\geq 0\\-1\quad \text{else}\end{cases}.
\]
The standard self-training approach is sufficiently general to operate on a generic algorithm. The algorithm can self-train by using its own labels $\yh_f(\x)$ which are also known as pseudo-labels. Self-training is often gradual, it first utilizes examples where predictions are confident and only later moves to examples which are less certain. Thus, it is a common strategy to reject weak pseudo-labels and use the more confident ones. Given a loss function $\ell$, function class $\Fc$, and acceptance threshold $\Gamma\geq 0$, self-training with pseudo-labels typically solves an empirical risk minimization problem of the form
\begin{align}
\hat f=\arg\min_{f\in \Fc} \underbrace{\frac{1}{n}\sum_{i=1}^n\ell(y_i,f(\x_i))}_{\Lch(f)}+\la\underbrace{\frac{1}{\nt}\sum_{i=n+1}^{n+\nt}1(|f(\x_i)|\geq\Gamma)\ell(\yh_f(\x_i),f(\x_i))}_{\Lcth(f)}.\label{PL reg}
\end{align}
%One can consider variations of this employ alternating minimization by first minimizing over $\Lch$ and then $\Lcth$ and vice versa. 
where $\Lch$ and $\Lcth$ are the supervised and unsupervised empirical risks respectively. Let us also introduce our iterative learning setup. Suppose we have an algorithm $\cA$ that takes a labeled dataset and builds a prediction model $f$. An obvious example for $\Ac$ is \eqref{PL reg}. Denote the initial model by $f_0$ and let $\Gamma\geq 0$ be the acceptance threshold. Given a stopping time $T$, the self-training algorithm we consider operates in two steps.
%We consider binary classification where $f(\x)\leq 0$ predicts class 0 and $f(\x)>0$ predicts class $1$. 

$\bullet$ {\bf{Step 1: Create Pseudo-labels:}} From $\Uc$ and current iterate $f_\tau$, determine a subset $\Uc_\tau=(\xt_i,\yt_i)$ where $\xt_i\in\Uc$ are the acceptable inputs that satisfy $|f_\tau(\xt_i)|\geq \Gamma$ and $\yt_i$ are the pseudo-labels $\yt_i=\yh_{f_\tau}(\xt_i)$.

$\bullet$ {\bf{Step 2: Refine the model:}} Obtain the new classifier via $f_{\tau+1}=\cA(\Sc, \Uc_\tau)$. If $\tau< T$, go to Step 1.

We remark that $\cA$ can treat the datasets $\Sc$ and $\Uc_\tau$ differently in a similar fashion to \eqref{PL reg}, e.g.~by weighting labeled $\Sc$ higher than pseudo-labeled $\Uc$. In our analysis of iterative algorithms in Sections \ref{sec gauss} and \ref{sec algo}, we consider a slightly different version where we only use the unlabeled data for refinement in Step 2. While our approach does extend to jointly learning over $(\Sc,\Uc)$, as we shall see, learning only over $\Uc$ results in cleaner and more insightful bounds.

\vs\section{Understanding Self-Training for Mixtures of Two Gaussians}\label{sec gauss}
\begin{figure}[t!]
	\begin{subfigure}{2.2in}
%		\begin{tikzpicture}[scale=0.9,every node/.style={scale=0.9}]
%		\node at (0,0) {\includegraphics[scale=0.38]{figs/gradient_norm}};
%		\node at (-3,0) [rotate=90,scale=.8]{$\tn{m\odot\nabla\Lc(\bt_\tau)}^2$};
%		\node at (0,-2.1) [scale=.8]{Epoch};
%		\end{tikzpicture}\caption{}\label{fig1}
		\includegraphics[scale=0.26]{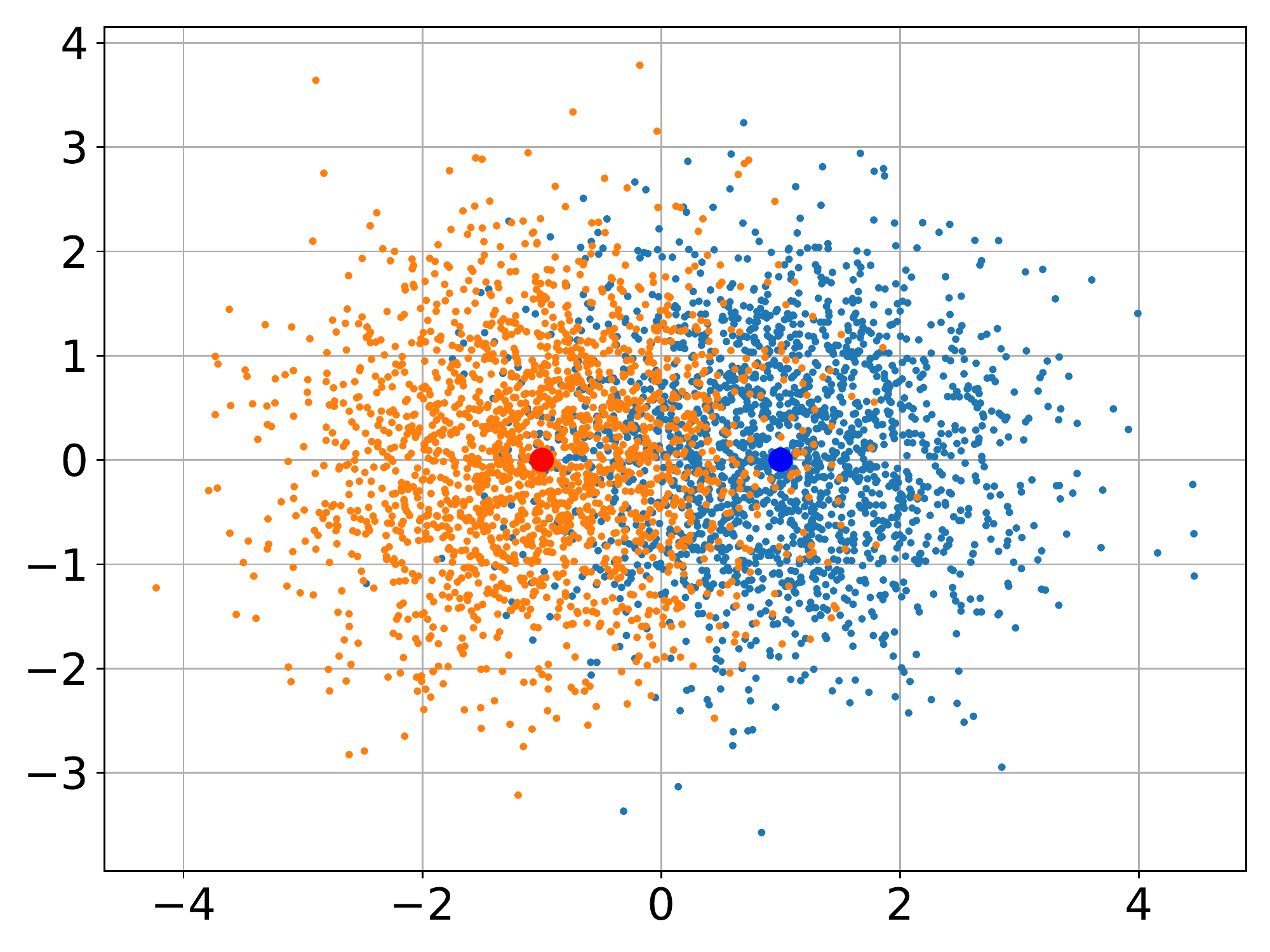}\caption{$\Gamma=0$}\label{fig1a}
	\end{subfigure}
	\begin{subfigure}{2.2in}
		\includegraphics[scale=0.26]{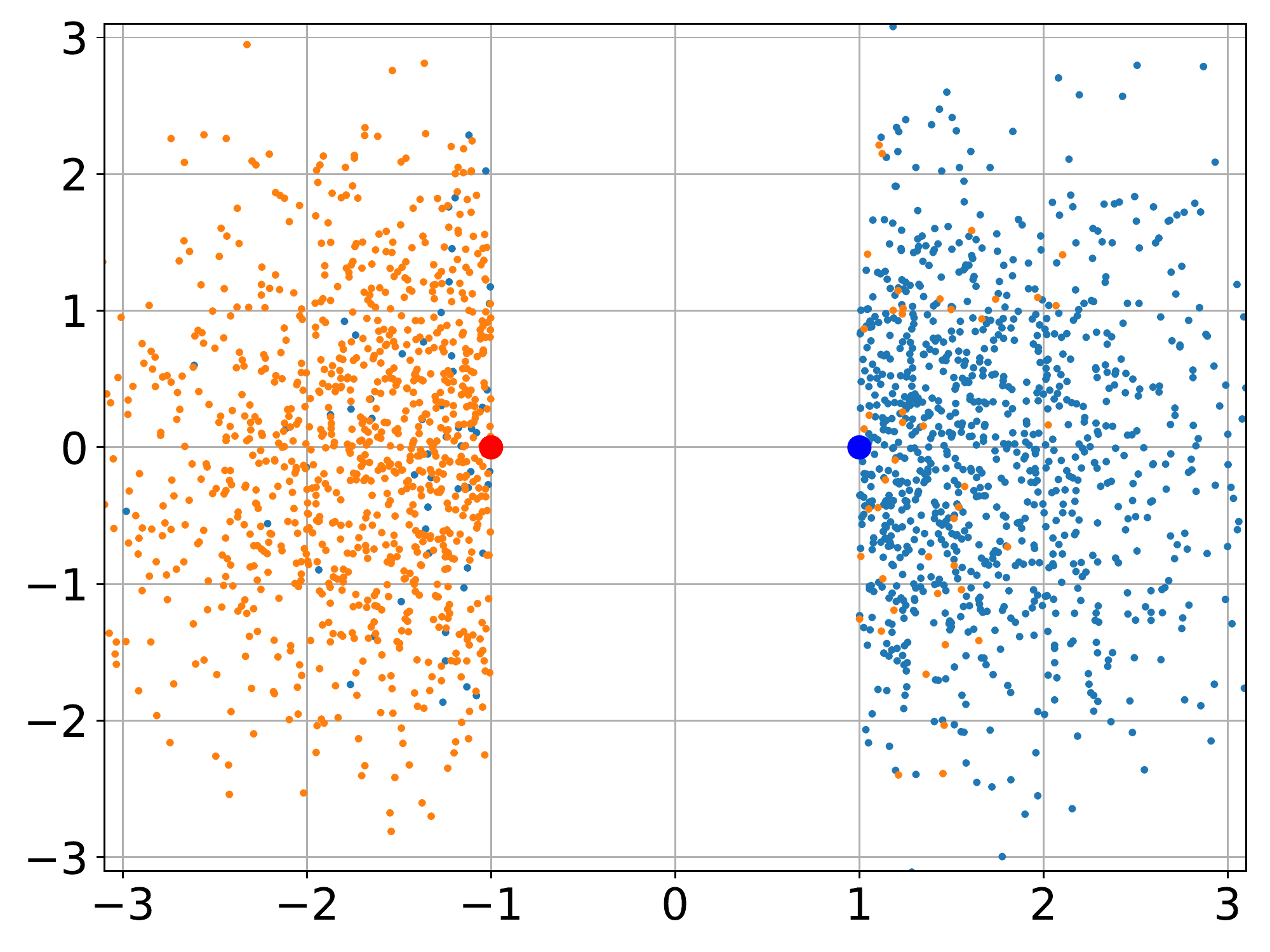}\caption{$\Gamma=1$, $\rho(\bti,\bmu)=1$}\label{fig2a}
	\end{subfigure}
	\begin{subfigure}{2.2in}
		\includegraphics[scale=0.26]{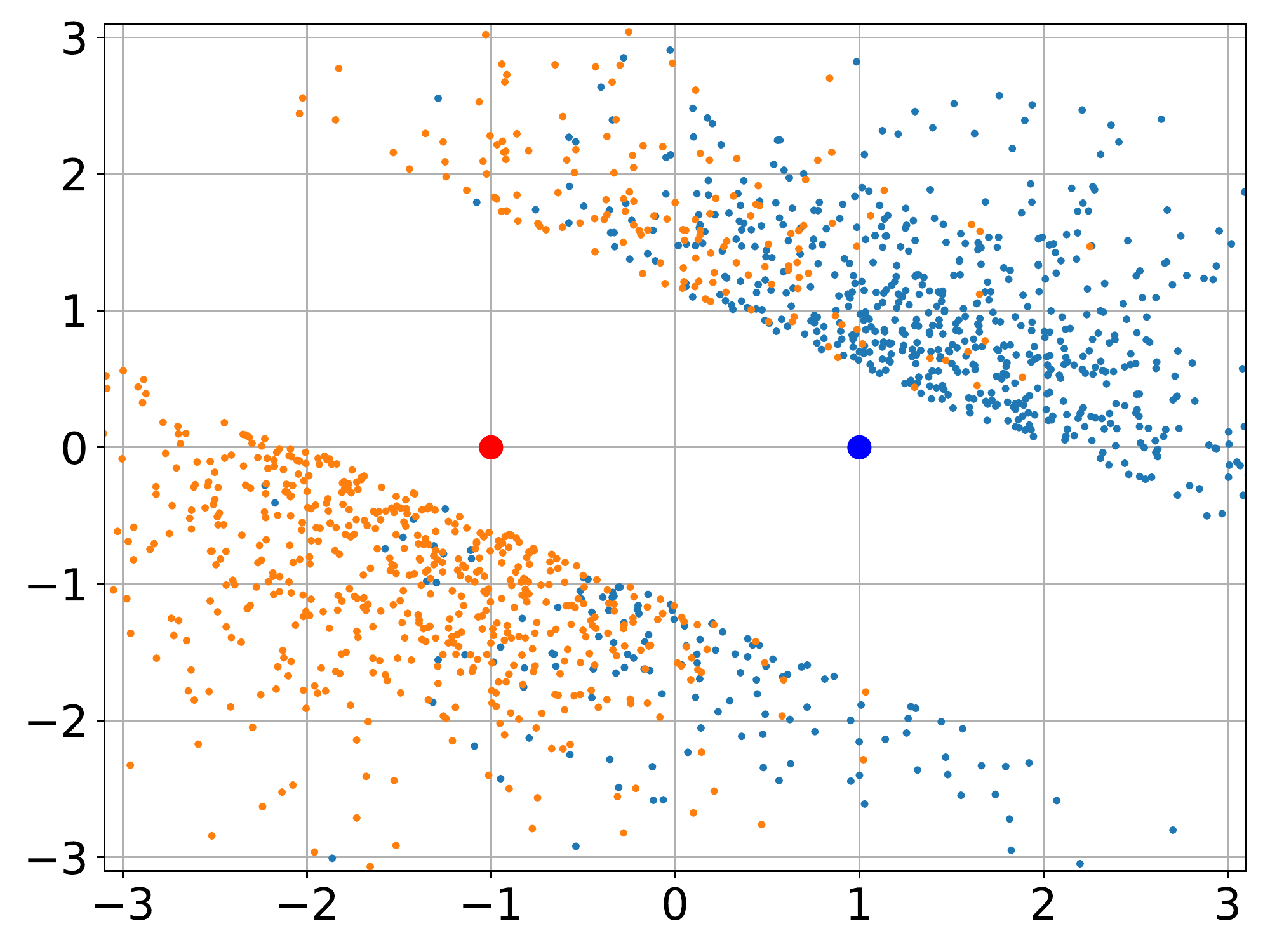}\caption{$\Gamma=1$, $\rho(\bti,\bmu)=0.5$}\label{fig3a}
	\end{subfigure}
	\caption{Visualization of a Binary GMM with noise variance $\sigma^2=1$. Sample size is 4000. The large dots at -1 and 1 are the mixture centers $\pm\bmu=[\pm 1,0]$. Acceptance threshold $\Gamma$ removes the examples with low-correlation to the initial model $\bti$.}
	% but gradient of bottom entries vanishes after a few epoch
	\label{fig:accept} \vspace{-0.2cm}
	% ($m\odot\nabla\Lc(\bt_\tau)$)
\end{figure}
%\subsection{Non-asymptotic and Sharp Guarantees for Binary Gaussian Mixture Model}
We start with a definition of the distribution we will study.
\begin{definition}[Binary Gaussian Mixture Model (GMM)] \label{GMM def}The distribution $(\x,y)\sim \Dc$ is given as follows. Fix a unit vector $\bmu\in\R^p$ and scalar $\sigma\geq 0$. Let $y$ be a Rademacher random variable ($\Pro(y=1)=1-\Pro(y=-1)=1/2$) and $\x\sim \Nn(y\bmu,\sigma^2 \Iden_p)$.
\end{definition}
Note that the component mean $\bmu$ is also the optimal linear classifier. If we have labeled data $\Sc=(\x_i,y_i)_{i=1}^n$, $\bmu$ can be estimated via the % classifier (which is a consistent estimator of the mean $\bmu$) is given by averaging the inputs via
\begin{align}
\text{{{Averaging estimator}}}~\bti=\frac{1}{n}\sum_{i=1}^n y_i\x_i.\label{supervised model}
\end{align}
This estimator also coincide with the ridge regularized least-squares (e.g.~$\arg\min_{\bt}\sum_{i=1}^n (y_i-\x_i^T\bt)^2+\la\tn{\bt}^2$) when the regularization parameter $\la\rightarrow \infty$. Perhaps surprisingly, this estimator is known to be the Bayes optimal classifier for GMM if we have access to the labeled data alone \cite{mignacco2020role, lelarge2019asymptotic}. This motivates us to investigate the analytical properties of the averaging estimator by adapting it to self-training as explained earlier. Given this initial supervised model $\bti$ and the unlabeled dataset $\Uc=(\x_i)_{i=1}^u$ sampled from GMM, we consider the pseudo-label estimator
\begin{equation}
\bth=\text{self-train}(\bti,\Uc)\where\text{self-train}(\bti,\Uc)= \frac{\sum_{i=1}^{u} 1(|\frac{\bti^T\x_i}{\tn{\bti}}|\geq \Gamma)\sgn{\bti^T\x_i}\x_i}{\sum_{i=1}^{u} 1(|\frac{\bti^T\x_i}{\tn{\bti}}|\geq \Gamma)}.
\label{pseudo_label_est}    
\end{equation}
where $\Gamma\geq 0$ is the acceptance threshold eliminating low-confidence predictions. Acceptance threshold is commonly used in practical semi-supervised learning approaches \cite{xie2019unsupervised,mcclosky2006effective,yarowsky1995unsupervised}. The impact of acceptance threshold is illustrated in Figure \ref{fig:accept} where points are projected on two dimensions. Here the mixture center $\bmu$ is the $[1~0~0~\dots~0]$ direction. When $\Gamma=0$, we accept all points which corresponds to a Binary GMM distribution. When $\Gamma$ is non-zero, the conditional distribution of the accepted examples depend on the quality of the initial model $\bti$. Figure \ref{fig2a} and \ref{fig3a} chooses $\Gamma=1$ for different $\bti$. In Figure \ref{fig2a}, $\bti$ is aligned with $\bmu$ (correlation is $1$) which results in a clean separation between the two classes (the red and blue dots) while rejecting 50\% of the samples that lie between the mixture centers $\pm1$. In Figure \ref{fig3a}, correlation coefficient between $\bti$ and $\bmu$ is $1/2$ and $\bti$ has a higher classification error. As a result, the two classes are not as cleanly separated despite using rejection.%thresholding with $\Gamma$.

% conditional distribution of accepted examples have a still look better than Figure \ref{fig1a} is aligned with $\bmu$ (correlation is $1$) which results in a clean separation between the two classes while rejecting 50\% of the samples that lie in the center. 

The following theorem provides a sharp non-asymptotic bound for the pseudo-label estimator \eqref{pseudo_label_est}. Below, we set $\gamma_p=\E_{\g\sim \Nn(0,\Iden_p)}[\tn{\g}]^2$. It is well-known that $\gamma_p$ satisfies $p\geq \gamma_p\geq p-1$. 
\begin{theorem} [Non-asymptotic Bound for GMM] \label{sharp_bound_GMM}Let $\bmu\in\R^p$ be a unit norm vector from Def.~\ref{GMM def} and suppose $\bti\in\R^p$ has correlation $\rho(\bti,\bmu)=\alpha>0$. Set $\beta=\sqrt{1-\alpha^2}$. Draw $u$ i.i.d.~unlabeled samples $(\x_i)_{i=1}^u$ from GMM. 
Let $\bth$ be defined as $\bth=\text{self-train}(\bti,(\x_i)_{i=1}^u).$
Fix resolution $1/2>\eps>0$ and absolute constant $c>0$. Define the normalized thresholds $\bar{\Gamma}_-=\frac{\alpha+\Gamma}{\sigma}$ and $\bar{\Gamma}_+=\frac{\Gamma-\alpha}{\sigma}$ and the quantities
\begin{align}
&\Lambda=\frac{1}{\sqrt{2\pi}\rho}(\e^{-\bar{\Gamma}_+^2/2}+\e^{-\bar{\Gamma}_-^2/2})\quad\text{and}\quad \rho=Q(\bar{\Gamma}_+)+Q(\bar{\Gamma}_-)\quad\text{and}\quad \nu=Q(\bar{\Gamma}_-)/\rho.
\end{align}
With probability $1-10\e^{-c\eps^2 ((p-3)\wedge \rho u)}$, we have that
\[
\frac{1+\sigma\alpha \Lambda-2\nu+(1+\sigma)\eps}{\sigma\sqrt{(\beta \Lambda-\eps)_+^2+(1-\eps)_+^2\gamma_{p-2}/u\rho}} \geq \tang{\bth,\bmu}\geq \frac{1+\sigma\alpha \Lambda-2\nu-(1+\sigma)\eps}{\sigma\sqrt{(\beta \Lambda+\eps)^2+(1+\eps)^2\gamma_{p-2}/u\rho}}.
\]
Thus, fixing $\bar{u}=u/p$ and letting $p\rightarrow \infty$, we have that
\[
\lim_{p\rightarrow\infty}\tang{\bth,\bmu}\overset{\Pro}{\rightarrow}\frac{1+\sigma\alpha \Lambda-2\nu}{\sigma\sqrt{(1-\alpha^2) \Lambda^2+1/\bar{u}\rho}}.
\]
\end{theorem}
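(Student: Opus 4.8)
The self-train output $\bth$ equals, up to the positive scalar given by the number of accepted samples, the vector $N:=\sum_{i=1}^u 1(|\hat\bti^T\x_i|\ge\Gamma)\,\sgn{\hat\bti^T\x_i}\,\x_i$, where $\hat\bti:=\bti/\tn{\bti}$; since rescaling preserves angles, it suffices to analyze $N$. The plan is to fix an orthonormal frame adapted to the two relevant directions: $\bmu$, the unit vector $\vb$ spanning the component of $\hat\bti$ orthogonal to $\bmu$ (so $\hat\bti=\alpha\bmu+\beta\vb$), and the orthogonal complement $W$ of $\mathrm{span}(\bmu,\vb)$, which has dimension $p-2$. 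Writing $P_W$ for the projection onto $W$ and using $\tn{N}^2=(\bmu^TN)^2+(\vb^TN)^2+\tn{P_W N}^2$, the cotangent reduces to $\tang{\bth,\bmu}=\bmu^TN/\sqrt{(\vb^TN)^2+\tn{P_W N}^2}$, so I only need to control the three scalars $\bmu^TN$, $\vb^TN$, and $\tn{P_W N}$.

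For a single sample write $\x=y\bmu+\sigma\z$ with $\z\sim\Nn(0,\Iden_p)$; then $\hat\bti^T\x=y\alpha+\sigma g$ with $g=\hat\bti^T\z\sim\Nn(0,1)$, so the acceptance event $A=\{|y\alpha+\sigma g|\ge\Gamma\}$ and the sign $s=\sgn{y\alpha+\sigma g}$ are functions of $(y,g)$ alone. A short computation over the Rademacher $y$ and half-line Gaussian integrals gives $\Pro(A)=Q(\bar\Gamma_+)+Q(\bar\Gamma_-)=\rho$, $\E[1(A)\,s\,y]=Q(\bar\Gamma_+)-Q(\bar\Gamma_-)=\rho(1-2\nu)$, and $\E[1(A)\,s\,g]=\tfrac{1}{\sqrt{2\pi}}(\e^{-\bar\Gamma_+^2/2}+\e^{-\bar\Gamma_-^2/2})=\rho\Lambda$. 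Decomposing $\z$ in the $(\bmu,\vb)$ frame as $g_\mu=\alpha g-\beta h$, $g_v=\beta g+\alpha h$ with $h\perp g$ independent, and using $\E[1(A)\,s\,h]=0$, these yield $\E[1(A)\,s\,\bmu^T\x]=\rho(1-2\nu)+\sigma\alpha\rho\Lambda$ and $\E[1(A)\,s\,\vb^T\x]=\sigma\beta\rho\Lambda$, while $\E[1(A)\,s\,P_W\x]=0$ because $P_W\z$ is independent of $(y,g)$ and centered. Dividing the empirical sums by the count $\approx u\rho$ therefore identifies the limiting numerator $1+\sigma\alpha\Lambda-2\nu$ and the in-plane denominator term $\sigma\beta\Lambda=\sigma\sqrt{1-\alpha^2}\,\Lambda$.

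The $W$-block supplies the remaining variance term. Conditioned on the accepted index set $I$ (with $|I|\approx u\rho$), the vectors $\{s_iP_W\z_i\}_{i\in I}$ are i.i.d.\ $\Nn(0,\Iden_{p-2})$ (the signs are absorbed by the symmetry of the Gaussian), so $P_W N=\sigma\sqrt{|I|}\,\g$ with $\g\sim\Nn(0,\Iden_{p-2})$ and $\tn{P_W N}^2/(u\rho)^2\approx\sigma^2\gamma_{p-2}/(u\rho)$, using $\gamma_{p-2}=(\E\tn{\g})^2$. Concentration then produces the stated failure probability: $1$-Lipschitz Gaussian concentration of $\tn{\g}$ around $\sqrt{\gamma_{p-2}}$ gives the $\e^{-c\eps^2(p-3)}$ factor, while Bernoulli concentration of $|I|$ and of the in-plane averages around their means gives the $\e^{-c\eps^2\rho u}$ factor, and the worse of the two yields $\e^{-c\eps^2((p-3)\wedge\rho u)}$. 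Assembling the three concentrated scalars into the cotangent identity produces exactly the two-sided non-asymptotic bound.

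The limit then follows by sandwiching. With $\bar u=u/p$ fixed, $p\to\infty$ forces $u\to\infty$, so for fixed $\eps$ the good event has probability $\to1$; on it, $\gamma_{p-2}/u=\gamma_{p-2}/(\bar u p)\to 1/\bar u$ because $p-3\le\gamma_{p-2}\le p-2$, and both the upper and lower bounds converge (continuously in $\eps$) to $(1+\sigma\alpha\Lambda-2\nu)/(\sigma\sqrt{(1-\alpha^2)\Lambda^2+1/\bar u\rho})$ as $\eps\to0$. To turn this into convergence in probability I would let $\eps=\eps_p\to0$ slowly, e.g.\ $\eps_p=p^{-1/4}$, so that $\eps_p^2((p-3)\wedge\rho u)\to\infty$ keeps the probability at $1-o(1)$ while $\eps_p\to0$ collapses the sandwich. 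I expect the main obstacle to be the finite-sample step rather than this limit: one must carefully justify that, after conditioning on $I$, the $W$-noise is exactly an isotropic Gaussian independent of the in-plane quantities, and couple the fluctuations of the count, the in-plane averages, and $\tn{P_W N}$ simultaneously so that their errors combine into the single clean $\eps$ appearing in the numerator and denominator.
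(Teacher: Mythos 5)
Your proposal is correct and follows essentially the same route as the paper's proof: the same orthogonal decomposition into the $\bmu$, $\vb$, and $(p-2)$-dimensional components, the same expectation computations identifying $\rho$, $\nu$, and $\Lambda$ (your coordinate rotation with $\E[1(A)\,s\,h]=0$ is the paper's Lemma~\ref{split lem} in disguise), and the same concentration ingredients (Chernoff for the accepted count and mislabel count, subgaussian concentration for the in-plane averages, Lipschitz Gaussian concentration for the orthogonal norm). The only differences are cosmetic — you keep $y$ Rademacher rather than conditioning on one cluster by symmetry, and you work with the unnormalized sum — so no further comparison is needed.
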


Theorem \ref{sharp_bound_GMM} shows that pseudo-label optimization as defined by \eqref{pseudo_label_est} can be useful to obtain a higher correlation and thus can improve the quality of the initial direction $\bti$.
Let $f$ denote the transformation that is applied to $\rho(\bti,\bmu)$ as a result of pseudo-label optimization. Theorem \ref{sharp_bound_GMM} provides matching upper and lower bounds for the evolution of the co-tangent. Specifically, using the relation between correlation and co-tangent, as $p\rightarrow\infty$, we have that 
\begin{align}\label{cot formula}
\tang{\bth,\bmu}=F_{\bar{u}}(\tang{\bti,\bmu})\where F_{\bar{u}}(x)=\frac{1+\sigma\frac{\Lambda x}{\sqrt{1+x^2}} -2\nu}{\sigma\sqrt{\frac{\Lambda^2}{1+x^2}+\frac{1}{\bar{u}\rho}}},%\frac{f(\rho(\bti,\bmu))}{\sqrt{1-[f(\rho(\bti,\bmu))]^2}},
\end{align}
We remark that \cite{castelli1996relative, lelarge2019asymptotic} studies mixture models and provides information theoretical bounds. Our bound complements these works by characterizing the performance of self-training which is a widely-used practical algorithm. We also characterize the benefit of using the acceptance threshold $\Gamma$ which is again a critical heuristic for the success of self-training. We suspect that one can analyze self-training performance for more general distributions and other base classifiers, instead of averaging estimator, by using tools from high-dimensional statistics and random matrix theory such as Gaussian min-max Theorem \cite{OymLAS,thrampoulidis2015regularized,stojnic2013framework} and approximate message passing \cite{donoho2009message,bayati2011dynamics}.
%hence one can derive upper and lower bounds for $\rho(\bth,\bmu)$ using Theorem \ref{sharp_bound_GMM}. 
\vs
\subsection{Iterative self-training}\vs
Theorem \ref{sharp_bound_GMM} also allows us to analyze pseudo-labeling in an iterative fashion to show further improvement with more unlabeled data. Specifically, suppose we have $n$ labeled samples $\Sc=(\x_i)_{i=1}^n$ and $\tau\times u$ unlabeled samples $\Uc=(\x_i)_{i=n+1}^{n+\tau u}$. We first create the supervised model via \eqref{supervised model}. Then, we split $\Uc$ into $\tau$ disjoint sub-datasets $(\Uc_i)_{i=1}^\tau$. Starting from $\bt_0=\bti$ of \eqref{supervised model}, we iteratively apply self-training via pseudo-labeling \eqref{pseudo_label_est} to obtain
\begin{align}
\bt_{i}=\text{self-train}(\bt_{i-1},\Uc_i)\quad\text{for}\quad 1\leq i\leq \tau.\label{self train iter}
\end{align}
The final model is then equal to $\bth=\bt_\tau$. Note that the asymptotic co-tangent of self-training with $\tau$ iterations will be given by $F^\tau(x)$ where $x$ is the co-tangent of the initial supervised model. The following theorem establishes the asymptotic performance of this procedure.
\begin{theorem}[Iterative self-training bound] \label{lem iter bound}Set $\bar{n}=n/p$ and $\bar{u}=u/p$. Let $\Sc=(\x_i,y_i)_{i=1}^n$ and $\Uc=(\x_i)_{i=n+1}^{n+\tau u}$ be independent datasets with i.i.d.~samples generated according to Binary GMM. Obtain the model $\bth$ via applying $T$ iterations of the iterative self-training \eqref{self train iter} to the supervised model \eqref{supervised model}. Recall the co-tangent evolution formula of \eqref{cot formula}. We have that
\begin{align}
\lim_{p\rightarrow \infty}\tang{\bth,\bmu}\conv F_{\bar{u}}^\tau(\sqrt{\bar{n}}/\sigma).\label{our formula}
\end{align}
\end{theorem}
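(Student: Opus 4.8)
The plan is to bootstrap the single-step characterization of Theorem~\ref{sharp_bound_GMM} into a statement about the $\tau$-fold composition, treating the initial supervised model separately. The argument splits into three pieces: computing the asymptotic co-tangent of the base estimator $\bt_0=\bti$, running an induction over the iterations that invokes Theorem~\ref{sharp_bound_GMM} one round at a time, and controlling the (finitely many) failure events. The engine throughout is that Theorem~\ref{sharp_bound_GMM} already handles an \emph{arbitrary} fixed initial direction through its correlation $\alpha$, so the only genuinely new work is passing from a deterministic $\alpha$ to the random, concentrating correlations produced along the trajectory.

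First I would pin down the base case. Since $y_i\x_i\sim\Nn(\bmu,\sigma^2\Iden_p)$ are i.i.d., the averaging estimator of \eqref{supervised model} is $\bt_0=\bti\sim\Nn(\bmu,\tfrac{\sigma^2}{n}\Iden_p)$, so we may write $\bti=\bmu+\tfrac{\sigma}{\sqrt n}\g$ with $\g\sim\Nn(0,\Iden_p)$. A routine Gaussian concentration computation gives $\li\bti,\bmu\ri\to 1$ and $\tn{\bti}^2-\li\bti,\bmu\ri^2\to \sigma^2 p/n=\sigma^2/\bar n$ in probability as $p\to\infty$ with $\bar n=n/p$ fixed. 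Using $\tang{\ab,\bb}^2=\li\ab,\bb\ri^2/(\tn{\ab}^2-\li\ab,\bb\ri^2)$ for unit $\bb$, this yields $\tang{\bt_0,\bmu}\conv\sqrt{\bar n}/\sigma$, which is exactly the input to the composition in \eqref{our formula}.

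Next I would run the induction on $i=1,\dots,\tau$. The key structural fact is that the sub-datasets $(\Uc_i)_{i=1}^\tau$ are disjoint, so $\Uc_i$ is independent of $\bt_{i-1}$, which is a function only of $\Sc,\Uc_1,\dots,\Uc_{i-1}$. Conditioning on $\bt_{i-1}$, it becomes a fixed vector with correlation $\alpha_{i-1}=\rho(\bt_{i-1},\bmu)$ while $\Uc_i$ remains a fresh batch of $u$ i.i.d.~GMM samples, so Theorem~\ref{sharp_bound_GMM} applies verbatim and gives $\tang{\bt_i,\bmu}\approx F_{\bar u}(\tang{\bt_{i-1},\bmu})$ with the stated high probability. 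Here one checks that $F_{\bar u}$ maps positive co-tangents to positive co-tangents (the numerator's $1-2\nu$ is positive since $\nu<1/2$ whenever $\alpha>0$), so the hypothesis $\alpha>0$ of Theorem~\ref{sharp_bound_GMM} is preserved along the trajectory. Assuming inductively that $\tang{\bt_{i-1},\bmu}\conv F_{\bar u}^{\,i-1}(\sqrt{\bar n}/\sigma)$, continuity of the map $F_{\bar u}$ in \eqref{cot formula} in the variable $x$ (through $\alpha=x/\sqrt{1+x^2}$, using $\rho=Q(\bar\Gamma_+)+Q(\bar\Gamma_-)>0$) upgrades this to $\tang{\bt_i,\bmu}\conv F_{\bar u}(F_{\bar u}^{\,i-1}(\sqrt{\bar n}/\sigma))=F_{\bar u}^{\,i}(\sqrt{\bar n}/\sigma)$.

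The hard part will be the transfer from the fixed-$\alpha$ statement of Theorem~\ref{sharp_bound_GMM} to the iteration, where $\alpha_{i-1}$ is itself random and only concentrating. I would resolve this by working on the event that $\tang{\bt_{i-1},\bmu}$ lies in a small neighborhood of its limit, which has probability tending to $1$ by the induction hypothesis; on this event $\alpha_{i-1}$ is confined to a compact subinterval of $(0,1)$, on which $\rho$ and hence the exponent $c\eps^2((p-3)\wedge\rho u)$ are bounded below, so the constants in Theorem~\ref{sharp_bound_GMM} can be taken uniform and its failure probability is uniformly negligible. Letting $p\to\infty$ and then $\eps\to 0$ collapses the matching upper and lower bounds onto $F_{\bar u}(\tang{\bt_{i-1},\bmu})$, and continuity closes the step. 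Because $\tau$ is a fixed constant independent of $p$, a union bound over the $\tau$ rounds keeps the total failure probability tending to $0$, and the in-probability convergence \eqref{our formula} follows.
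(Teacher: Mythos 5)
Your proposal is correct and follows essentially the same route as the paper: the paper's proof invokes its Lemma on the supervised estimator to get $\tang{\bti,\bmu}\conv\sqrt{\bar n}/\sigma$, then uses the disjointness of the $\Uc_i$ to apply Theorem~\ref{sharp_bound_GMM} once per iteration. Your write-up simply makes explicit the continuity/uniformity and union-bound details that the paper's two-line argument leaves implicit.
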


\begin{figure}[t!]

	\begin{subfigure}{2.2in}
		\includegraphics[scale=0.26]{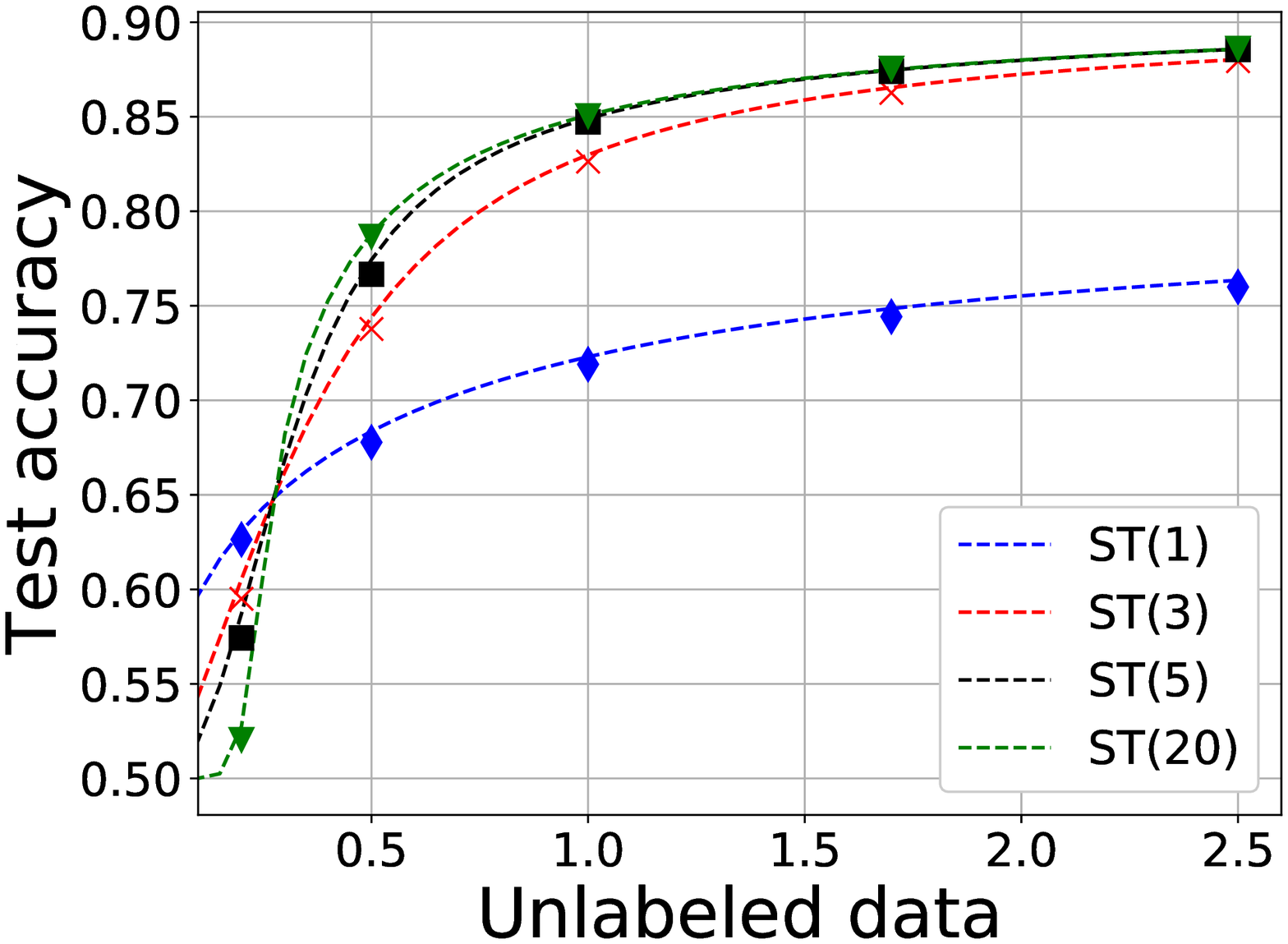}\caption{The impact of more self-training iterations on the model accuracy.}\label{fig1g}
	\end{subfigure}
	\begin{subfigure}{2.2in}
		\includegraphics[scale=0.26]{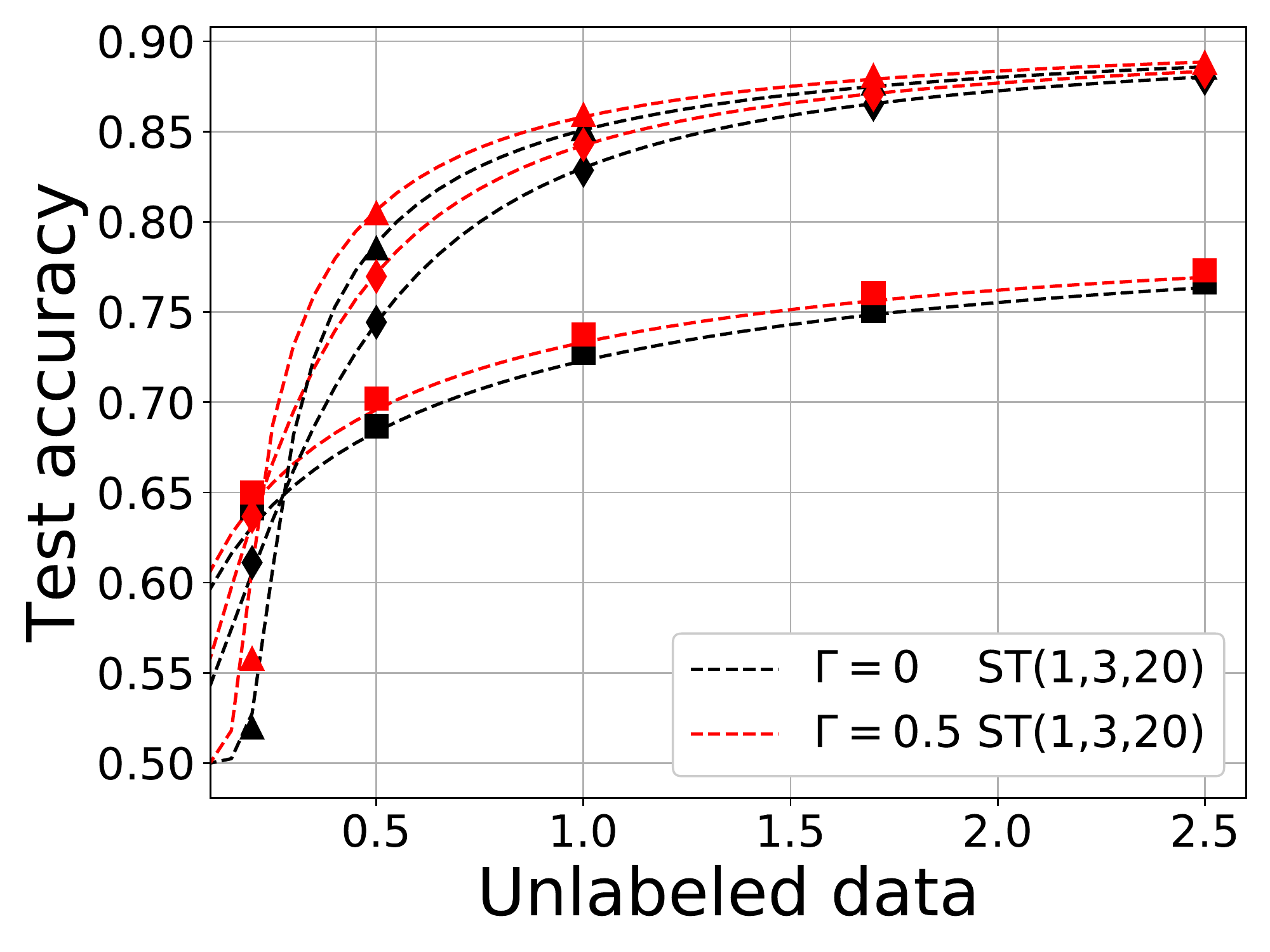}\caption{Comparing acceptance thresholds of $\Gamma=0$ vs $\Gamma=0.5$.}\label{fig2g}
	\end{subfigure}
	\begin{subfigure}{2.2in}\vspace{-3pt}
		\includegraphics[scale=0.28]{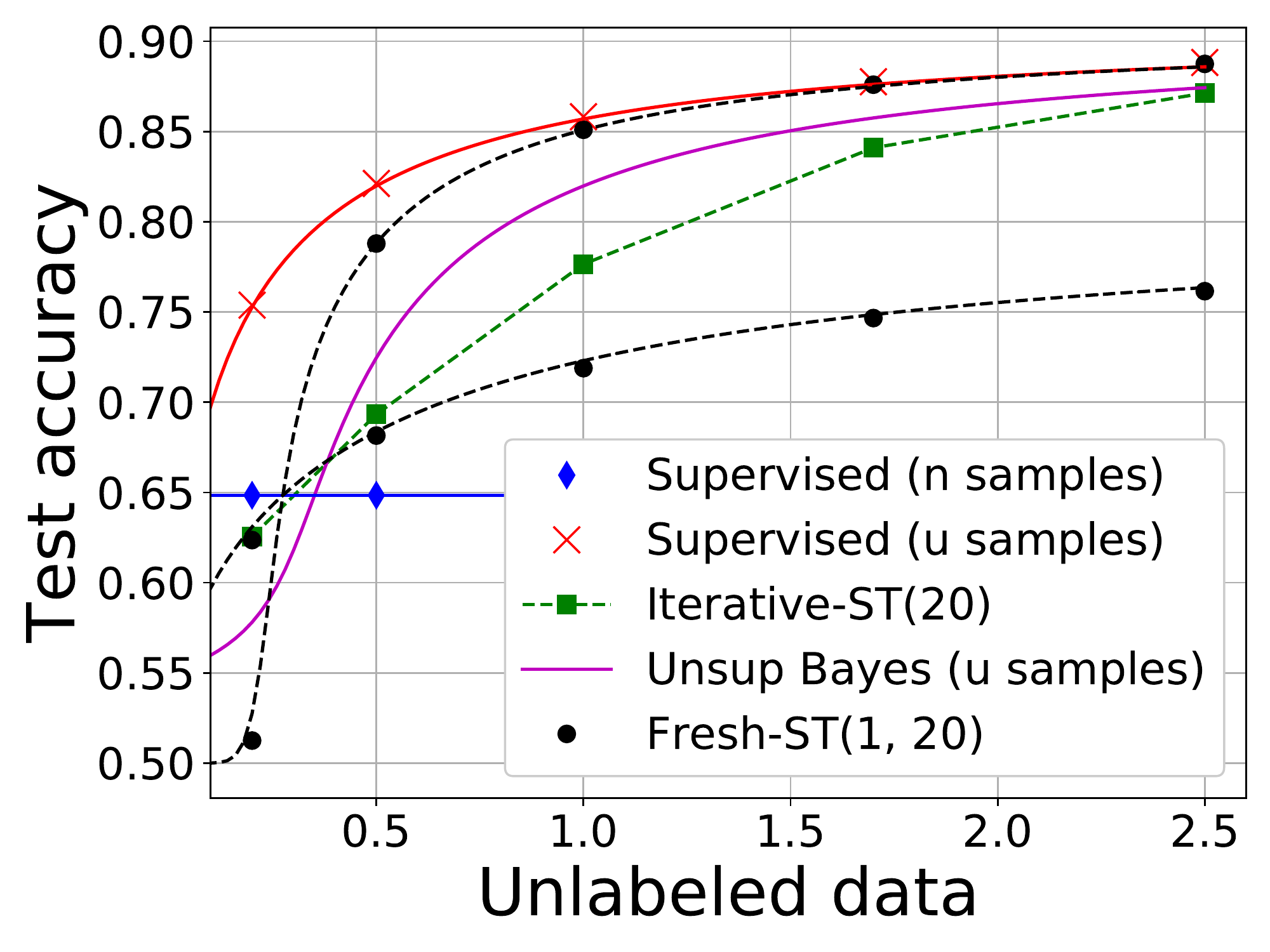}\vspace{-5pt}\caption{Comparison of different baselines at $\Gamma=0$.}\label{fig3g}
	\end{subfigure}
	\caption{{\small{$p=400$, $\bar{n}=n/p=0.05$, $\Gamma=0.5$, $\sigma=0.75$. $x$-axis is the unlabeled data amount $\bar{u}=u/p$. In Figures (a) and (b), ST($\tau$) refers to self-training repeated $\tau$ times with new batch of unlabeled data (same as Fresh-ST). Larger $\tau$ corresponds to the line with better accuracy. All lines are theoretical predictions except the Iterative-ST.}}}
	% but gradient of bottom entries vanishes after a few epoch
	\label{fig:performance} \vspace{-0.2cm}
	% ($m\odot\nabla\Lc(\bt_\tau)$)
\end{figure}

%\begin{figure}[t!]
%
%	\begin{subfigure}{3.5in}\centering
%		\includegraphics[scale=0.35]{}\caption{Comparison of different baselines, $\Gamma=0$}\label{fig1g}
%	\end{subfigure}
%	\begin{subfigure}{3.5in}\centering
%		\includegraphics[scale=0.35]{}\caption{$\Gamma=0$ vs $\Gamma=1/2$.}\label{fig2g}
%	\end{subfigure}
%	\caption{}
%	\label{fig:gradient} \vspace{-0.2cm}
%\end{figure}
%\begin{figure}[t!]
%\centering
%		\includegraphics[scale=0.5]{}
%		\caption{Comparison of different baselines, $\Gamma=0$}
%	% but gradient of bottom entries vanishes after a few epoch
%	\label{fig:gradient} \vspace{-0.2cm}
%	% ($m\odot\nabla\Lc(\bt_\tau)$)
%\end{figure}

Let us call this model Fresh-ST (ST for self-training) as each iteration requires fresh batch of unlabeled data. Figure~\ref{fig1g} and Figure~\ref{fig2g} illustrate the the test performance associated with this iterative approach. The parameters in these figures are as follows. We set labeled data amount to be $\bar{n}=0.05$ and unlabeled data amount $\bar{u}$ is varied along the $x$ axis. The noise level is $\sigma=0.75$ and the input dimension is $p=400$. The dashed lines are our formula \eqref{our formula}. We see from Figure~\ref{fig1g} that the test performance improves as the amount of unlabeled data increases (here $\Gamma=0$). The self-training iterations also improve the test accuracy as long as the unlabeled data amount is above the fixed point of the $F_{\bar{u}}$ function. In other words, we need $\bar{u}$ larger than a threshold $u_*$ where $u_*$ preserves the co-tangent of the initial supervised model i.e.~$F_{u_*}(\tang{\bti,\bmu})=\tang{\bti,\bmu}$. Clearly this threshold $u_*$ depends on the initial supervised model (i.e.~the amount of labeled training data) as well as the noise level $\sigma$. Figure~\ref{fig2g} demonstrates that choosing a proper acceptance threshold $\Gamma$ can improve the test performance over always choosing $\Gamma=0$. We observe that benefit of optimizing $\Gamma$ is more noticeable when there are fewer unlabeled data. Also optimizing $\Gamma$ can shift the fixed point of the $F_{\bar{u}}$ function so that less unlabaled data is required for improvement.\vs

%\remove{{\color{red}{When $\Gamma=0$, this naturally upper bounds the performance of the Fresh-ST.}} The reason is that each self-training iteration only uses $u$ samples (albeit with pseudo-labels rather than the true labels)}
Figure~\ref{fig3g} provides multiple baselines to compare our self-training bounds ($\Gamma=0$) (blue, red, green curves). The blue curve is the performance of the initial model which only uses $n$ labels. The red curve is the performance of a supervised model that uses $u$ labeled samples. Note that, this curve is not necessarily an upper bound on the
performance of the Fresh-ST however provides a natural reference. The magenta curve is the accuracy of the unsupervised Bayes optimal classifier using $u$ input samples. Finally, the green line is the iterative self-training where we always use the same unlabeled dataset with $u$ samples. Specifically, we apply the iterations $\bt_{i+1}=\text{self-train}(\bt_i,\Uc)$ for $1\leq i\leq \tau=20$. Let us call this Iterative-ST. We see that, repetitively applying self-training on the same dataset improves the performance over applying it only once (i.e.~green line is above the lower dashed black line). On the other hand, we also see the positive effect of using fresh unlabeled data on the test performance from Figure~\ref{fig3g}. Comparing the Fresh-ST with the empirical performance of Iterative-ST in Figure~\ref{fig3g} shows that the test performance substantially benefits from resampling. For instance, only 3 iterations of resampling can be noticeably better than many iterations of Iterative-ST. Intuitively, this is due to the fact that repeated self-training on the same dataset can guide the optimization to a suboptimal fixed point of the self-training iteration. This is also known as the confirmation bias of pseudo-labeling \cite{arazo2019pseudo}. In this example, fresh samples help get out of bad fixed points.

\noindent {\bf{Logistic regression:}} We next compare our averaging-based self-training \eqref{pseudo_label_est} to logistic regression. Given unlabeled data $\Uc$ and a linear classifier $\bti$, we first obtain the dataset $\Uc'$ of acceptable inputs and associated pseudo-labels by thresholding $\x^T\bti/\tn{\bti}$. We then solve logistic regression over $\Uc'$ to obtain a new linear classifier. The test performances of logistic-regression self-training are plotted in Figure~\ref{fig:logistic}. The labeled data fraction is $\bar{n}=0.2$ and the unlabeled data amount varies along x-axis, as in the case of Figure~\ref{fig:performance}. 
We set $\Gamma=0$ in Figure~\ref{fig1l}, and $\Gamma=1/2$ in 
Figure~\ref{fig2l}. For both Figure~\ref{fig1l} and 
Figure~\ref{fig2l},
the black dashed line refers to Fresh-ST iterations, and
green dashed line corresponds to self-training iterations with the same unlabeled data. Similarly, blue dashed line plots the test performance of supervised learning with $n$ samples and red dashed line plots the performance of supervised learning with $u$ samples for both figures.
The dashed lines in Figure \ref{fig1l} are logistic regression based algorithms whereas solid lines display the performance of the corresponding averaging estimator. Observe that averaging bounds are uniformly better which is not surprising given that the averaging estimator is Bayes optimal for GMM. We observe from Figure~\ref{fig1l} and Figure~\ref{fig2l} that 
the amount of unlabeled data has a positive effect on the test performance, and carrying out self-training iterations with fresh unlabeled data improves the performance. Comparing Figure~\ref{fig1l} with
Figure~\ref{fig2l}, we see how the acceptance threshold $\Gamma$ plays a critical role on the outcome. In fact, we find out from Figure~\ref{fig2l} that Fresh-ST can outperform supervised learning with $u$ samples, and regular iterative self-training can outperform regular supervised algorithm if the acceptance threshold $\Gamma$ is high enough.
The effect of $\Gamma$ on the test performance is also demonstrated by Figure~\ref{fig3l}, where we observe how picking an appropriate acceptance threshold boosts the test performance. We also see from Figure~\ref{fig3l} how the test performance gets better when the number of iterations increases. 

\begin{figure}[t!]

	\begin{subfigure}{2.2in}
%		\begin{tikzpicture}[scale=0.9,every node/.style={scale=0.9}]
%		\node at (0,0) {\includegraphics[scale=0.38]{figs/gradient_norm}};
%		\node at (-3,0) [rotate=90,scale=.8]{$\tn{m\odot\nabla\Lc(\bt_\tau)}^2$};
%		\node at (0,-2.1) [scale=.8]{Epoch};
%		\end{tikzpicture}\caption{}\label{fig1}
		\includegraphics[scale=0.26]{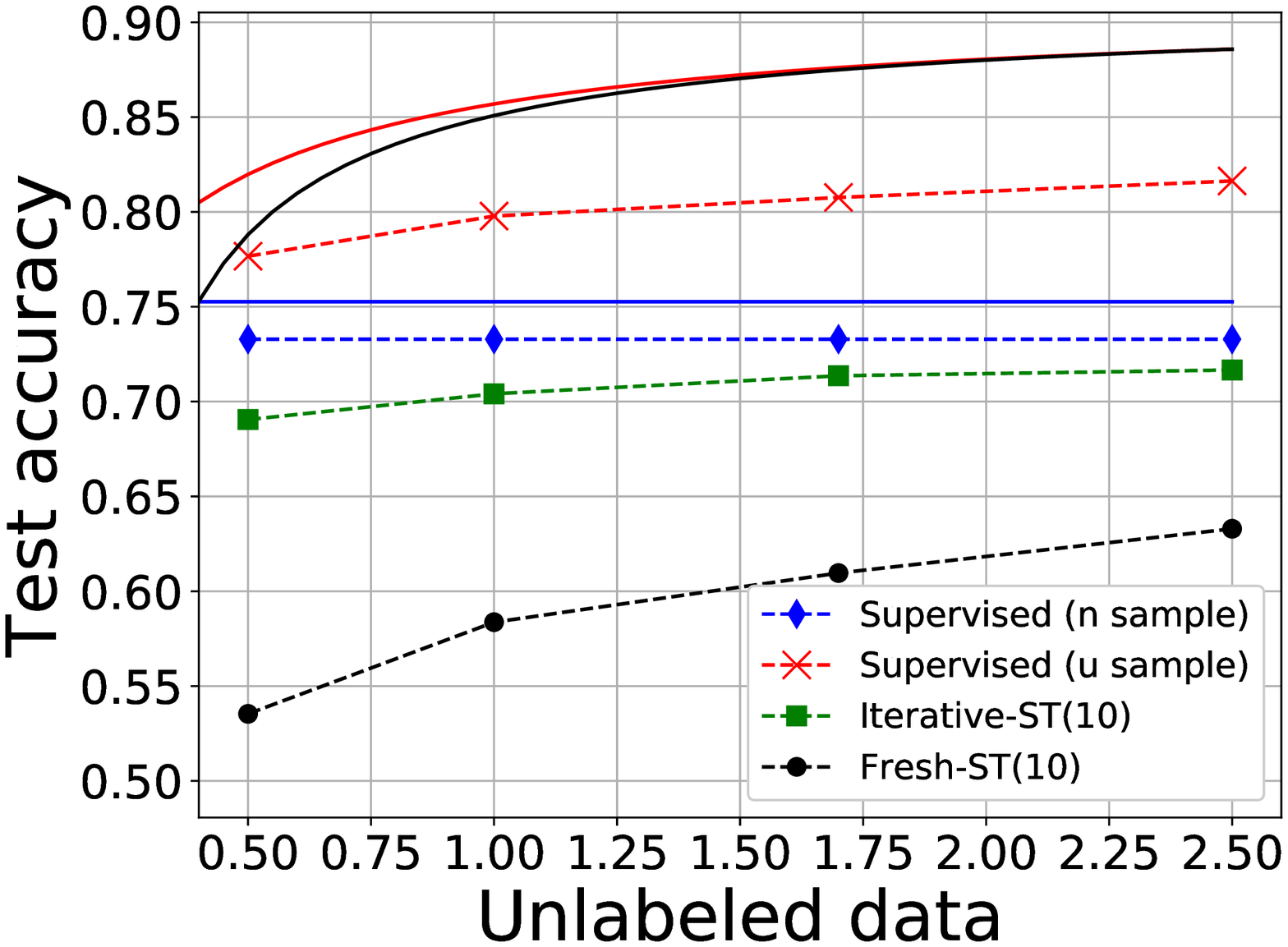}\caption{Comparing the performance of logistic regression and averaging estimator.}\label{fig1l}
	\end{subfigure}
	\begin{subfigure}{2.2in}
		\includegraphics[scale=0.26]{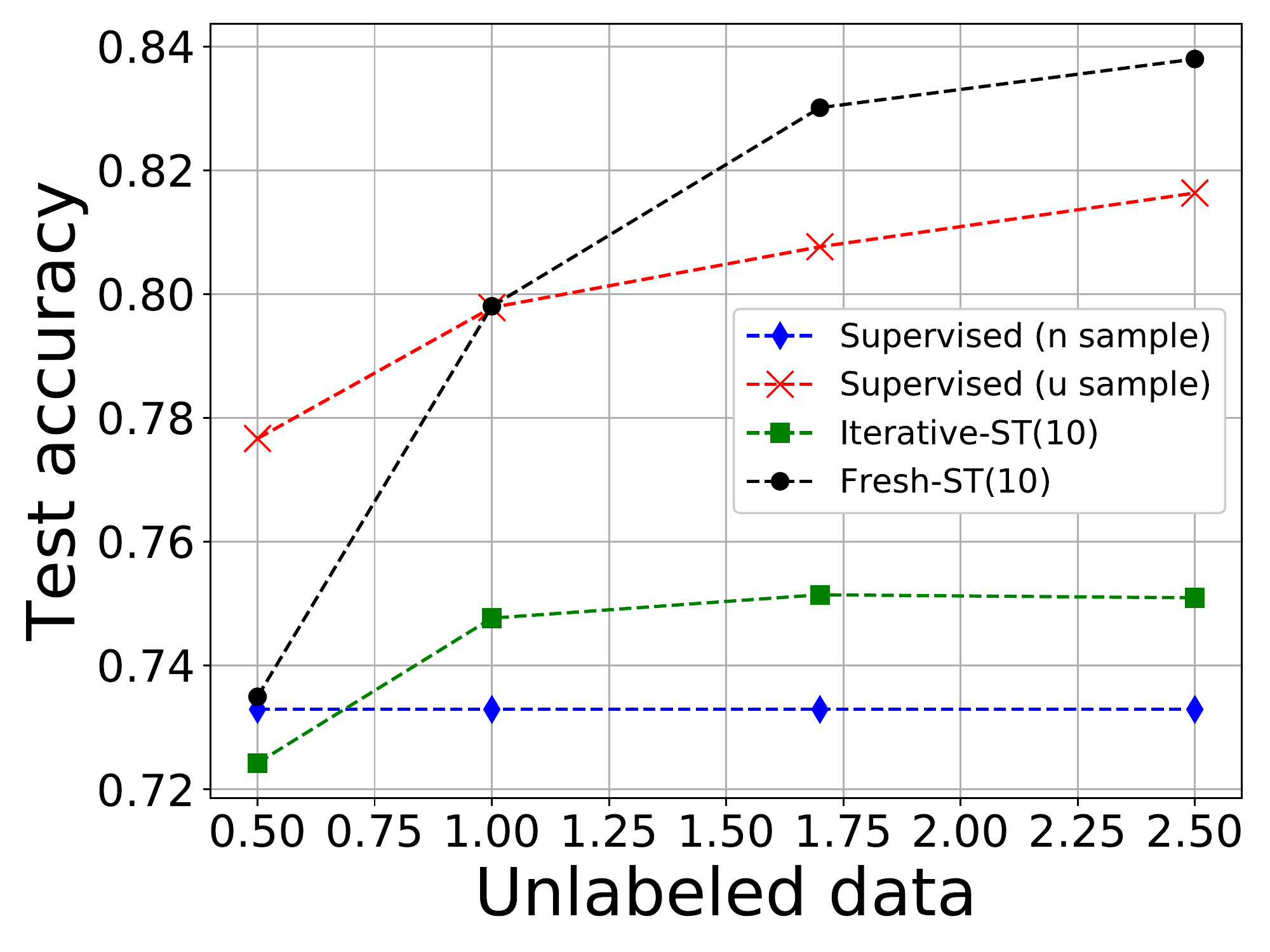}\caption{Performance of different approaches for logistic regression.}\label{fig2l}
	\end{subfigure}
	\begin{subfigure}{2.2in}
		\includegraphics[scale=0.26]{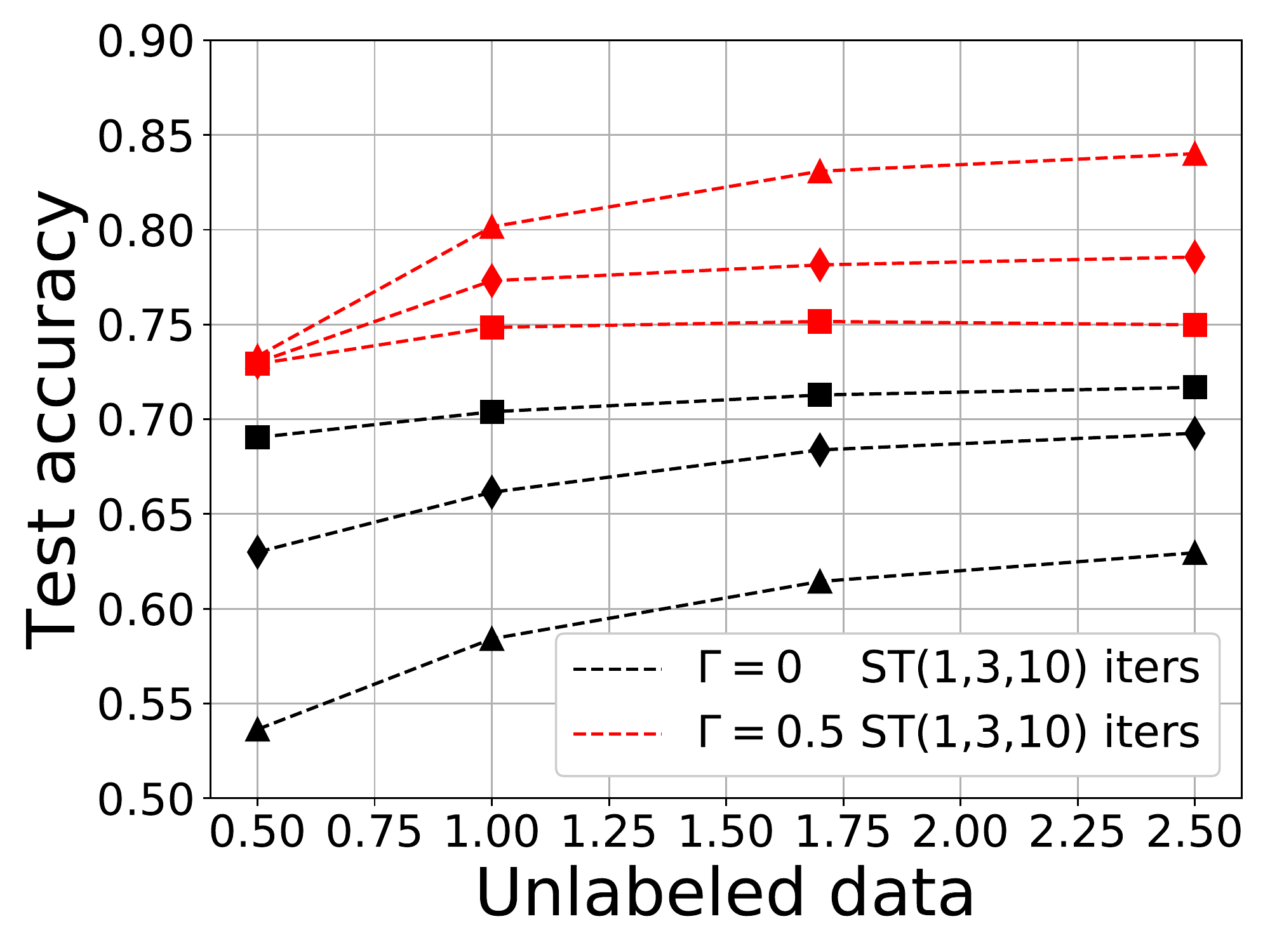}\caption{Logistic regression, comparison of $\Gamma=0$ vs $\Gamma=0.5$.}\label{fig3l}
	\end{subfigure}
	\caption{\small{Experiments on logistic regression: $\bar{n}=n/p=0.2$, $\sigma=0.75$, $p=400$. In (a), for the same color, solid lines are the performance of the averaging estimator and the dashed lines with markers correspond to the logistic regression. Fresh-ST (and ST in Fig.~(c)) uses fresh batch of unlabeled data at each self-training iteration. In Fig.~(c), (1,3,10) self-training iterations have markers $\Delta$, $\diamond$, $\square$ respectively.}}
	% but gradient of bottom entries vanishes after a few epoch
	\label{fig4} \vspace{-0.2cm}
	% ($m\odot\nabla\Lc(\bt_\tau)$)
	\label{fig:logistic}
\end{figure}
\vs
\vs\section{Algorithmic Insights: Importance of Regularization and Margin}\vs\label{sec algo}

We consider here a particular binary mixture model involving a scalar random variable $X$, and investigate the conditions and learning setups under which the use of unlabeled data improves the alignment of the classifier with the ground-truth mixture mean $\bmu$ (and hence the accuracy). 
\begin{definition}[Generalized Mixture Model (Gen-MM)] The distribution $\Dc$ is given as follows. Fix a unit vector $\bmu\in\R^p$ and scalar $\sigma\geq 0$. Let $X,y,\g$ be independent random variables where $X$ is a scalar random variable with distribution $\Dc_X$, $\g\sim\Nn(0,\Iden_p)$, and $\Pro(y=1)=1-\Pro(y=-1)=1/2$. The input $\x$ is generated as \vs
\[
\x=yX\bmu+\sigma\g.
\]
\end{definition}
In this section, we provide algorithmic insights for the Gen-MM distribution which will shed light on the necessity of margin and importance of regularization. Here, our notion of margin is the gap between the class conditional distributions $X$ and $-X$. If $X$ is a positive random variable strictly bounded away from zero, then, we say there is a margin between the two classes since the distributions $X$ and $-X$ are away from each other. We first focus on a simplified scenario where we assume that we are provided an initial model $\bti$\footnote{Such an initial model can be obtained by minimizing the supervised risk $\Lch$ of \eqref{PL reg} or via \eqref{supervised model} as in Section \ref{sec gauss}.} and we use $\bti$ to label $\Uc$ and refine our estimate using pseudo-labeling. Focusing on least-squares loss and linear classifiers, in the infinite sample setup, this corresponds to the following problem\vs
\begin{align}%\arg\min_{\bt}\Lc_U(\bti)=
\bth=\frac{1}{2}\arg\min_{\bt}\E[1(|\bti^T\x|\geq \Gamma\tn{\bti})(\sgn{\bti^T\x}-\bt^T\x)^2].\label{PL sup}%\tag{PL-OPT}
\end{align}
Before investigating this problem, it is worth understanding the supervised loss. Setting $\beta=\bt^T\bmu$, the supervised quadratic loss is given by
\begin{align*}
\Lc_S(\bt)&=\E_{\Dc}[(y-\bt^T\x)^2]=\E_{X,\g}[(X\bt^T\bmu +\sigma \bt^T\g-1)^2]\\
&=\E[(X\beta-1)^2]+\sigma^2\tn{\bt^2}\\
&=\sigma_X^2\beta^2-2\mu_X\beta+1+\sigma^2\tn{\bt^2}
\end{align*}
The loss is minimized by choosing $\bt^\st=\beta^\st\bmu$ where $\beta^\st=\mu_X/(\sigma_X^2+\sigma^2)$. Additionally, this loss satisfies gradient dominance with respect to the global minima $\bt^\st$ (as it will be discussed further later on), thus gradient descent on population loss will quickly find $\bt^\st$. The question we are asking in this section is what happens when label information $y$ is replaced by the pseudo-labels $\sgn{\bti^T\x}$. Our first theorem picks $X$ to be the folded normal distribution (in words, $X$ is the absolute value of a standard normal variable) and shows a negative result on pseudo-labeling.\footnote{Folded normal has a nice simplifying nature during the theoretical analysis since $yX$ becomes standard normal.}
%i.e.
%\[
%\Lc_S(\bt)-\Lc_S(\bt_\st)=\dots.
%\]
\vs\vs
\subsection{No Improvement with No Margin}\vs
\begin{theorem}\label{thm no} Pick $X$ to be the folded normal distribution (with density function $f_X(t)=\sqrt{2/\pi}\e^{-t^2/2}$) and any $\Gamma\geq 0$. Let $\bth$ be the solution of the population pseudo-labeling problem \eqref{PL sup}. For some scalar $c>0$ depending on $\sigma,\li\bmu,\bti\ri,\Gamma$, we have that $\bth=c\bti$.
\end{theorem}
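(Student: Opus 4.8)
The plan is to exploit the very special structure of the folded normal to collapse the two-class mixture into a \emph{single} centered Gaussian, and then to argue that the weighted least-squares solution is forced to lie along $\bti$ because all directional information the estimator can extract is confined to the one vector $\bSi\bti$, which the normal equations then map back onto $\bti$.

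First I would do the \emph{folded-normal collapse}. Since $X=|Z|$ with $Z\sim\Nn(0,1)$ and $y$ is an independent Rademacher, the product $yX$ has the same law as $Z$, i.e. $yX\sim\Nn(0,1)$. Hence $\x=(yX)\bmu+\sigma\g$ is a sum of two independent Gaussians and is itself centered Gaussian, $\x\sim\Nn(0,\bSi)$ with $\bSi=\bmu\bmu^T+\sigma^2\Iden_p$. In particular the label $y$ is washed out of the marginal of $\x$ — this is exactly the ``no margin'' degeneracy. Throughout I would write $\vb=\bti/\tn{\bti}$ for the unit vector along $\bti$ and $g=\vb^T\x$ for the scalar projection, so $g\sim\Nn(0,\tau^2)$ with $\tau^2=\vb^T\bSi\vb=\sigma^2+\alpha^2$ and $\alpha=\vb^T\bmu$. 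The point is that the pseudo-label weight $w(\x)=1(|\bti^T\x|\ge\Gamma\tn{\bti})=1(|g|\ge\Gamma)$ and the pseudo-label sign $s(\x)=\sgn{\bti^T\x}=\sgn{g}$ depend on $\x$ \emph{only} through $g$.

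Next I would reduce to the normal equations. The population objective $L(\bt)=\E[w(\x)(s(\x)-\bt^T\x)^2]$ is a convex quadratic with gradient $-2\bb+2\M\bt$, where $\M=\E[w(\x)\x\x^T]$ and $\bb=\E[w(\x)s(\x)\x]$; for $\sigma>0$ the Gaussian $\x$ has full-rank covariance and $\{w=1\}$ has positive probability, so $\M\succ0$ and the unique minimizer is $\bt^\star=\M^{-1}\bb$, giving $\bth=\tfrac12\bt^\star$. It therefore suffices to show $\M^{-1}\bb\parallel\bti$. The crux is a rank-one conditional-mean identity: Gaussian conditioning gives $\E[\x\mid g]=(\bSi\vb/\tau^2)\,g$, and since $w,s$ are functions of $g$ alone, iterating expectations yields
\[
\bb=\E\big[w(g)s(g)\,\E[\x\mid g]\big]=\frac{\E[w(g)s(g)g]}{\tau^2}\,\bSi\vb,\qquad \M\vb=\E\big[w(g)\,g\,\E[\x\mid g]\big]=\frac{\E[w(g)g^2]}{\tau^2}\,\bSi\vb.
\]
Thus both $\bb$ and $\M\vb$ are parallel to the single vector $\bSi\vb$, so $\bb=\tfrac{\E[w(g)s(g)g]}{\E[w(g)g^2]}\,\M\vb$.

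Finally I would apply $\M^{-1}$ to conclude $\bt^\star=\M^{-1}\bb=\tfrac{\E[w(g)s(g)g]}{\E[w(g)g^2]}\,\vb$, a scalar multiple of $\vb$ and hence of $\bti$. The scalar is strictly positive because $\E[w(g)s(g)g]=\E[1(|g|\ge\Gamma)|g|]>0$ and $\E[w(g)g^2]=\E[1(|g|\ge\Gamma)g^2]>0$, so $\bth=\tfrac12\bt^\star=c\,\bti$ with $c>0$ an explicit ratio determined by the law $g\sim\Nn(0,\sigma^2+\alpha^2)$, i.e. by $\sigma$, $\langle\bmu,\bti\rangle$ and $\Gamma$. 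I expect the main obstacle to be purely conceptual rather than computational: recognizing that the folded normal makes $\x$ a single centered Gaussian (so Gaussian conditioning applies), and spotting that one should compute $\M\vb$ and $\bb$ rather than the full matrix $\M$ — the identity $\M\vb\parallel\bSi\vb\parallel\bb$ is what rigidly pins the solution to $\bti$. The only technical care needed is justifying $\M\succ0$ (hence taking $\sigma>0$, since at $\sigma=0$ the data degenerates onto the line through $\bmu$ and the conclusion fails).
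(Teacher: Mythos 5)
Your proof is correct. You and the paper share the same first move — the folded-normal collapse $yX\sim\Nn(0,1)$, so that $\x\sim\Nn(0,\bSi)$ with $\bSi=\sigma^2\Iden+\bmu\bmu^T$ and the labels disappear from the marginal — and both proofs then reduce to showing that the normal-equations solution is parallel to $\bti$. Where you diverge is in how that parallelism is established. The paper conditions on the acceptance event, whitens via $\x=\sqrt{\bSi}\xb$, decomposes $\xb$ along $\abb=\sqrt{\bSi}\bti/\tn{\sqrt{\bSi}\bti}$, computes the conditional second moment explicitly as the rank-one perturbation $\sqrt{\bSi}(\Iden+(\E[h'^2]-1)\abb\abb^T)\sqrt{\bSi}$, and inverts it using the fact that $\abb$ is an eigenvector of that perturbation. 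You instead never compute the weighted second-moment matrix $\M$ at all: the Gaussian conditional-mean identity $\E[\x\mid g]=(\bSi\vb/\tau^2)g$ plus the tower property shows that both $\bb$ and $\M\vb$ are scalar multiples of the single vector $\bSi\vb$, so $\M^{-1}\bb\parallel\vb$ follows without any matrix square roots or explicit inversion. Your route is somewhat more economical and makes transparent exactly why the conclusion holds (all directional information in the pseudo-labels lives in the one-dimensional statistic $g=\vb^T\x$); the paper's computation has the minor side benefit of producing the explicit constant $c_\Gamma/\tn{\sqrt{\bSi}\bti}$ in whitened coordinates, though your ratio $\E[1(|g|\geq\Gamma)|g|]/\E[1(|g|\geq\Gamma)g^2]$ with $g\sim\Nn(0,\sigma^2+\alpha^2)$ is equally explicit. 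Your caveat about $\sigma>0$ for invertibility is apt and is implicitly assumed in the paper's lemma as well (it requires $\bSi$ full rank).
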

The surprising conclusion from this theorem is that pseudo-labeling optimization \eqref{PL sup} do not lead to an improved model. $\bth$ remains parallel to the original model $\bti$ thus it will make the exact same label prediction as $\bti$. Observe that folded normal distribution has no margin since the distributions of $X$ and $-X$ both start from zero.

\vs\subsection{Improvement with Margin}\vs
In contrast to the result above, the following theorem shows that if there is a margin in the distribution of $X$ (i.e.~$X$ is strictly bounded away from zero), self-training does lead to an improved solution.
\begin{theorem}\label{thm margin} Fix $1\geq \gamma\geq \sigma >0$. Let $X$ satisfy the second moment condition $\E[X^2]=1$ and the margin condition $M\gamma\geq X\geq \gamma$. Let $\bth$ be the solution of the population self-training problem \eqref{PL sup}. For $\Gamma=0$, setting $\corr{\bti,\bmu}= \alpha$, we have that%be the folded normal distribution (with density function $f_X(t)=\sqrt{2/\pi}\e^{-t^2/2}$). 
\[
\tang{\bth,\bmu}\geq \frac{\sigma\e^{C}}{4}(\gamma(1- 6\e^{-C}M)).%=\kappa(\la)\tang{\bti,\bmu}
%\frac{\hat\alpha}{\sqrt{1-\hat\alpha^2}}\geq \frac{\sigma\e^{C}}{4}(\gamma(1- 6\e^{-C}M)).%\corr{\bti,\bt^\st}+
\]
where $C=\frac{\alpha^2\gamma^2}{2\sigma^2}$. Specifically, if $\alpha \gamma>\sqrt{2\log (12 M)}\sigma$, we find $\tang{\bth,\bmu}\geq 0.1{\sigma\gamma\e^{\frac{\alpha^2\gamma^2}{\sigma^2}}}$.
%where the scalar $c$ depends on $\sigma,\bmu,\bti$.
\end{theorem}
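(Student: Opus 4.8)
The plan is to solve the population least-squares problem \eqref{PL sup} in closed form and then read off the co-tangent. First observe that with $\Gamma=0$ the acceptance indicator $1(|\bti^T\x|\geq 0)$ is identically one, so \eqref{PL sup} reduces to an ordinary (unweighted) least-squares fit of the pseudo-label $s:=\sgn{\bti^T\x}$ against $\x$, whose minimizer is $\bth=\tfrac{1}{2}\bSi^{-1}\E[s\,\x]$ with $\bSi:=\E[\x\x^T]$. Using $\x=yX\bmu+\sigma\g$, independence of $y,X,\g$, and the normalization $\E[X^2]=1$, I would compute $\bSi=\bmu\bmu^T+\sigma^2\Iden_p$, so $\bmu$ is an eigenvector of $\bSi$ with eigenvalue $1+\sigma^2$ while every direction orthogonal to $\bmu$ has eigenvalue $\sigma^2$.

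Next I would reduce the vector $\E[s\,\x]$ to a two-dimensional Gaussian integral. Writing $\bti=\tn{\bti}(\alpha\bmu+\beta\vb)$ with $\vb\perp\bmu$ a unit vector and $\beta=\sqrt{1-\alpha^2}$, and setting $g_1=\bmu^T\g$, $g_2=\vb^T\g$ and $h=\alpha g_1+\beta g_2\sim\Nn(0,1)$, the sign becomes $s=\sgn{\alpha yX+\sigma h}$, which depends only on $(y,X,h)$. Since the remaining coordinates of $\g$ are mean-zero and independent of $s$, the vector $\E[s\,\x]$ lies in $\mathrm{span}\{\bmu,\vb\}$. Conditioning on $(y,X)$ and using the two elementary identities $\E[\sgn{c+\sigma h}]=2\Phi(c/\sigma)-1$ and $\E[h\,\sgn{c+\sigma h}]=2\phi(c/\sigma)$ (the latter is Gaussian integration by parts, and yields $\E[g_1 s]=2\alpha R$, $\E[g_2 s]=2\beta R$), together with the oddness of the sign in $y$, I obtain
\[
\E[s\,\x]=\big(P+2\sigma\alpha R\big)\bmu+2\sigma\beta R\,\vb,\qquad P:=\E_X\!\big[X(2\Phi(\tfrac{\alpha X}{\sigma})-1)\big],\quad R:=\E_X\!\big[\phi(\tfrac{\alpha X}{\sigma})\big].
\]

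Applying $\bSi^{-1}$ via Sherman--Morrison scales the $\bmu$-component by $1/(1+\sigma^2)$ and the $\vb$-component by $1/\sigma^2$; hence $\bth$ has $\bmu$-coordinate $\tfrac12(P+2\sigma\alpha R)/(1+\sigma^2)$ and $\vb$-coordinate $\beta R/\sigma$. Because $\bth$ lives in the orthonormal plane $\{\bmu,\vb\}$, its co-tangent is exactly the ratio of these coordinates, giving the clean formula $\tang{\bth,\bmu}=\dfrac{\sigma(P+2\sigma\alpha R)}{2(1+\sigma^2)\beta R}$. The role of margin is now transparent: the perpendicular leakage is proportional to $R=\E_X[\phi(\alpha X/\sigma)]$, which is forced to be exponentially small once $X$ is bounded away from $0$.

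Finally I would insert the margin condition $\gamma\le X\le M\gamma$. Monotonicity of $\phi$ gives $R\le\phi(\alpha\gamma/\sigma)=\tfrac{1}{\sqrt{2\pi}}\e^{-C}$ with $C=\alpha^2\gamma^2/(2\sigma^2)$, so $1/R\ge\sqrt{2\pi}\,\e^{C}$, the source of the $\e^{C}$ amplification. For the numerator I would write $2\Phi(t)-1=1-2Q(t)$ and use $Q(\alpha X/\sigma)\le Q(\alpha\gamma/\sigma)\le\tfrac12\e^{-C}$ together with $X\le M\gamma$ to get $P\ge\E[X]-2M\gamma\,Q(\alpha\gamma/\sigma)\ge\gamma(1-M\e^{-C})$. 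Combining these with the crude bounds $1+\sigma^2\le2$ and $\beta\le1$ (valid since $\sigma\le\gamma\le1$) and discarding the nonnegative $2\sigma\alpha R$ term yields $\tang{\bth,\bmu}\ge\tfrac{\sigma}{4}\tfrac{P}{R}\ge\tfrac{\sqrt{2\pi}\,\sigma\gamma}{4}\e^{C}(1-M\e^{-C})$, which comfortably implies the stated bound (the factor $6$ absorbs the loosening from $1-M\e^{-C}$ to $1-6M\e^{-C}$), and the special case follows since $\alpha\gamma>\sqrt{2\log(12M)}\,\sigma$ forces $M\e^{-C}<1/12$. The main obstacle is the exact evaluation of $\E[s\,\x]$—specifically isolating the signal term $P$ from the noise-leakage term $R$ via the Stein identity—because it is precisely the exponential smallness of $R$ under margin that produces the improvement, in sharp contrast to the folded-normal (no-margin) case of Theorem~\ref{thm no} where the leakage keeps $\bth$ parallel to $\bti$.
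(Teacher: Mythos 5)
Your proof is correct, but it follows a genuinely different route from the paper's. The paper argues perturbatively: it splits $\sgn{\bti^T\x}=\sgn{\bmu^T\x}+r(\x)$, notes that the first term reproduces the (purely $\bmu$-aligned) supervised solution $\E[|X+\sigma g|]\bmu/(1+\sigma^2)$, and then controls the perturbation $\bSi^{-1}\E[r(\x)\x]$ by showing (Lemma \ref{prop reject}) that the disagreement event $\{r(\x)\neq 0\}$ has probability at most $2Q(\alpha\gamma/\sigma)\leq \e^{-C}$ under the margin condition, feeding this into componentwise bounds via Lemma \ref{lem simple}. You instead evaluate the moment $\E[\sgn{\bti^T\x}\,\x]$ \emph{exactly} by conditioning on $(y,X)$ and using the Gaussian identities $\E[\sgn{c+\sigma h}]=2\Phi(c/\sigma)-1$ and $\E[h\,\sgn{c+\sigma h}]=2\phi(c/\sigma)$ (the latter splitting across $g_1,g_2$ exactly as in the paper's Lemma \ref{split lem}), which yields the closed form $\tang{\bth,\bmu}=\sigma(P+2\sigma\alpha R)/(2(1+\sigma^2)\beta R)$ before any inequality is invoked; the margin then enters only through the elementary bounds $R\leq\phi(\alpha\gamma/\sigma)$ and $P\geq\gamma(1-M\e^{-C})$, and your final constants are in fact slightly sharper than the paper's (an extra $\sqrt{2\pi}$ and no $-2\sigma\e^{-C}$ loss term). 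What your exact computation buys is transparency — the orthogonal leakage is literally the scalar $R$, which explains the contrast with Theorem \ref{thm no} where $X$ reaches down to $0$ — at the cost of being tied to $\Gamma=0$ and to the explicit Gaussian structure; the paper's perturbative comparison to the oracle sign $\sgn{\bmu^T\x}$ is looser but extends to general acceptance thresholds (its Lemma \ref{prop reject} covers $\Gamma>0$). One caveat applying equally to both arguments: the derivation supports the special-case conclusion with $\e^{\alpha^2\gamma^2/(2\sigma^2)}=\e^{C}$, not the $\e^{\alpha^2\gamma^2/\sigma^2}=\e^{2C}$ printed in the statement, which appears to be a typo rather than something either proof establishes.
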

Note that $\tang{\bth,\bmu}$ can be arbitrarily larger than the initial value $\tang{\bti,\bmu}$. As $\sigma$ decreases, $\tang{\bth,\bmu}$ increases exponentially fast in the margin $\gamma$ and the initial correlation $\alpha$ and $\bth$ becomes quickly aligned with the optimal direction $\bmu$. This should be contrasted with Theorem \ref{thm no} where $\bth$ remains aligned with the initial model $\bti$ which implies no improvement.

\vs\subsection{Benefits of Regularization}\vs
In this section, we show that with proper regularization, distributional bias of the data can push the solution towards the global minima (i.e.~a classifier perfectly aligned with $\bmu$). We consider two type of regularizations.
\begin{itemize}
\item {\bf{Ridge regression:}} Consider the ridge regularized version of \eqref{PL sup} given by%optimization task
\begin{align}
\bth=\frac{1}{2}\arg\min_{\bt}\E[1(|\bti^T\x|\geq \Gamma\tn{\bti})(\sgn{\bti^T\x}-\bt^T\x)^2]+\la\tn{\bt}^2.\label{ridge sup}
\end{align}
\item {\bf{Early-stopping:}} Apply a single gradient iteration which corresponds to the averaging estimator of Section \ref{sec gauss}. This is given by the estimator
\begin{align}
\bth=\E[1(|\bti^T\x|\geq \Gamma\tn{\bti})\cdot\sgn{\bti^T\x}\cdot\x].\label{early sup}
\end{align}
\end{itemize}
In both cases, we show that regularization has a power-iteration-like affect which emphasizes the distributional bias of the data and picks up the central direction $\bmu$. Our first result characterizes the performance of the ridge regularization.
\vs\begin{lemma}[Ridge regression] \label{lem ridge}Set $\Gamma=0$ and let $X$ have folded normal distribution. Define the strictly increasing function
\[
\kappa(\la)=\frac{1+\sigma^2}{\sigma^2}\frac{\sigma^2+\la}{1+\sigma^2+\la}.
\]
Suppose $\bth$ is the solution of \eqref{ridge sup}. We have that $\tang{\bth,\bmu}=\kappa(\la)\tang{\bti,\bmu}$.
\end{lemma}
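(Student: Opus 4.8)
The plan is to exploit the special structure of the folded normal to collapse the whole problem to Gaussian linear algebra. The first step is the Gaussianization hinted at in the footnote: since $X=|Z|$ for a standard normal $Z$ and $y$ is an independent Rademacher sign, the product $yX$ has exactly the law of $Z$, so $yX\sim\Nn(0,1)$. Writing $\x=(yX)\bmu+\sigma\g$ then exhibits $\x$ as a zero-mean jointly Gaussian vector $\x\sim\Nn(0,\bSi)$ with second-moment matrix $\bSi=\E[\x\x^T]=\bmu\bmu^T+\sigma^2\Iden$ (the cross term vanishes because $\E[yX]=0$, and $\E[(yX)^2]=\E[X^2]=1$). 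With $\Gamma=0$ the indicator in \eqref{ridge sup} is identically one, so the objective is the quadratic $L(\bt)=\E[(\sgn{\bti^T\x}-\bt^T\x)^2]+\la\tn{\bt}^2$. Expanding and using $\sgn{\cdot}^2=1$ gives $L(\bt)=1-2\bt^T\m+\bt^T(\bSi+\la\Iden)\bt$, where $\m=\E[\sgn{\bti^T\x}\x]$. Setting the gradient to zero yields the closed form $\bth=\tfrac12(\bSi+\la\Iden)^{-1}\m$; since co-tangents are scale-invariant, only the direction of $(\bSi+\la\Iden)^{-1}\m$ will matter.

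The crux is evaluating the vector $\m=\E[\sgn{\bti^T\x}\x]$. Here I would condition on the scalar $s=\bti^T\x$. By joint Gaussianity, $\E[\x\mid s]=\frac{\bSi\bti}{\bti^T\bSi\bti}\,s$, and the orthogonal residual is independent of $s$ with zero mean, so it drops out against the factor $\sgn{s}$ that is odd in $s$. This leaves $\m=\frac{\bSi\bti}{\bti^T\bSi\bti}\,\E[s\,\sgn{s}]=\frac{\bSi\bti}{\bti^T\bSi\bti}\,\E[|s|]$, and $\E[|s|]=\sqrt{2/\pi}\,\sqrt{\bti^T\bSi\bti}$. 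The upshot is simply $\m\propto\bSi\bti$, with the precise constant irrelevant for the co-tangent. Consequently $\bth\propto(\bSi+\la\Iden)^{-1}\bSi\bti$.

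It remains to read off the angle. I would decompose $\bti=a\bmu+b\w$ with $\w$ a unit vector orthogonal to $\bmu$, so that by definition $\tang{\bti,\bmu}=a/b$. Both $\bSi=\bmu\bmu^T+\sigma^2\Iden$ and $(\bSi+\la\Iden)^{-1}$ are simultaneously diagonalized by the splitting $\R^p=\mathrm{span}(\bmu)\oplus\bmu^\perp$: on $\mathrm{span}(\bmu)$ the eigenvalue of $\bSi$ is $1+\sigma^2$ and of $(\bSi+\la\Iden)^{-1}$ is $1/(1+\sigma^2+\la)$, while on $\bmu^\perp$ they are $\sigma^2$ and $1/(\sigma^2+\la)$. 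Hence $(\bSi+\la\Iden)^{-1}\bSi\bti=a\frac{1+\sigma^2}{1+\sigma^2+\la}\bmu+b\frac{\sigma^2}{\sigma^2+\la}\w$, and dividing the $\bmu$-coefficient by the $\w$-coefficient gives $\tang{\bth,\bmu}=\frac{a}{b}\cdot\frac{(1+\sigma^2)(\sigma^2+\la)}{\sigma^2(1+\sigma^2+\la)}=\kappa(\la)\tang{\bti,\bmu}$, as claimed. Finally, writing $\frac{\sigma^2+\la}{1+\sigma^2+\la}=1-\frac{1}{1+\sigma^2+\la}$ confirms that $\kappa$ is strictly increasing in $\la$.

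I expect the only genuinely delicate step to be the evaluation of $\m$: everything else is bookkeeping on a rank-one-plus-scaled-identity matrix. The conditioning argument is clean precisely because of the Gaussianization, so I would present that reduction first and carefully, since without it $\m=\E[\sgn{\bti^T\x}\x]$ would require a far less transparent computation over the mixture. A minor point to verify is that $\bti^T\x\neq 0$ almost surely (true since $\bti^T\x$ is a nondegenerate Gaussian), which legitimizes $\sgn{\cdot}^2=1$ and the identity $\E[s\,\sgn{s}]=\E[|s|]$.
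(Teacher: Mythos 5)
Your proposal is correct and follows essentially the same route as the paper: Gaussianize via $yX\sim\Nn(0,1)$ so that $\x\sim\Nn(0,\bSi)$ with $\bSi=\sigma^2\Iden+\bmu\bmu^T$, show $\E[\sgn{\bti^T\x}\x]\propto\bSi\bti$ (the paper does this by whitening and decomposing along $\bSi^{1/2}\bti$, which is the same computation as your conditioning on $s=\bti^T\x$), and then read off the co-tangent ratio from the eigenvalues of $(\bSi+\la\Iden)^{-1}\bSi$ on $\mathrm{span}(\bmu)$ and $\bmu^\perp$. The stray factor of $\tfrac12$ in your closed form for $\bth$ is harmless, as you note, since co-tangents are scale invariant.
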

Observe that $\kappa(\la)>1$ and unlabeled data leads to provable improvement for any positive regularization parameter $\la>0$. Our second result characterizes the performance of early-stopping (i.e.~single iteration).

%revisit the folded normal distribution 

\begin{lemma}[Early-stopping] \label{lem early}Suppose $\bti^T\bmu=\alpha$ and let $X$ have folded normal distribution. Suppose  $\bth$ is the solution of \eqref{early sup}. We have that
\begin{align}
%\frac{\hat\alpha}{\sqrt{1-\hat\alpha^2}}=(1+\sigma^{-2})\frac{\alpha}{\sqrt{1-\alpha^2}}\label{early rat}
%$\rho(\bti,\bmu)=\alpha$. Let $\hat\alpha=\rho(\bth,\bmu)$ where
\tang{\bth,\bmu}=(1+\sigma^{-2})\tang{\bti,\bmu}\label{early rat}.
\end{align}
\end{lemma}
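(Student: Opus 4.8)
The plan is to exploit the folded-normal structure to reduce everything to a one-line Gaussian computation. Since $X=|Z|$ for $Z\sim\Nn(0,1)$ and $y$ is an independent Rademacher variable, the product $yX=:w$ is itself standard normal, so the input collapses to $\x=w\bmu+\sigma\g$ with $w\sim\Nn(0,1)$ independent of $\g\sim\Nn(0,\Iden_p)$. Because $\Gamma=0$, the acceptance indicator in \eqref{early sup} is identically one and $\bth=\E[\sgn{\bti^T\x}\x]$. The co-tangent is scale-invariant, so I may assume $\tn{\bti}=1$ and set $\alpha=\bti^T\bmu=\rho(\bti,\bmu)$, decomposing $\bti=\alpha\bmu+\beta\vct{e}$ with $\beta=\sqrt{1-\alpha^2}$ and $\vct{e}$ a unit vector orthogonal to $\bmu$.

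Next I would show that $\bth$ lies in $\mathrm{span}\{\bmu,\vct{e}\}$. Writing $s:=\bti^T\x$, for any unit vector orthogonal to both $\bmu$ and $\vct{e}$ the corresponding coordinate of $\g$ is independent of $s$ and has zero mean, so that component of $\bth$ vanishes. Hence $\bth=(\bth^T\bmu)\,\bmu+(\bth^T\vct{e})\,\vct{e}$, and since both components will turn out nonnegative, the co-tangent collapses to the ratio
\[
\tang{\bth,\bmu}=\frac{\bth^T\bmu}{\bth^T\vct{e}}.
\]

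The core is then two scalar expectations. Setting $g_\mu=\bmu^T\g$ and $g_e=\vct{e}^T\g$ (independent standard normals), one has $\x^T\bmu=w+\sigma g_\mu=:a\sim\Nn(0,1+\sigma^2)$, $\x^T\vct{e}=\sigma g_e$, and $s=\alpha a+\sigma\beta g_e$ with $a$ independent of $g_e$. I would invoke the standard identity, valid for jointly Gaussian zero-mean $(U,V)$,
\[
\E[U\,\sgn{V}]=\sqrt{\tfrac{2}{\pi}}\,\frac{\cov(U,V)}{\sqrt{\Var(V)}},
\]
which follows by regressing $U$ on $V$ and using $\E|V|=\sqrt{2/\pi}\,\sqrt{\Var(V)}$. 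Applying it with $V=s$, $U=a$ gives $\bth^T\bmu=\sqrt{2/\pi}\,\alpha(1+\sigma^2)/\sqrt{\alpha^2+\sigma^2}$, and with $U=\sigma g_e$ gives $\bth^T\vct{e}=\sqrt{2/\pi}\,\sigma^2\beta/\sqrt{\alpha^2+\sigma^2}$.

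Dividing, the common prefactor and the $\sqrt{\alpha^2+\sigma^2}$ cancel, leaving $\tang{\bth,\bmu}=\alpha(1+\sigma^2)/(\sigma^2\beta)=(1+\sigma^{-2})(\alpha/\beta)=(1+\sigma^{-2})\tang{\bti,\bmu}$, as claimed. There is no genuine obstacle: the only things to get right are the reduction $yX\to$ standard normal, which makes the threshold variable $s$ exactly Gaussian, and the bookkeeping of $\cov(a,s)$ and $\cov(\sigma g_e,s)$. The perpendicular component survives precisely because $\bti$ carries the $\beta\vct{e}$ term, and it is its shrinkage by the extra factor $\sigma^2$ relative to the $\bmu$-component that produces the improvement multiplier $1+\sigma^{-2}$.
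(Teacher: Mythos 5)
Your proof is correct and is essentially the paper's own argument in different clothing: the paper also reduces $yX$ to a standard normal so that $\x\sim\Nn(0,\bSi)$ with $\bSi=\sigma^2\Iden+\bmu\bmu^T$, shows $\E[\sgn{\bti^T\x}\x]$ is proportional to $\bSi\bti$ (which is exactly your Gaussian-regression identity applied coordinate-wise), and reads off the ratio $(1+\sigma^2)/\sigma^2$ from the eigenvalues of $\bSi$ along $\bmu$ and $\bmu^\perp$. The one discrepancy is that you assume $\Gamma=0$, whereas the lemma (unlike Lemma \ref{lem ridge}) is stated for the estimator \eqref{early sup} with general $\Gamma\geq 0$, and the paper's proof covers that case; your argument extends immediately, since regressing $U$ on $V=\bti^T\x$ gives $\E[U\,1(|V|\geq\Gamma)\sgn{V}]=\frac{\mathrm{Cov}(U,V)}{\mathrm{Var}(V)}\E[|V|\,1(|V|\geq\Gamma)]$, so the truncation only changes the scalar prefactor and leaves the co-tangent ratio intact.
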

Here, observe that the improvement in the co-tangent $\tang{\bth,\bmu}$ is captured by the signal-to-noise ratio. Since $X$ is folded normal, the covariance matrix of the data obeys
\[
\E[\x\x^T]=\sigma^2\Iden+\bmu\bmu^T.
\]
The eigenvalue along the signal direction $\bmu$ is $1+\sigma^2$ whereas the orthogonal eigenvalues along the noisy directions are $\sigma^2$ and the ratio between them is $(1+\sigma^2)/\sigma^2=1+\sigma^{-2}$.

\vs\vs\subsection{Importance of Regularization in Logistic Regression}\label{sec degenerate}\vs
Note that regularization is also critical for ensuring the success of self-training when it comes to classification loss functions as well. Examples include logistic loss, hinge loss and exponential loss. All of these loss functions have the common form $\ell(y,\hat{y})=\ell(y\hat{y})$, are monotonically decreasing \cite{gunasekar2018characterizing}, and satisfy the limit $\lim_{t\rightarrow \infty} \ell(t)=0$. For instance hinge loss is given by $\ell(y,\hat{y})=(1-y\hat{y})_+$ and exponential loss is given by $\ell(y,\hat{y})=\e^{-y\hat{y}}$. For logistic and exponential loss, the training loss can never achieve zero and the model parameters have to indefinitely grow to minimize the training loss. The pseudo-label loss function is obtained by setting $y=\sgn{\hat{y}}$ so that the unlabaled example has loss equal to $\ell(|\hat{y}|)$.%This is also one of the reasons modern neural networks tend to be make

In this section, we will briefly argue that, regularization is critical for enabling self-training/pseudo-labeling to find non-trivial models. The basic intuition is that, without regularization, the self-training loss can easily achieve zero while preserving the label decision of the original classifier. In other words, there is a trivial global optimum. For instance, suppose we scale the final (i.e.~logit) layer of a neural network by $\alpha$. Then, this network will output the logits $\alpha \hat{y}$ rather than $\hat{y}$. For $\alpha>0$, the class decision for $\alpha\hat{y}$ is exactly same as $\hat{y}$. However for $\alpha \geq 1$, the training loss decreases from $\ell(|\hat{y}|)$ to $\ell(\alpha|\hat{y}|)$. In general, as long as $\hat{y}\neq 0$, indefinitely enlarging $\alpha$ will asymptotically make the training loss zero. The following lemma provides a rigorous statement of this basic observation for a general function classes.
\begin{lemma} \label{lem simple stuff}Fix a prediction function $f:\R^p\rightarrow \R$. Consider the function class $\Fc=\{\alpha f \bgl \alpha \geq 0\}$. Suppose the loss function $\ell$ obeys $\lim_{t\rightarrow \infty} \ell(t)=0$ and the input distribution $\x\sim \Dc$ satisfies $\Pro_{\Dc}(f(\x)\neq 0)=1$. Define the population self-training loss $\Lct(f)=\E_{\Dc}[\ell(|f(\x)|)]$. We have that
\[
\lim_{\alpha\rightarrow\infty} \Lct(\alpha f)=0.%\E[\ell(\alpha |f(\x)|
\]
%\begin{itemize}
%\item {\bf{Empirical:}} Given dataset $\Uc=(\x_i)_{i=1}^u$, $f$ satisfies $f(\x)\neq 0$ for all $\x\in\Uc$. The empirical loss is $\Lct=\frac{1}{u}\ell(|f(\x_i)|)$.
%\item {\bf{Population:}} $f$ satisfies $f(\x_i)\neq 0$ over the dataset $\Uc=(\x_i)_{i=1}^u$. The empirical loss is $\Lct=\frac{1}{u}\ell(|f(\x_i)|)$.
%\end{itemize} Suppose our function class is parameterized by a scalar $\alpha$ and vector $\bt$ and suppose our model admits an input $\x$ and outputs a scalar $f(\x,\alpha,\bt)=\alpha f(\x,\bt)$. Here $\alpha$ is a redundant variable that will help us achFor instance, for a linear classifier, we have $f(\x,\alpha,\bt)=\alpha \x^T\bt$.
\end{lemma}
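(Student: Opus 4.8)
The plan is to reduce the claim to an almost-sure pointwise statement about the integrand and then push it through the expectation. For $\alpha\geq 0$ we have $\Lct(\alpha f)=\E_{\Dc}[\ell(\alpha|f(\x)|)]$ since $|\alpha f(\x)|=\alpha|f(\x)|$. Fix any $\x$ with $f(\x)\neq 0$; then $|f(\x)|>0$ and the argument $\alpha|f(\x)|\to\infty$ as $\alpha\to\infty$, so the hypothesis $\lim_{t\to\infty}\ell(t)=0$ gives $\ell(\alpha|f(\x)|)\to 0$. Because $\Pro_{\Dc}(f(\x)\neq 0)=1$, this pointwise convergence of the integrand holds for $\Dc$-almost every $\x$. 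Thus the entire content of the lemma is the interchange of the limit $\alpha\to\infty$ with the expectation.

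The main (and essentially only) technical point is justifying this interchange, for which I would use a dominated/monotone convergence argument. The losses of interest (logistic, hinge, exponential) are nonincreasing on $[0,\infty)$, so I would first note that $\ell(\alpha|f(\x)|)$ is nonincreasing in $\alpha$ for a.e.\ $\x$ and is bounded above by the constant $\ell(0)$ and below by $\inf_{t\geq 0}\ell(t)\geq 0$. The constant $\ell(0)$ is trivially $\Dc$-integrable, so either the monotone convergence theorem (for a decreasing sequence dominated by an integrable first term) or the dominated convergence theorem applies and yields
\[
\lim_{\alpha\to\infty}\Lct(\alpha f)=\E_{\Dc}\Big[\lim_{\alpha\to\infty}\ell(\alpha|f(\x)|)\Big]=\E_{\Dc}[0]=0.
\]

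If one prefers not to invoke monotonicity, the same conclusion follows from a self-contained $\eps$-splitting argument: given $\eps>0$, choose $T$ with $\ell(t)\leq\eps$ for all $t\geq T$, and split the expectation over the events $\{\alpha|f(\x)|\geq T\}$ and $\{0<\alpha|f(\x)|<T\}$. On the first event the integrand is at most $\eps$; on the second, the events $\{|f(\x)|<T/\alpha\}$ decrease to the null set $\{f(\x)=0\}$ as $\alpha\to\infty$, so by continuity of measure their probability vanishes, and (using any uniform bound on $\ell$ over $[0,\infty)$) their contribution tends to $0$. The only obstacle worth flagging is pathological loss behavior near the origin: if $\ell$ were unbounded as its argument shrinks to $0$, the mass escaping onto $\{|f(\x)|<T/\alpha\}$ could in principle keep the integral from vanishing. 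This does not occur for the decreasing, finite-at-$0$ losses under consideration, so boundedness of $\ell$ on $[0,\infty)$ (equivalently, the monotonicity assumed in the surrounding discussion) is precisely the hypothesis that closes the argument.
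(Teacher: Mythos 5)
Your proof is correct and follows essentially the same route as the paper's: the paper likewise splits the expectation on a set of probability at most $\eps$ near $\{f(\x)=0\}$, bounds the integrand there by $\ell(0)$, and sends $\ell(\alpha\delta)\to 0$ on the complement. The boundedness of $\ell$ on $[0,\infty)$ that you rightly flag as necessary is also used implicitly by the paper (via the monotone-decreasing losses assumed in the surrounding discussion), so your remark is a fair clarification rather than a divergence.
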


Note that, the nonzero condition $\Pro_{\Dc}(f(\x)\neq 0)=1$ helps us push the loss to zero by increasing the scale $\alpha$. While this is a reasonably mild condition when the data has continuous distribution, we can also avoid this by considering an infinitesimal perturbation of $f$ to reach a similar conclusion (e.g.~using $\tilde{f}(\x)=f(\x)+g$ where $g$ is Gaussian noise with arbitrarily small variance).  

Similar to least-squares, regularization techniques such as ridge-regression and early-stopping can guide self-training towards useful models by preventing degenerate solutions (which requires $\alpha\rightarrow \infty$) provided in Lemma \ref{lem simple stuff}.

\begin{figure}[t!]

	\begin{subfigure}{3.3in}\centering
%		\begin{tikzpicture}[scale=0.9,every node/.style={scale=0.9}]
%		\node at (0,0) {\includegraphics[scale=0.38]{figs/gradient_norm}};
%		\node at (-3,0) [rotate=90,scale=.8]{$\tn{m\odot\nabla\Lc(\bt_\tau)}^2$};
%		\node at (0,-2.1) [scale=.8]{Epoch};
%		\end{tikzpicture}\caption{}\label{fig1}
		\includegraphics[scale=0.35]{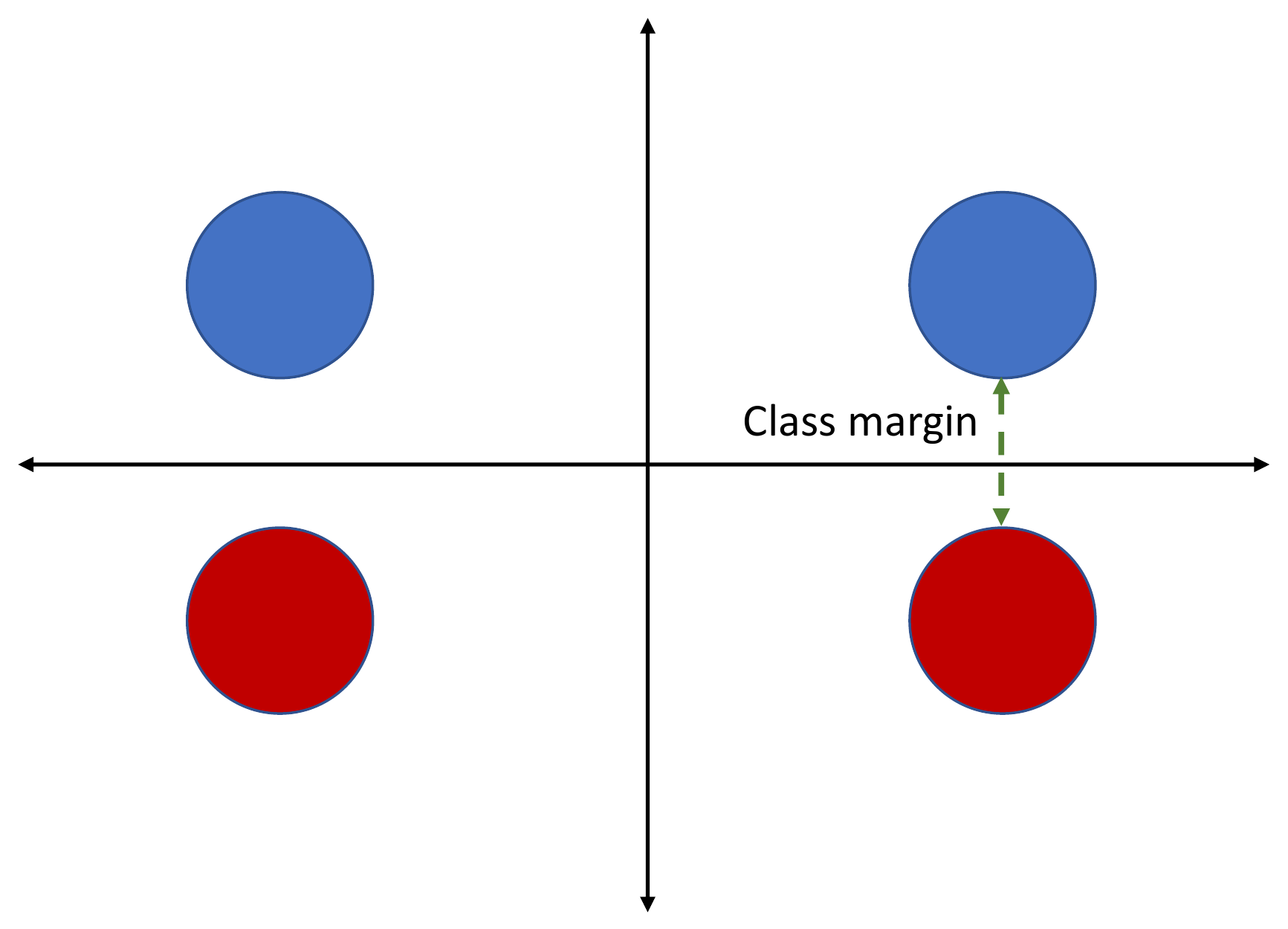}\caption{Example distribution for the labeled data. x-axis separates the classes.}\label{fig1cl}
	\end{subfigure}~~
	\begin{subfigure}{3.3in}\centering
		\includegraphics[scale=0.35]{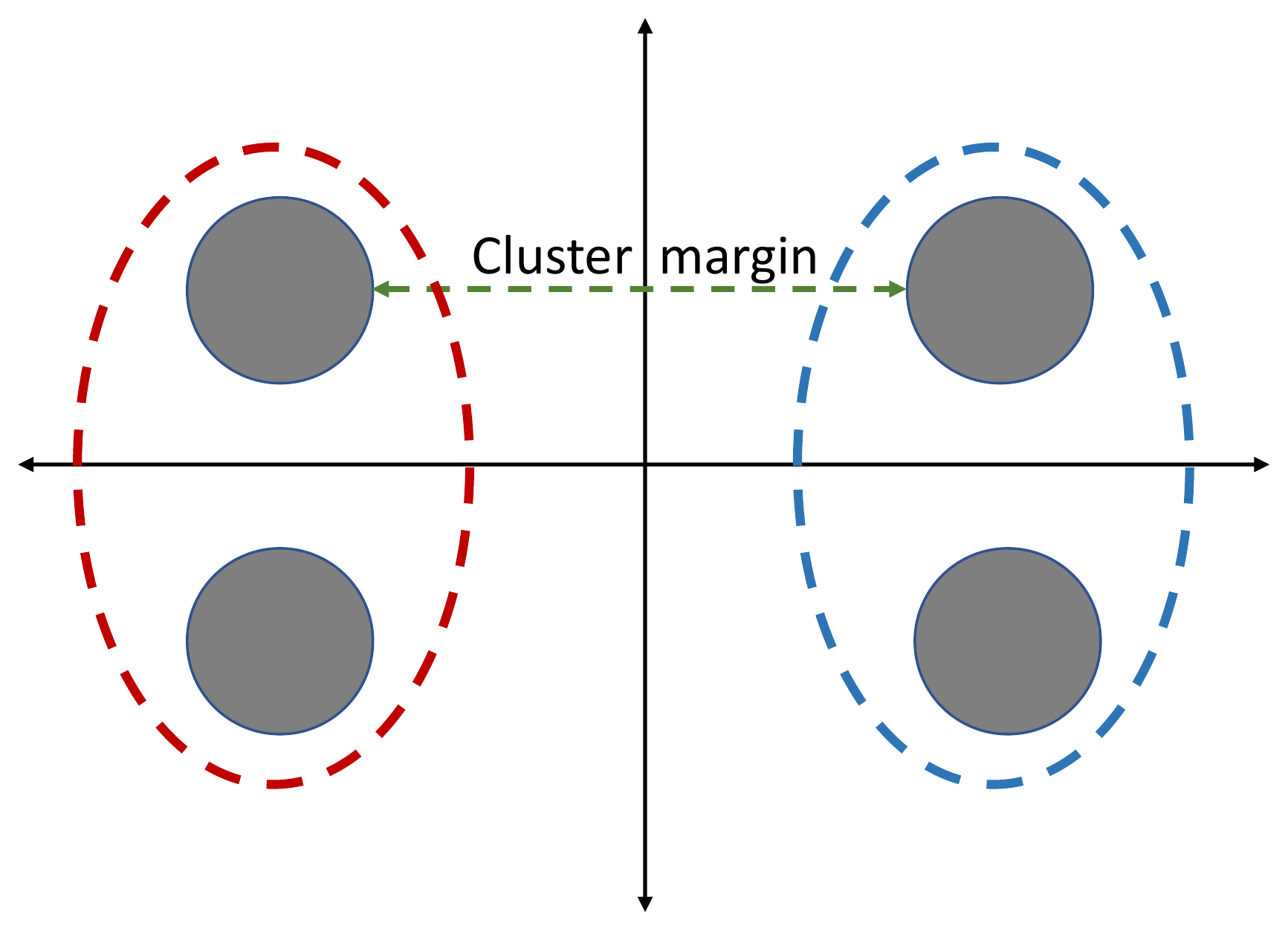}\caption{Unsupervised clustering induced by pseudo-labels. While x-axis separates the classes, y-axis maximizes the margin.}\label{fig2cl}
	\end{subfigure}
	\caption{Supervised training would find a model maximizing the class margin. Unsupervised training would find a model maximizing the cluster margin.}
	% but gradient of bottom entries vanishes after a few epoch
	\label{fig:cluster} \vspace{-0.2cm}
	% ($m\odot\nabla\Lc(\bt_\tau)$)
\end{figure}

\section{Statistical Insights Beyond Mixture Models}\label{sec hetero}
In this section, we focus on statistical aspects of empirical risk minimization with self-training for general datasets. First, we show how a purely unsupervised notion of generalization based on self-training based clustering can be formalized based on cluster margin. Then, we connect self-training based semi-supervised learning to the more general problem of learning with heterogenous datasets

\subsection{Unsupervised Learning with Self-Training}

Classical statistical learning bounds such as Rademacher complexity arguments focus on labeled datasets and aims to show that the minimizer of empirical risk can accurately predict test labels. A natural question is how to assess the success of pseudo-label optimization without any labels. While it is not possible for self-training optimization to find the true class distributions \cite{balcan2010discriminative} without supervision, it is possible to argue that self-training loss can induce good clusters. {This is illustrated via an example distribution in Figure \ref{fig:cluster}. Figure \ref{fig1cl} shows the distribution of the labeled data where classes are separated along x-axis. While there is a margin between the classes, clustering along y-axis leads to a larger margin. Thus, the class distributions are not the ideal clusters and self-training will not be able to find the correct class assignments without supervision. However, minimizing a proper self-training loss should be able to find the margin-maximizing clustering (i.e.~ Figure \ref{fig2cl}).} Following this example, let us assess the clustering quality via margin, i.e.~ensuring that the input samples are away from the decision boundary. For instance, we can declare error if an input sample has margin less than $\gamma$. Such a clustering error can be defined as% terms of the fraction of the input samples achieving a margin $\gamma>0$.
\begin{align}
{\cal{E}}_\gamma(f)=\Pro_{\x\sim\Dc}(|f(\x)|\leq \gamma).\label{cluster error}
\end{align}
where $|f(\x)|$ is the margin with respect to the model's own decision. Here, recall that the absolute value $|f(\x)|$ naturally arises from the use of pseudo-labels. As discussed in Section \ref{sec degenerate}, common loss functions such as quadratic, logistic and hinge loss has the simplifying multiplicative form $\ell(y,\hat{y})=\ell(\hat{y}y)$ where $\hat{y}=f(\x)$ is the prediction and $y$ is the label. Plugging pseudo-label $\sgn{f(\x)}$ instead of the true label $y$ leads to $\ell(|f(\x)|)$. To encourage $\gamma$ margin smoothly, let us define the margin loss $\ell_\gamma(\cdot):[0,\infty)\rightarrow [0,1]$
\[
\ell_\gamma(x)=\begin{cases}0\quad\quad\quad\text{if}\quad x\geq 2\gamma\\-\frac{x}{\gamma}+2\quad~\text{if}\quad \gamma\leq x\leq 2\gamma\\1\quad\quad\quad\text{if}\quad 0\leq x\leq \gamma\end{cases}.
\]
Observe that $\ell_\gamma(x)$ is upper and lower bounded by the indicator functions as follows 
\[
1(x\leq \gamma)\leq \ell_\gamma(x)\leq 1(x\leq 2\gamma).
\] To proceed, given unlabeled samples $\Uc=(\x_i)_{i=1}^u$ and a function class $\Fc$, we show that solving the unsupervised empirical risk minimization problem %\SO{Consider adding balancedness constraint}
\begin{align}
%\hat f=\arg\min_{f\in \Fc}\Lch_U(f)\where \Lch_U(f)=\frac{1}{\nt}\sum_{i=1}^{\nt}\ell_\gamma(|f(\x_i)|).\label{unsup}
\hat f=\arg\min_{f\in \Fc}\frac{1}{\nt}\sum_{i=1}^{\nt}\ell_\gamma(|f(\x_i)|).\label{unsup}
\end{align}
can return a solution with good generalizability. Let $(\eps_i)_{i=1}^n$ be i.i.d.~Rademacher variables. Define the Rademacher complexity of $\Fc$ with respect to $\Uc$ to be
\[
\Rc_u(\Fc)=\frac{1}{n}\E[\sup_{f\in\Fc}\sum_{i=1}^u \eps_if(\x_i)].
\]

%unsupervised loss $\Lch_U(f)$ quickly concentrates around its expectation. More importantly, we show that, 
To make the dependence on the distribution $\Dc$ explicit, we will also use the notation $\Rc^{\Dc}_u(\Fc)$ later on. The following lemma follows from standard Rademacher complexity arguments\footnote{We are not aware of prior literature explicitly stating such a result however the proof does not require significant novelty over the standard Rademacher complexity arguments.} to show that $\hat f$ can induce a good clustering over the distribution $\Dc$ in terms of prediction margin.% i.e.~we declare a.
\begin{lemma}[Self-Training Based Clustering] \label{unsup gen lemma}Sample unlabeled data $\Uc=(\x_i)_{i=1}^u\distas \Dc$. With probability at least $1-\delta$ over $\Uc$, the clustering risk \eqref{cluster error} of the solution $\hat f$ of the ERM \eqref{unsup} obeys
\[
{\cal{E}}_\gamma(\hat{f})\leq  \min_{f\in\Fc}{\cal{E}}_{2\gamma}(f)+\frac{2}{\gamma}\Rc_u(\Fc)+2\sqrt{\frac{\log(2/\delta)}{u}}.
%\Pro(|\hat{f}(\x)|\leq \gamma)\leq  \min_{f\in\Fc}\Pro(|f(\x)|\leq 2\gamma)+\frac{2}{\gamma}\Rc_u(f)+2\sqrt{\frac{\log(2/\delta)}{u}}.
\]
\end{lemma}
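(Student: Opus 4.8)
The plan is to run the textbook margin-based Rademacher argument, with the two margin levels $\gamma$ and $2\gamma$ bridged by the surrogate loss $\ell_\gamma$. First I would introduce the population and empirical surrogate risks $\Lct(f)=\E_{\Dc}[\ell_\gamma(|f(\x)|)]$ and $\hat{\Lct}(f)=\frac{1}{u}\sum_{i=1}^{u}\ell_\gamma(|f(\x_i)|)$, so that $\hat f$ from \eqref{unsup} is exactly the minimizer of $\hat{\Lct}$. The given sandwich $1(x\le\gamma)\le\ell_\gamma(x)\le 1(x\le 2\gamma)$ then translates directly into the two-sided bound ${\cal{E}}_\gamma(f)\le\Lct(f)\le{\cal{E}}_{2\gamma}(f)$, valid for every $f\in\Fc$, by taking expectations of the pointwise inequalities evaluated at $x=|f(\x)|$. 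These are the only two places where the indicator/margin structure enters; everything else is generic concentration.

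Next I would establish one-sided uniform convergence of $\Lct$ to $\hat{\Lct}$ over $\Fc$. Because $\ell_\gamma$ takes values in $[0,1]$, changing a single sample $\x_i$ perturbs $\sup_{f\in\Fc}(\Lct(f)-\hat{\Lct}(f))$ by at most $1/u$, so McDiarmid's inequality gives, with probability at least $1-\delta/2$, $\sup_{f\in\Fc}(\Lct(f)-\hat{\Lct}(f))\le \E\,\sup_{f\in\Fc}(\Lct(f)-\hat{\Lct}(f))+\sqrt{\log(2/\delta)/(2u)}$. The standard symmetrization step bounds this expected supremum by twice the Rademacher complexity $\Rc_u(\Gc)$ of the composed loss class $\Gc=\{\x\mapsto\ell_\gamma(|f(\x)|):f\in\Fc\}$.

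The one genuinely content-bearing step is to pass from $\Rc_u(\Gc)$ back to $\Rc_u(\Fc)$. I would verify that the scalar map $t\mapsto\ell_\gamma(|t|)$ is $1/\gamma$-Lipschitz: $\ell_\gamma$ has slope in $\{0,-1/\gamma\}$ on $[0,\infty)$, and composing with the $1$-Lipschitz map $t\mapsto|t|$ preserves the constant $1/\gamma$. The Ledoux--Talagrand contraction inequality then yields $\Rc_u(\Gc)\le\frac{1}{\gamma}\Rc_u(\Fc)$; the only subtlety is the nonzero offset $\ell_\gamma(0)=1$, which is harmless because adding a constant to each summand leaves the Rademacher complexity unchanged (the $\pm1$ signs average to zero), so one may contract the centered map $t\mapsto\ell_\gamma(|t|)-1$ instead. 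This contraction through the absolute value is where I expect to have to be most careful, though it is ultimately standard.

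Finally I would chain the pieces. On the high-probability event, ${\cal{E}}_\gamma(\hat f)\le\Lct(\hat f)\le\hat{\Lct}(\hat f)+\frac{2}{\gamma}\Rc_u(\Fc)+\sqrt{\log(2/\delta)/(2u)}$. Since $\hat f$ minimizes the empirical surrogate, $\hat{\Lct}(\hat f)\le\hat{\Lct}(f^\st)$ for the population-optimal comparator $f^\st=\arg\min_{f\in\Fc}{\cal{E}}_{2\gamma}(f)$; as $f^\st$ is data-independent, a one-line Hoeffding bound (union-bounded at level $\delta/2$) gives $\hat{\Lct}(f^\st)\le\Lct(f^\st)+\sqrt{\log(2/\delta)/(2u)}\le{\cal{E}}_{2\gamma}(f^\st)+\sqrt{\log(2/\delta)/(2u)}$ using the sandwich once more. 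Collecting terms and absorbing the two $\sqrt{\log(2/\delta)/(2u)}$ contributions into $2\sqrt{\log(2/\delta)/u}$ yields the claimed bound.
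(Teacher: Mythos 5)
Your proposal is correct and follows essentially the same route as the paper's proof: sandwich the clustering errors by the surrogate risk $\ell_\gamma(|f(\x)|)$, apply a uniform Rademacher generalization bound with the $1/\gamma$-Lipschitz contraction, and use empirical optimality of $\hat f$ together with a Hoeffding bound for a fixed comparator. The only cosmetic difference is your choice of comparator ($\arg\min_f {\cal{E}}_{2\gamma}(f)$ rather than the minimizer of the population surrogate risk), which changes nothing, and your added care about centering in the contraction step is a welcome but standard detail.
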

In words, this bound states that the $\gamma$-clustering error induced by the empirical minimizer $\hat f$ is upper bounded by the optimal $2\gamma$-clustering error plus the Rademacher complexity term. It is also important to note that this bound is scale invariant. If the functions in the hypothesis set $\Fc$ are scaled by a constant, the margin $\gamma$ can be scaled by the same constant and the bound would remain perfectly intact. Thus, the bound is essentially in terms of normalized margin i.e.~the margin normalized by the norm/magnitude of the functions. Recall that, if we fix $\gamma$ and scale up the functions $f$, it is trivial to obtain $0$ clustering error as discussed in Section \ref{sec degenerate}. However, this would not learn a meaningful clustering of the data.

%Margin-Based 

% An important property of self-training optimization is that it can induce good clusters. 

\subsection{Learning with Weak Supervision with Relation to Self-Training}
%\section{Problem statement}\label{sec prob def}

%(\x_i,y_i)_{i=1}^n$
%(\xt_i,\yt_i)_{i=1}^\nt
%Thus $\Sc$ provides strong supervision and $\Sct$ provides weak supervision. 
We discussed how pseudo-labels can help finding generalizable clusterings of the inputs however it is not clear how they can help towards identifying correct classes. To this aim, in this section, we discuss the fundamental principles of learning with heterogeneous distributions where the primary motivation is jointly learning from labeled and unlabeled datasets. Let $\Sc=(\z_i)_{i=1}^n\distas\Dc$ and $\Sct=(\zt_i)_{i=1}^u\distas\tilde{\Dc}$ be i.i.d. datasets with possibly different distributions. In this section, we consider the setup where $\Dc$ is the primary distribution of interest and $\tilde{\Dc}$ provides side information about $\Dc$. Specifically, our goal is finding a model achieving small population risk over $\Dc$. Given a loss function $\ell$ and function class $\Fc$, we wish to find $f\in\Fc$ achieving small population risk
\begin{align}
\Lc(f)=\E_{\z\sim \Dc}[\ell(f(\z))].
\end{align}
With this point of view, the dataset $\Sc$ provides strong supervision and $\Sct$ provides weak-supervision as it has a different distribution. In case of semisupervised learning, $\Sc$ contains labeled data $(\z_i)_{i=1}^n=(\x_i,y_i)_{i=1}^n$ and $\Sct$ contains unlabaled data $(\zt_i)_{i=1}^u=(\x_i)_{i=n+1}^{n+u}$. Of particular interest, we focus on the scenario where weak-supervision dataset is larger than strong supervision i.e. $u\gg n$.

\noindent {\bf{Numerical intuitions on heterogeneous losses:}} To proceed, we would like to formulate a problem which jointly uses $\Sc$ and $\Uc$. We first start with some numerical intuition towards this goal with a focus on GMM distribution of Def.~\ref{GMM def} with variance $\sigma^2=1$. Let us use linear classifier and quadratic loss. Then, for $(\x,y)$ distributed as GMM, define the supervised and unsupervised population losses, which corresponds to strong and weak supervision respectively, as follows 
\begin{align}
\text{Supervised:}~&\Lc(\bt)=\frac{1}{2}\E[(y-\x^T\bt)^2]\\
\text{Unsupervised:}~&\Lct(\bt)=\frac{1}{2}\frac{\E[1(|\x^T\bt|\geq \Gamma\tn{\bt})(\sgn{\x^T\bt}-\x^T\bt)^2]}{\Pro(|\x^T\bt|\geq \Gamma\tn{\bt})},\label{unsup loss}
\end{align}
where $\Gamma$ is the acceptance threshold for self-training. In Figure \ref{fig:loss}, we plot these supervised and unsupervised population losses for parameters $\bt$ along the $\bmu$ direction where $\pm \bmu$ are the mixture centers and $\Gamma=0$. We choose $\bt=\alpha\bmu$ where $\alpha$ is the scaling parameter (x-axis) and $y-$axis shows the loss associated with $\bt$. In Figure \ref{fig1}, the supervised loss curve is shown in blue which is convex and have a unique global minimum around $\alpha=0.5$. The red curve shows the unsupervised loss $\Lct(\bt)$ (purely self-training/pseudo-labels). The unsupervised loss has two global minima (symmetrically located) and one of these minima are closely located to the global minimum of the supervised loss. Also observe from Figure \ref{fig1} that the unsupervised loss is always less than the supervised loss over the entire $\alpha$ range as the pseudo-label
$\text{sgn}(f(\zt))$ induced by the data is guaranteed to result in a smaller or equal loss compared to that of the actual label. For semisupervised learning, we consider two types of regularization
\begin{align}
{\bf{Regularized:}}&\quad\Lc_{\text{semi}}(\bt)=(1-\rho)\Lc(\bt)+\rho\Lct(\bt)\label{reg form}\\
{\bf{Constrained:}}&\quad\Lc_{\text{semi}}(\bt)=\Lc(\bt)\quad\text{subject to}\quad \Lct(\bt)\leq \Xi.\label{const form}
\end{align}

\begin{figure}[t!]
	\begin{subfigure}{2.2in}
%		\begin{tikzpicture}[scale=0.9,every node/.style={scale=0.9}]
%		\node at (0,0) {\includegraphics[scale=0.38]{figs/gradient_norm}};
%		\node at (-3,0) [rotate=90,scale=.8]{$\tn{m\odot\nabla\Lc(\bt_\tau)}^2$};
%		\node at (0,-2.1) [scale=.8]{Epoch};
%		\end{tikzpicture}\caption{}\label{fig1}
		\includegraphics[scale=0.27]{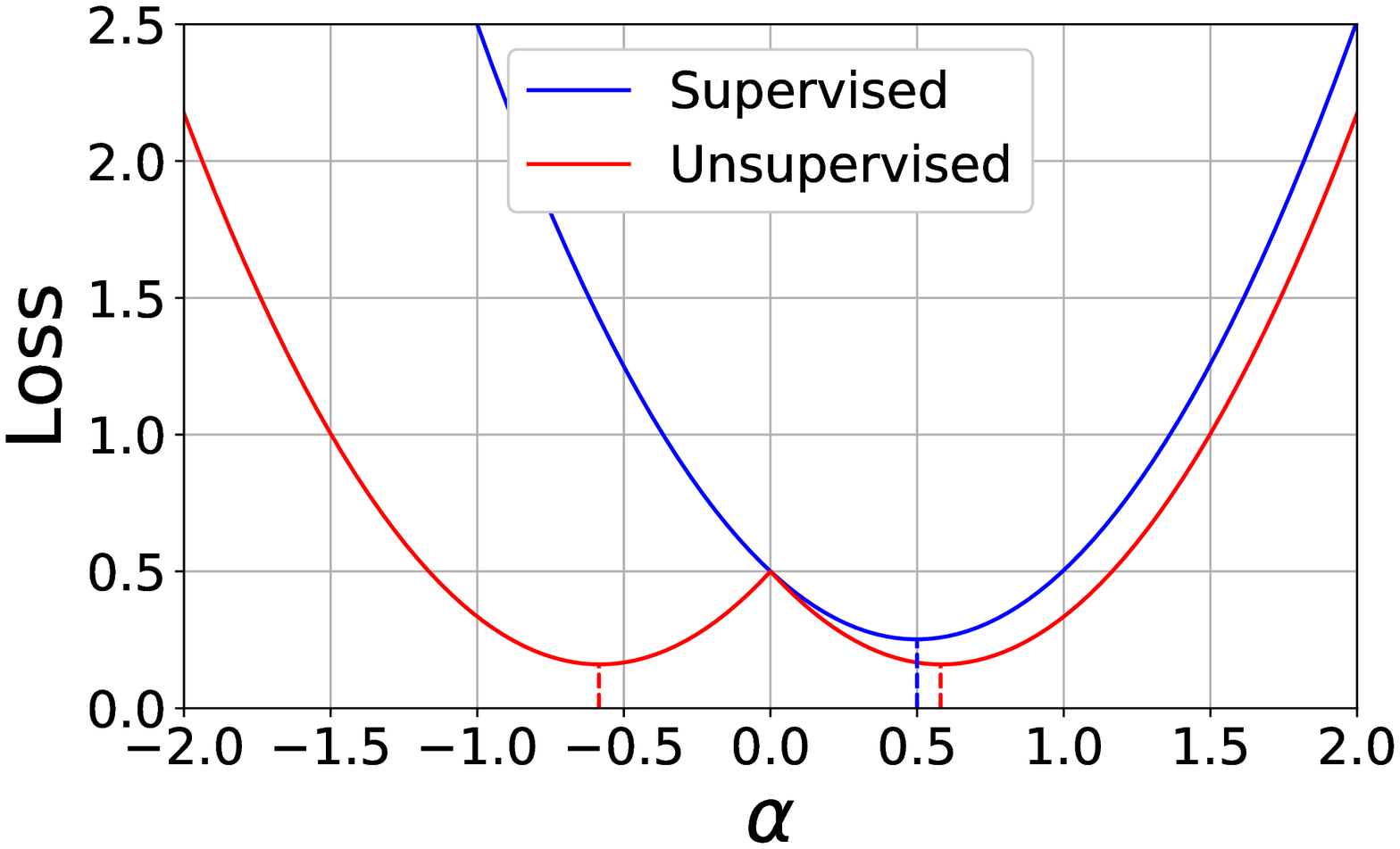}\caption{Landscapes of unsupervised (self-training) and supervised loss. $\Gamma=0$.}\label{fig1}
	\end{subfigure}~~~
	\begin{subfigure}{2.2in}
		\includegraphics[scale=0.27]{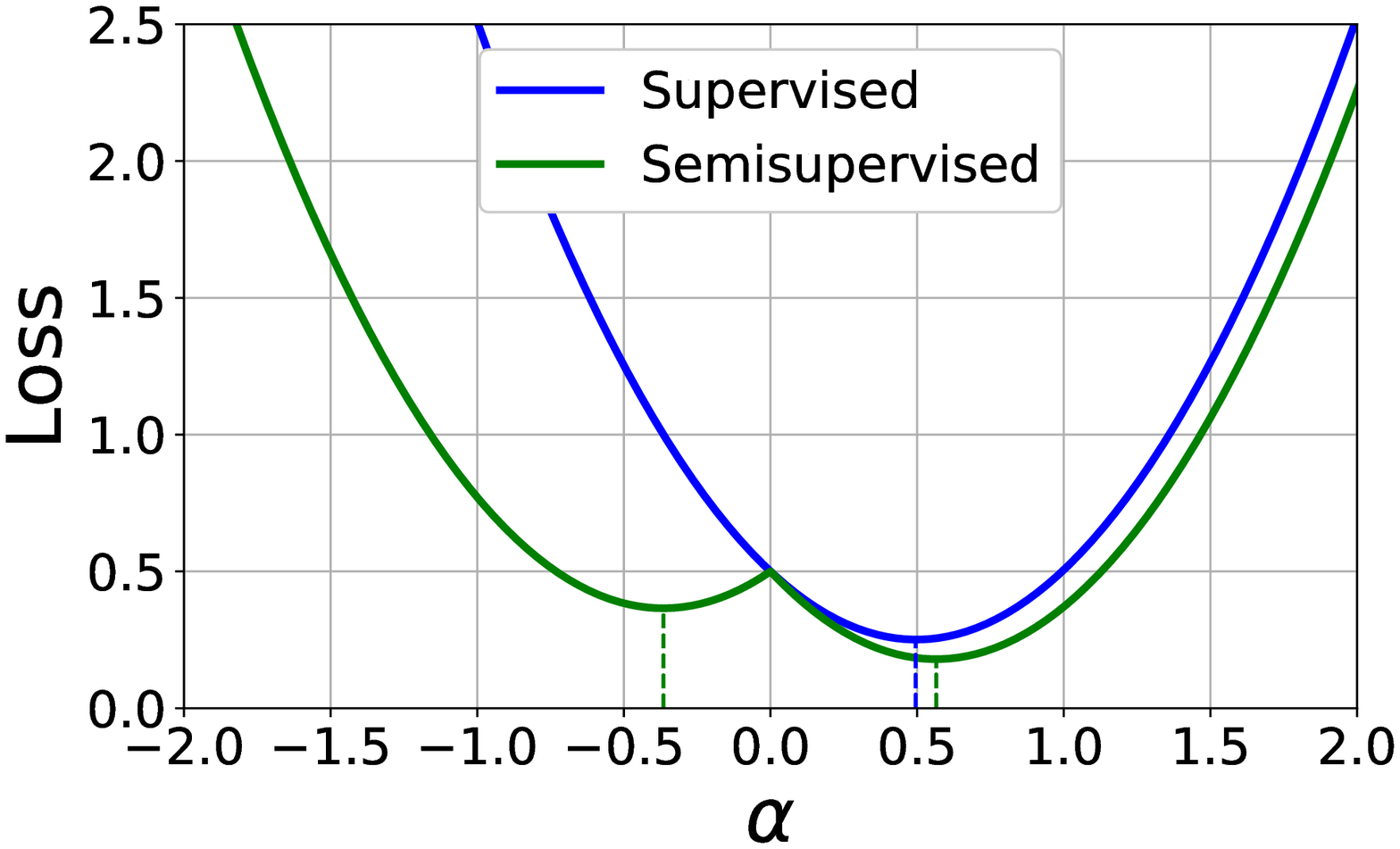}\caption{Semisupervised learning with $\rho=80\%$ self-training regularization}\label{fig2}
	\end{subfigure}~~~
	\begin{subfigure}{2.2in}
		\includegraphics[scale=0.27]{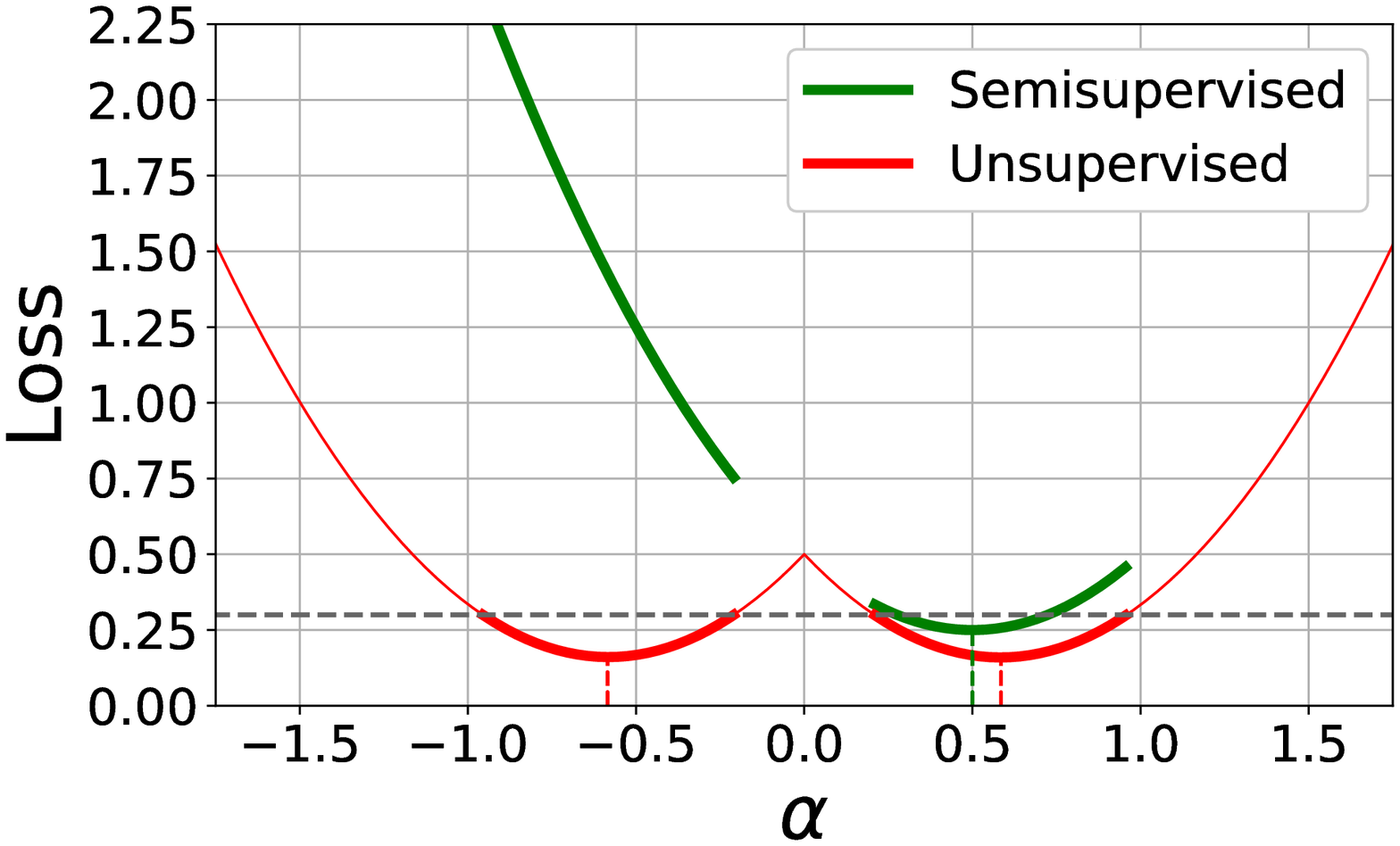}\caption{Semisupervised learning with self-training constraint $\Xi=0.3$.}\label{fig3}
	\end{subfigure}
	\caption{Loss landscape of self-training based semi-supervision for different regularization.}
	% but gradient of bottom entries vanishes after a few epoch
	\label{fig:loss} \vspace{-0.2cm}
	% ($m\odot\nabla\Lc(\bt_\tau)$)
\end{figure}

\begin{figure}[t!]
	\begin{subfigure}{2.2in}
%		\begin{tikzpicture}[scale=0.9,every node/.style={scale=0.9}]
%		\node at (0,0) {\includegraphics[scale=0.38]{figs/gradient_norm}};
%		\node at (-3,0) [rotate=90,scale=.8]{$\tn{m\odot\nabla\Lc(\bt_\tau)}^2$};
%		\node at (0,-2.1) [scale=.8]{Epoch};
%		\end{tikzpicture}\caption{}\label{fig1}
%supervised and unsupervised 
		\includegraphics[scale=0.27]{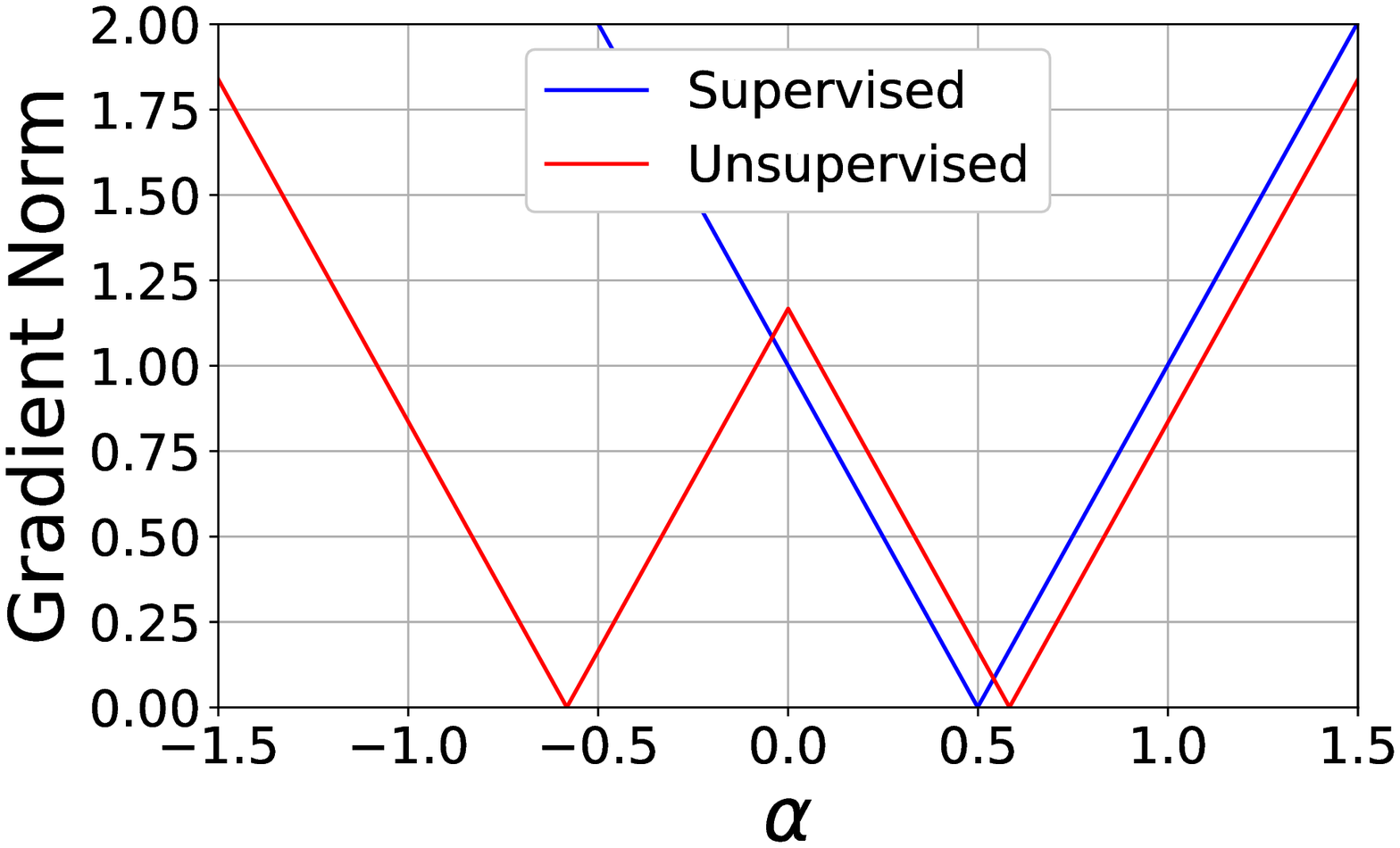}\caption{The gradient length (in $\ell_2$ norm) associated to the losses in Figure \ref{fig1}.}\label{fig_1g}
	\end{subfigure}~~~
	\begin{subfigure}{2.2in}
		\includegraphics[scale=0.27]{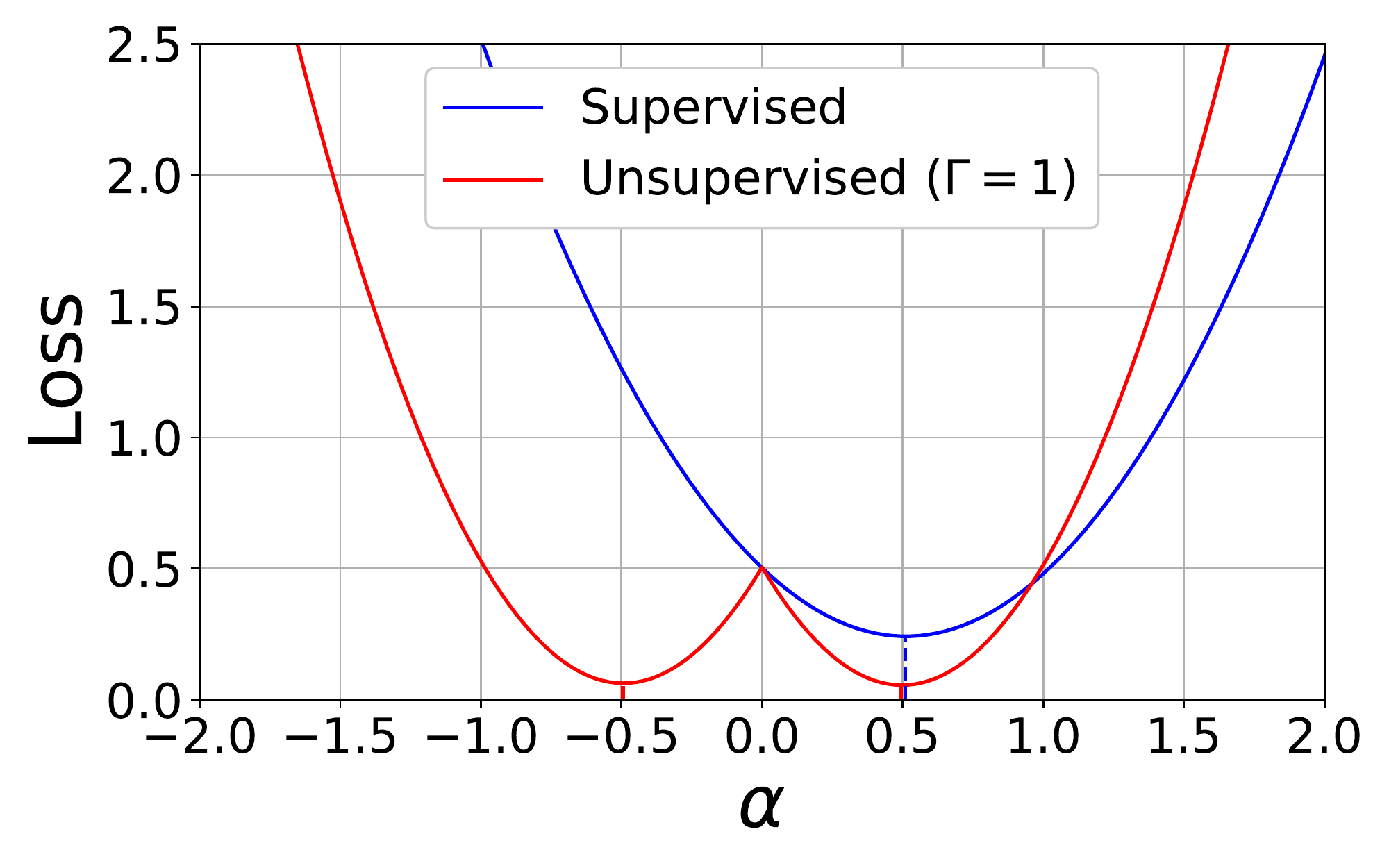}\caption{Same as Figure \ref{fig1} however self-training acceptance threshold is $\Gamma=1$}\label{fig_2g}
	\end{subfigure}~~~
	\begin{subfigure}{2.2in}
		\includegraphics[scale=0.27]{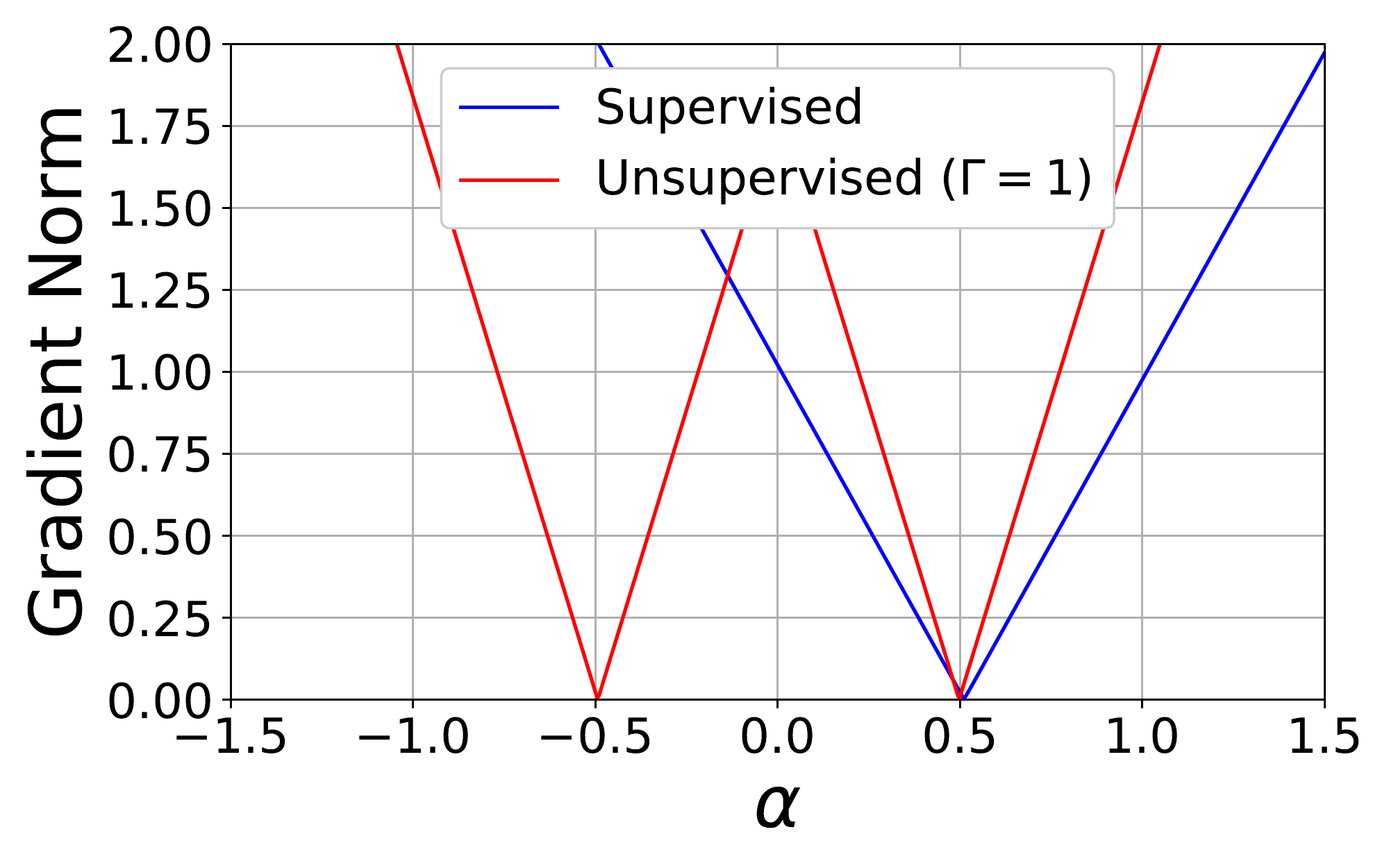}\caption{The gradient lengths in Figure \ref{fig_2g} where $\Gamma=1$.}\label{fig_3g}
	\end{subfigure}
	\caption{Visualizing gradient sizes and the effect of acceptance threshold.}
	% but gradient of bottom entries vanishes after a few epoch
	\label{fig_gradient} \vspace{-0.2cm}
	% ($m\odot\nabla\Lc(\bt_\tau)$)
\end{figure}

Figure \ref{fig2} plots the supervised loss curve along with the semisupervised loss function
of 20\% labeled data which can be expressed as $0.2\Lc(f)+0.8\Lcth(f)$. Comparing Figure \ref{fig2} with Figure \ref{fig1}, we see that the semisupervised loss still has two local minima but only have a unique global minima. This global minima coincide with the global minima of the supervised loss i.e.~both are around $\alpha\approx 0.5$. In summary, as we introduce 20\% labeled data into the loss expression of the unsupervised case, the landscape difference between the semisupervised loss and the supervised loss becomes smaller. In general, semisupervised loss is not symmetrical around $\alpha=0$ as soon as we have some fraction of labeled data in the loss and it has a unique global minima obeying $\alpha>0$. Figure \ref{fig3} shows the landscape of the constrained formulation \eqref{const form}. Here, the critical takeaway is that unsupervised loss has two global minima however constrained $\Xi$ greatly narrows down the search space. Specifically, for semisupervised loss, the regions $\alpha<0$ and $\alpha>0$ are easily distinguishable. We use this intuition to formalize the benefit of weak supervision in Theorem \ref{hetero data thm}. Figure \ref{fig_1g} shows the gradient norms associated with the supervised and unsupervised losses in Figure \ref{fig1} which shows the multiple local minima behavior for unsupervised loss \eqref{unsup loss}. Figure \ref{fig_2g} is same as \ref{fig1} however we choose $\Gamma=1$. Interestingly, the global minima of the unsupervised loss over $\alpha>0$ axis coincides much better with the global minima of supervised loss. This shows how acceptance threshold can improve the loss landscape and the compatibility between actual labels and pseudo-labels. Figure \ref{fig_3g} is the gradient norms corresponding to Figure \ref{fig_2g} which shows that gradients of the unsupervised loss behave sharper compared to Fig \ref{fig_1g}\footnote{This is likely due to the change in the distribution of the data after rejecting weak samples (which have small norms along $\mu$ direction).} and the global minima over $\alpha>0$ has a better match to supervised gradients.

%We see from Figure \ref{fig3} that the unsupervised loss associated with the rejection sampling exceeds the supervised loss for $\alpha>1$. 

%Figure \ref{fig_gradient} illustrates the gradient norms $\tn{\nabla \Lc(\bt)},\tn{\nabla \Lct(\bt)}$ Not surprisingly, we see that supervised loss is minimized along positive values whereas unsupervised loss has two global minima. 
%\SO{WIP}

% and the global minima of those two loss functions get closer.

%
%
%We plot the unsupervised loss obtained via rejection sampling for $\Gamma=1$ in Figure \ref{fig3}, which is defined as
%\begin{equation}
%{\tilde{\cal{L}}}_{{\cal{U}},\Gamma}(f)=\frac{\E_{\zt\sim \Dc}[\ell(f(\zt),\text{sgn}(f(\zt)))1(|f(\zt)|\geq\Gamma)]}{P[|f(\zt)|\geq\Gamma]}   
%\end{equation}

\vspace{5pt}
\noindent {\bf{Theoretical anaysis:}} Following this intuition, we consider a constrained empirical risk minimization which first constrains the solution space to a smaller set of functions that achieve small loss on $\Uc$ and then searches over this smaller set using $\Sc$. Let $\tilde{\ell}$ be the loss function to be used on the $\Uc$ dataset. For instance, if $\Uc$ is unlabeled, $\tilde{\ell}$ can be the loss function with respect to pseudo-labels. Define the empirical loss functions
\[
\Lch(f)=\frac{1}{n}\sum_{i=1}^n \ell(f(\z_i))\quad\text{and}\quad \Lcth(f)=\frac{1}{\nt}\sum_{i=1}^{\nt} \tell(f(\zt_i)).
\]
We then solve the constrained problem
\begin{align}
\hat{f}=\arg\min_{f\in \Fc}\Lch(f)\quad\text{subject to}\quad \Lcth(f)\leq \Xi,\label{constrained emp}
\end{align}
where $\Xi>0$ is the hyperparameter governing the strength of the constraint.

\noindent {\bf{Landscape compatibility:}} To formalize our analysis we need to characterize how weak-supervision $\Uc$ can help towards finding a solution for the population risk $\Lc$. Following our earlier discussion on semisupervised loss landscape, intuitively, this could be achieved by relating the loss landscapes associated with $\Uc$ and $\Sc$. 

Let $\Lct$ be the population risk of the $\tilde{\Dc}$ distribution i.e. $\Lct(f)=\E_{\zt\sim\tilde{\Dc}}[\tell(f(\zt))]$. We would like to ensure that the loss landscapes of $\Lc$ and $\Lct$ have commonality to a certain extent. The basic idea is that there should be $f\in\Fc$ which achieves small population loss in both objectives. The following definition connects the sublevel sets of both loss functions and will be helpful for formalizing this commonality.% level set definition will be helpful in our analysis
\begin{definition}[Sublevel set and loss commonality] \label{sublevel def}Given $\eps>0$, function class $\Fc$ and loss function $\Lc$, the $\eps$-sublevel set of $\Lc$ is defined as
\[
\Fc_{\Lc,\eps}=\{f\bgl f\in\Fc\quad\text{and}\quad \Lc(f)\leq \min_{f\in\Fc}\Lc(f)+\eps\}.
\]
Given another loss function $\Lct$, let $\epst=\epst(\Lc,\Lct,\eps)$ be the smallest number such that
\begin{align}
\Fc_{\Lc,\eps}\cap \Fc_{\Lct,\epst}\neq \emptyset.\label{non empty}
\end{align} 
\end{definition}
In light of this definition, weak-supervision $\tilde{\Dc}$ would help $\Dc$ when the sublevel sets of the loss functions $\Lc$ and $\Lct$ intersects. This intuition is visualized in  Figure \ref{fig3}. Related notions of compatibility are used in earlier works \cite{balcan2010discriminative,darnstadt2013unlabeled,zhou2014semi} for semi-supervised learning. The following theorem establishes a statistical learning bound based on Rademacher complexity analysis by building on this intuition. 
\begin{theorem}[Learning with Weak-Supervision] \label{hetero data thm} Fix $\eps>0$ and let $\epst$ be as in Definition \ref{sublevel def}. Choose the constraint hyperparameter in \eqref{constrained emp} to be $\Xi= \bXi+\min_{f\in \Fc}\Lct(f)$ with $\bXi\geq 2\epst$. Draw datasets $\Uc=(\zt_i)_{i=1}^u\distas\tilde{\Dc}$ and $\Sc=(\z_i)_{i=1}^n\distas\Dc$. Assume $\ell,\tell:\R\rightarrow[0,1]$ are $L$-Lipschitz loss functions. Suppose sample sizes $n$ and $u$ (are sufficiently large to) satisfy the following Rademacher complexity bounds
\begin{align}
\underbrace{2L\Rc^{\tilde{\Dc}}_{u}(\Fc)+\frac{t}{\sqrt{u}}}_{\text{weakly supervised}}\leq \epst\quad\text{and}\quad \underbrace{2L\Rc^{\Dc}_{n}(\Fc_{\Lct,2\bXi})+\frac{t}{\sqrt{n}}}_{\text{strongly supervised}} \leq \eps.
\end{align}
Then, with probability $1-4\e^{-t^2}$, the solution $\hat{f}$ of the constrained problem \eqref{constrained emp} satisfies
\[
\Lc(\hat{f})\leq \min_{f\in\Fc}\Lc(f)+3\eps.
\]
\end{theorem}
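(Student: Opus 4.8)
The plan is to combine two uniform-convergence (Rademacher) bounds — one on each distribution — with the common point guaranteed by the sublevel-set intersection in Definition~\ref{sublevel def}. Write $f^\star=\arg\min_{f\in\Fc}\Lc(f)$ and fix $f^\dagger\in\Fc_{\Lc,\eps}\cap\Fc_{\Lct,\epst}$, which is nonempty by the definition of $\epst$; thus $\Lc(f^\dagger)\le\Lc(f^\star)+\eps$ and $\Lct(f^\dagger)\le\min_{f\in\Fc}\Lct(f)+\epst$. I would condition on two events: the \emph{weakly-supervised} bound $\sup_{f\in\Fc}|\Lcth(f)-\Lct(f)|\le 2L\Rc^{\tilde{\Dc}}_u(\Fc)+t/\sqrt u\le\epst$, and the \emph{strongly-supervised} bound $\sup_{f\in\Fc_{\Lct,2\bXi}}|\Lch(f)-\Lc(f)|\le 2L\Rc^{\Dc}_n(\Fc_{\Lct,2\bXi})+t/\sqrt n\le\eps$. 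Each is a standard two-sided uniform-convergence statement obtained from symmetrization plus Talagrand's contraction lemma (using the $L$-Lipschitz, $[0,1]$-valued losses) and a bounded-differences inequality; a union bound over the four one-sided failures gives the stated $4\e^{-t^2}$ confidence.

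The first step is feasibility of $f^\dagger$ for the empirical constrained problem~\eqref{constrained emp}. On the weakly-supervised event, $\Lcth(f^\dagger)\le\Lct(f^\dagger)+\epst\le\min_{f\in\Fc}\Lct(f)+2\epst\le\min_{f\in\Fc}\Lct(f)+\bXi=\Xi$, where the final inequality uses the hypothesis $\bXi\ge 2\epst$. Hence $f^\dagger$ is feasible, and optimality of $\hat f$ over the feasible set yields $\Lch(\hat f)\le\Lch(f^\dagger)$.

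The conceptually central step is to localize $\hat f$ inside the restricted class $\Fc_{\Lct,2\bXi}$, so that the strongly-supervised bound — whose Rademacher term is controlled only over this smaller set — applies to it. Since $\hat f$ is feasible, $\Lcth(\hat f)\le\Xi=\min_{f\in\Fc}\Lct(f)+\bXi$, and the weakly-supervised event upgrades this to $\Lct(\hat f)\le\Lcth(\hat f)+\epst\le\min_{f\in\Fc}\Lct(f)+\bXi+\epst\le\min_{f\in\Fc}\Lct(f)+2\bXi$, i.e.~$\hat f\in\Fc_{\Lct,2\bXi}$; the same computation with $\Lct(f^\dagger)\le\min_{f\in\Fc}\Lct(f)+\epst$ shows $f^\dagger\in\Fc_{\Lct,2\bXi}$ as well. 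This localization is exactly where weak supervision pays off: it shrinks the search space so that the complexity appearing in the strongly-supervised bound is $\Rc^{\Dc}_n(\Fc_{\Lct,2\bXi})$ rather than $\Rc^{\Dc}_n(\Fc)$.

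Finally I would chain the inequalities, applying the strongly-supervised bound (valid on $\Fc_{\Lct,2\bXi}$) to both $\hat f$ and $f^\dagger$: $\Lc(\hat f)\le\Lch(\hat f)+\eps\le\Lch(f^\dagger)+\eps\le\Lc(f^\dagger)+2\eps\le\Lc(f^\star)+3\eps$, using $\Lch(\hat f)\le\Lch(f^\dagger)$ from the feasibility step and $\Lc(f^\dagger)\le\Lc(f^\star)+\eps$ from membership in $\Fc_{\Lc,\eps}$. I expect the main obstacle to be the bookkeeping of constants so that the $\tfrac32\bXi$ slack generated in the localization step stays within the $2\bXi$ radius (which is precisely why the localized class is enlarged from $\bXi$ to $2\bXi$), together with verifying that the two uniform-convergence events — driven by the independent samples $\Uc$ and $\Sc$ — can be invoked simultaneously for the data-dependent function $\hat f$, which is legitimate because $\Fc_{\Lct,2\bXi}$ is a deterministic set defined through the population risk $\Lct$.
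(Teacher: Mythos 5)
Your proposal is correct and follows essentially the same route as the paper: the paper factors the argument into a deterministic lemma (feasibility of the intersection point $f'\in\Fc_{\Lc,\eps}\cap\Fc_{\Lct,\epst}$, localization of the feasible set inside $\Fc_{\Lct,2\bXi}$ via the weak-supervision uniform-convergence event, then the chain $\Lc(\hat f)\le\Lch(\hat f)+\eps\le\Lch(f')+\eps\le\Lc(f')+2\eps\le\Lc^\star+3\eps$) plus a Rademacher step, which is exactly your decomposition with the same constant bookkeeping. Your closing remark that $\Fc_{\Lct,2\bXi}$ is a deterministic, population-defined class is precisely what makes the strongly-supervised bound applicable to the data-dependent $\hat f$, and is the point the paper uses implicitly.
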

This theorem shows that as long as weak supervision has enough samples to narrow down the initial large search space to a small sublevel set, strong supervision can provably find a generalizing solution with very few samples where the sample complexity is only dictated by the Rademacher complexity of the sublevel set $\Fc_{\Lct,2\bXi}$. Recall that if the strong supervision loss $\Lc$ and the weak supervision loss (e.g.~unsupervised self-training) $\Lct$ have similar sub-level sets, then, $\epst$ can be chosen to be very small which leads to a small search space for supervised loss in \eqref{constrained emp} and only few labels are sufficient for generalization. In essence, the technical idea (loss commonality) of this theorem is inspired from \cite{balcan2010discriminative} which focuses on semi-supervised learning, however we show that the landscape compatibility can shed light on the statistical analysis of the more general problem class of weakly-supervised learning involving heterogeneous datasets.

\vs\vs\section{Conclusions}\vs\vs

In this work, we analyzed the performance of self-training for linear classifiers and mixture distributions. We analytically showed that self-training process would converge to useful solutions for linear classifier parameters in the case of GMM. The theoretical findings demonstrate the benefits of rejecting samples with low-confidence and applying multiple self-training iterations and provides a framework for contrasting various algorithmic choices (e.g.~fresh samples vs reusing samples). We also considered a variation of GMM which reveals that: (1) class margin (in terms of distance between mixture means) is critical for convergence of self-training to useful models and (2) ridge-regularization and early-stopping can enable self-training to converge to good models, in a similar fashion to power iteration converging to principal eigenvector, even without margin requirements. Finally, we discussed the connections between semisupervised learning and learning with weak-supervision and heterogeneous data from a statistical learning perspective. There are many interesting future works especially along joint statistical and algorithmic analysis of more practical self-training problems. It would be of interest to develop non-asymptotic bounds for iterative self-training schemes for more complex distributions and classifiers (e.g.~logistic regression), adapting our approach to multiclass classification, and investigating the self-training behavior for nonlinear models such as deep nets.

%helps us capmore general binary mixture model Gen-MM that depends on a random variable X. For this particular model, we have seen that
%\newpage

\remove{
\section*{Broader Impact}
The recent successful results that large and deep neural networks have mostly depend on the usage of labeled datasets. But such labeled datasets may not be easily accessible for some applications of practical interest.
But unlabeled data could be easier to obtain and labeling them might be costly, meaning that
it could be more convenient to use semi/unsupervised learning techniques to train the models. We still have not developed a full understanding 
for the success of self-training algorithms, or for its fundamental limits.
This work enables us to address the question of
why and how self-training algorithms work by proving some insightful results for relatively simpler settings.
We have characterized some conditions and test scenarios under which self-training approaches provide useful solutions. 
It is crucial to identify some key concepts, such as margin and regularization, for the convergence of semi-supervised learning
to true set of parameters, and show their importance in a rigorous fashion.
The way we have treated the self-training problem here may not be so easily applicable to complex or deep networks.  
Nevertheless, we believe the theoretical results presented here will pave the way for a fruitful line of self-training related research which hopefully will shed more light on the dynamics of semi-supervised learning.}

\small{
\bibliographystyle{acm}
\bibliography{Bibfiles}
}

\newpage

\appendix

\section{Proofs for Section \ref{sec gauss}}
The following lemma provides a straightforward guarantee on the estimation of $\bmu$.
\begin{lemma}[Simple supervised estimator] \label{simple super}Suppose we have $n$ i.i.d.~labeled examples $(\x_i,y_i)_{i=1}^n$ from the GMM model. Consider the supervised estimator
\[
\bti=\frac{1}{n}\sum_{i=1}^n \x_iy_i.
\]
With probability $1-2\e^{-\eps^2p/2}-2\e^{-\eps^2n/2}$, we have that
\[
\frac{1+\sigma\eps}{(1-\eps)_+\sigma\sqrt{p/n}}\geq\tang{\bti,\bmu}\geq \frac{1-\sigma\eps}{(1+\eps)\sigma\sqrt{p/n}}.
\]
\end{lemma}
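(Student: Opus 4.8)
The plan is to put the estimator in closed form and reduce the cotangent to two independent, well-understood scalar quantities. First I would substitute the generative model $\x_i = y_i\bmu + \sigma\g_i$, with the $\g_i$ i.i.d.\ $\Nn(0,\Iden_p)$, into the definition of $\bti$. Using $y_i^2 = 1$ this gives $\bti = \bmu + \frac{\sigma}{n}\sum_{i=1}^n y_i\g_i$. Since the Gaussian law is symmetric and each $y_i\in\{\pm1\}$ is independent of $\g_i$, the products $y_i\g_i$ are again i.i.d.\ $\Nn(0,\Iden_p)$, so $\frac1n\sum_i y_i\g_i \sim \Nn(0,\tfrac1n\Iden_p)$ and we may write $\bti = \bmu + \frac{\sigma}{\sqrt n}\h$ with $\h \sim \Nn(0,\Iden_p)$.

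Next I would split $\h$ into its component along $\bmu$ and its orthogonal part, $\h = g\,\bmu + \h_\perp$ where $g = \h^T\bmu$. Because $\bmu$ is a fixed unit vector, $g \sim \Nn(0,1)$ while $\h_\perp$ is a standard Gaussian on the $(p-1)$-dimensional orthogonal complement, independent of $g$, so $\tn{\h_\perp}^2 \sim \chi^2_{p-1}$. The component of $\bti$ along $\bmu$ is $\bti^T\bmu = 1 + \frac{\sigma}{\sqrt n}g$ and the component orthogonal to $\bmu$ is $\frac{\sigma}{\sqrt n}\h_\perp$. Recalling that $\tang{\ab,\bb}$ is the ratio of the along-$\bmu$ magnitude to the orthogonal-to-$\bmu$ magnitude, this yields the exact identity
\[
\tang{\bti,\bmu} = \frac{1 + \frac{\sigma}{\sqrt n}g}{\frac{\sigma}{\sqrt n}\tn{\h_\perp}} = \frac{\sqrt n + \sigma g}{\sigma\,\tn{\h_\perp}},
\]
which is the heart of the argument; on the good event below the numerator stays positive, so this is a genuine (positive) cotangent.

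The remaining work is just concentration of the two scalars. For $g \sim \Nn(0,1)$ the standard Gaussian tail gives $\Pro(|g|\geq \eps\sqrt n)\leq 2\e^{-\eps^2 n/2}$, so on that event $|\frac{\sigma}{\sqrt n}g|\leq \sigma\eps$ and the numerator lies in $[1-\sigma\eps,\,1+\sigma\eps]$. For the denominator I would invoke a standard concentration bound for the norm of a Gaussian vector (Lipschitz concentration or a $\chi^2$ tail) to obtain $(1-\eps)\sqrt p \leq \tn{\h_\perp} \leq (1+\eps)\sqrt p$ with probability at least $1-2\e^{-\eps^2 p/2}$. Taking a union bound over the two events (total failure probability $2\e^{-\eps^2 p/2}+2\e^{-\eps^2 n/2}$) and inserting these numerator/denominator bounds into the displayed identity gives the upper bound by maximizing the numerator and minimizing the denominator, and the lower bound by the reverse; the $(1-\eps)_+$ in the statement simply makes the upper bound vacuously $+\infty$ once $\eps\geq 1$. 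There is no real obstacle here, as the lemma is an exact computation followed by two textbook tail bounds; the only point requiring any care is harmlessly replacing the true mean $\E\tn{\h_\perp}=\sqrt{p-1}+o(1)$ by $\sqrt p$, a discrepancy that can be absorbed into $\eps$.
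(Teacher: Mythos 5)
Your proposal is correct and follows essentially the same route as the paper: write $\bti=\bmu+\h$ with Gaussian $\h$ of variance $\sigma^2/n$, decompose $\h$ into its component along $\bmu$ and the orthogonal remainder, then apply a scalar Gaussian tail bound to the numerator and Lipschitz concentration of the Gaussian norm to the denominator before combining via a union bound. The only cosmetic difference is your explicit remark about absorbing the $\sqrt{p-1}$ versus $\sqrt{p}$ discrepancy into $\eps$, which the paper glosses over.
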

\begin{proof} Observe that, $\bti$ is distributed as
\[
\bti=\bmu+\h\where \h\sim \Nn(0,\frac{\sigma^2\Iden_p}{n}).
\]
Next, writing $\h=h\bmu+\h^\perp$ where $h\sim \Nn(0,1)$, we have that $|h|\leq \sigma\eps$ with probability at least $1-2\e^{-\eps^2n/2}$ and $(1-\eps)\sigma\sqrt{p/n}\leq \tn{\h^\perp}\leq (1+\eps)\sigma\sqrt{p/n}$ with probability at least $1-2\e^{-\eps^2p/2}$. Combining, we find
\[
\frac{1+\sigma\eps}{(1-\eps)\sigma\sqrt{p/n}}\geq \frac{1+|h|}{\tn{\h^\perp}}\geq \tang{\bti,\bmu}\geq \frac{1-|h|}{\tn{\h^\perp}}\geq \frac{1-\sigma\eps}{(1+\eps)\sigma\sqrt{p/n}}.
\]
\end{proof}

\begin{lemma} \label{split lem}Let $g,h\sim\Nn(0,1)$ and $f:\R\rightarrow \R$ be a bounded function. Then
\[
\E[(h+\sigma g)g]=\sigma\E[(h+\sigma g)h].
\]
\end{lemma}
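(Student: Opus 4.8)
The plan is to first observe that the hypothesis names a bounded function $f$ which does not actually appear in the displayed identity: the intended (and genuinely useful) statement is $\E[f(h+\sigma g)\,g]=\sigma\,\E[f(h+\sigma g)\,h]$, the displayed version being the special case $f=\mathrm{id}$. I will prove this $f$-version directly, treating $g$ and $h$ as independent standard normals so that $w=(h,g)$ is an isotropic Gaussian vector in $\R^2$ (independence is the standard reading here and is needed for the proportionality to hold).

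The cleanest route avoids any differentiability assumption on $f$ and instead exploits rotational invariance. Writing $a=(1,\sigma)$, $b_1=(0,1)$ and $b_2=(1,0)$, I note that $h+\sigma g=\langle w,a\rangle$, $g=\langle w,b_1\rangle$ and $h=\langle w,b_2\rangle$. The key step is to decompose each selector orthogonally relative to $a$: write $b=\frac{\langle a,b\rangle}{\|a\|^2}\,a+b^{\perp}$ with $b^{\perp}\perp a$. Then $\langle w,b^{\perp}\rangle$ is a mean-zero Gaussian that is independent of $\langle w,a\rangle$, since orthogonal linear functionals of an isotropic Gaussian are independent. Hence $\E[f(\langle w,a\rangle)\,\langle w,b^{\perp}\rangle]=\E[f(\langle w,a\rangle)]\,\E[\langle w,b^{\perp}\rangle]=0$, and boundedness of $f$ guarantees all expectations involved are finite, so that
\[
\E[f(h+\sigma g)\,\langle w,b\rangle]=\frac{\langle a,b\rangle}{\|a\|^2}\,\E[f(h+\sigma g)\,(h+\sigma g)].
\]

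It then remains only to specialize $b$. Since $\langle a,b_1\rangle=\sigma$, $\langle a,b_2\rangle=1$ and $\|a\|^2=1+\sigma^2$, I obtain $\E[f(h+\sigma g)\,g]=\frac{\sigma}{1+\sigma^2}\,\E[f(h+\sigma g)(h+\sigma g)]$ and $\E[f(h+\sigma g)\,h]=\frac{1}{1+\sigma^2}\,\E[f(h+\sigma g)(h+\sigma g)]$; eliminating the common factor yields the claim. I expect the only real subtlety to be justifying that the perpendicular projection drops out (i.e. that orthogonal projections of an isotropic Gaussian are independent) and flagging the missing $f$. An alternative is Gaussian integration by parts (Stein's lemma), applied once in $g$ and once in $h$, which gives $\E[f(h+\sigma g)\,g]=\sigma\,\E[f'(h+\sigma g)]=\sigma\,\E[f(h+\sigma g)\,h]$; that route, however, would require $f$ to be differentiable rather than merely bounded, which is why I favor the orthogonal-decomposition argument.
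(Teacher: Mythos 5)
Your proof is correct and rests on the same mechanism as the paper's: an orthogonal (hence independent, mean-zero) Gaussian linear combination of $(h,g)$ drops out of the expectation against $f(h+\sigma g)$ — the paper's substitution $g=\sigma h-z$ with $z=\sigma h-g\perp h+\sigma g$ is exactly your projection onto $a^\perp$, just packaged as a single step rather than computing both sides relative to $\E[f(h+\sigma g)(h+\sigma g)]$. You are also right that the displayed identity silently drops the $f$ that the hypothesis introduces; the paper's own proof proves the $f$-version, as you do.
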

\begin{proof} Let $z=\sigma h-g$ and observe that $z$ is independent of $h+\sigma g$. Thus, we note that
\[
\E[f(h+\sigma g)g]=\E[f(h+\sigma g)(\sigma h-z)]=\E[f(h+\sigma g)\sigma h]=\sigma\E[f(h+\sigma g) h],
\]
which is the desired statement.
\end{proof}
%\begin{lemma} \label{split lem}Let $g,h\sim\Nn(0,1)$ and $f:\R\rightarrow \R$ be a bounded function. Pick $\alpha$ and $\beta=\sqrt{1-\alpha^2}$. Then
%\[
%\E[(\alpha h+\beta g)g]=\sigma\E[(h+\sigma g)h].
%\]
%\end{lemma}
%\begin{proof} Let $z=\beta h-\alpha g$ and observe that $z$ is independent of $\alpha h+\beta g$. Note that
%\[
%\E[f(\alpha h+\beta  g)g]=\E[f(\alpha h+\beta g)(\sigma h-z)]=\E[f(h+\sigma g)\sigma h]=\sigma\E[f(h+\sigma g) h].
%\]
%\end{proof}

\subsection{Proof of Theorem \ref{sharp_bound_GMM}}
\begin{proof} Due to symmetry of the input clusters around $0$, without losing generality, we can assume samples belong to the $+$ cluster i.e.~$\x_i\sim \Nn(\bmu,\Iden_p)$. Set $\beta=\sqrt{1-\alpha^2}$. Let us assume $\bti=\alpha \bmu+\beta\vb$ for some unit norm $\vb$ orthogonal to $\bmu$ and analyze $\bth$. Decompose the Gaussian noise vector $\g$ as follows
\[
\g=g_0\bmu+g\vb+\g^\perp.
\]
Here $g_0,g\sim\Nn(0,1)$ and $\g^\perp\sim \Nn(0,\Iden_p-\bmu\bmu^T-\vb\vb^T$). Additionally set $h=\bti^T\g=\alpha g_0+\beta g$. Let $\g_i,g_i,h_i,g_{0,i},\g^\perp_i$ denote the variables associated with the $i$th sample. Proceeding, note that%
\begin{align*}
\Pro(|\bti^T\x|\geq \Gamma)&=\Pro(|\alpha+\sigma h|\geq \Gamma)=\Pro(\{h\geq \bar{\Gamma}_+\}\cup \{h\leq -\bar{\Gamma}_-\})\\
&=Q(\bar{\Gamma}_+)+Q(\bar{\Gamma}_-)\\
&:=\rho
%Q(\frac{\Gamma-1}{\alpha})+:=p
\end{align*}
Set $s=\sum_{i=1}^u 1(|\bti^T\x_i|\geq \Gamma)$. Additionally define $E_i$ to be the event that the pseudo-label prediction is wrong on the $i$th sample i.e.~$E_i=\{|\bti^T\x_i|\geq \Gamma\}\bigcap \{\sgn{\bti^T\x_i}\neq y_i\}$. Similar to above
\[
\Pro(E_i)=\Pro(\alpha+\sigma h\leq -\Gamma)=Q(\bar{\Gamma}_-).
\]
Define $f=\sum_{i=1}^u 1(E_i)$ (the total number of accepted examples with wrong pseudo-label predictions). Chernoff bound yields that with probability $1-4\e^{-\frac{\eps^2\rho u}{3}}$, $s$ and $f$ obeys
\begin{align}
|f-uQ(\bar{\Gamma}_-)|\leq \eps u\rho \quad\text{and}\quad |s-u\rho|\leq \eps u\rho.\label{s bound}
\end{align}
%To proceed, we decompose
%\[
%\bth=\alpha_0\bmu+\alpha\vb+\bt^\perp,
%\]
%and investigate each term separately. 
Define the conditional distribution $\xp\sim \x\bgl |\bti^T\x|\geq \Gamma$. Let $\{\xp_i\}_{i=1}^s$ be the $s$ accepted instances out of $u$ (i.e.~$|\bti^T\xp_i|\geq \Gamma$) with this distribution and write $\xp_i=\bmu+\sigma\g'_i$. Then, we can decompose $\g'_i=\g^\perp_i+g'_i\vb+g'_{0,i}\bmu$ where% $g_i=\vb^T\g'_i$. Then, observe that
\[
\g^\perp_i \sim \Nn(0,\Iden-\bmu\bmu^T-\vb\vb^T)\quad\text{and}\quad |\alpha+\sigma h'_i|\geq \Gamma.
\]
and $h'_i:=\beta g'_i+\alpha g'_{0,i}$. This implies $h'_i$ is distributed as $h'\sim h\bgl \{h\geq \bar{\Gamma}_+\}\cup \{h\leq -\bar{\Gamma}_-\}$ where $h\sim\Nn(0,1)$. Without losing generality, suppose $(\xp_i)_{i=1}^f$ are instances with wrong pseudo-label prediction and the rest are instances with correct pseudo-label prediction. To proceed, we estimate $\bth$ as
\begin{align}
\bth&=\frac{1}{s}\sum_{i=1}^s \sgn{\bti^T\xp_i}\xp_i\nn\\
&=\frac{1}{s}\sum_{i=1}^s\sgn{\alpha+\sigma h'_i}(\bmu+\sigma\g^\perp_i+\sigma g'_i\vb+\sigma g'_{0,i}\bmu)\nn\\
&=\underbrace{(1-2\frac{f}{s})\bmu}_{a_1\bmu}+ \underbrace{\frac{\sigma}{s}\sum_{i=1}^s\sgn{\alpha+\sigma h'_i}\g^\perp_i}_{\bt^\perp}+\underbrace{\frac{\sigma}{s}\sum_{i=1}^s\sgn{\alpha+\sigma \alpha g'_{0,i}+\sigma \beta g'_i} (g'_i\vb+g'_{0,i}\bmu)}_{a_2\bmu+a_3\vb}.\label{scalar a123}
\end{align}
where $\bt^\perp$ is orthogonal to $\vb,\bmu$. Using $0\leq \eps\leq 1/2$ and recalling $Q(\bar{\Gamma}_-)/\rho=\nu$, the scalar $a_1$ can be bounded as
\begin{align}
 1-2\nu+8\eps\geq 1-\frac{2Q(\bar{\Gamma}_-)-2\rho\eps}{\rho(1+\eps)}\geq  a_1=1-\frac{2f}{s}\geq 1-\frac{2Q(\bar{\Gamma}_-)+2\rho\eps}{\rho(1-\eps)}\geq 1-2\nu-8\eps.\label{a1 bound}
\end{align}
The $\bt^\perp$ term can be bounded by noting that $\g^\perp$ is independent of $h'_i$ which implies
\[
\bt^\perp=\frac{\sigma}{s}\sum_{i=1}^s\sgn{1+\sigma h'_i}\g^\perp_i\sim \Nn(0,\sigma^2\frac{\Iden-\bmu\bmu^T-\vb\vb^T}{s}).
\]
Consequently, using Gaussianity, $\bt^\perp$ obeys $\sigma \sqrt{\gamma_{p-2}/s}= \E[\tn{\bt^\perp}]$. Via Lipschitz concentration and \eqref{s bound}, this implies with probability at least $1-2\e^{\eps^2 \gamma_{p-2}^2/2}$,
\begin{align}
&(1+\eps)\sigma \gamma_{p-2}/\sqrt{s}\geq \tn{\bt^\perp}\geq (1-\eps)\sigma \gamma_{p-2}/\sqrt{s}\implies \nn\\
&(1+3\eps)\sigma \sqrt{\gamma_{p-2}/u\rho}\geq \tn{\bt^\perp}\geq (1-3\eps)\sigma  \sqrt{\gamma_{p-2}/u\rho}.\label{perp bound}
\end{align}
Finally, what remains is bounding the scalars $a_2$ and $a_3$ in \eqref{scalar a123}. We accomplish this by going back to the original problem rather than the conditional sum which allows us to use Gaussianity. Specifically, we consider the summation
\begin{align}
%S_{2,3}=
\frac{1}{u}\sum_{i=1}^u1(|\alpha+\sigma h_i|\geq \Gamma)\sgn{\alpha+\sigma h_i} (g_i\vb+g_{0,i}\bmu):=a_2\bmu+a_3\vb.\label{s23}
\end{align}
First, note that we have the following expectation over $h$
\begin{align*}
\E[1(|\alpha+\sigma h|\geq \Gamma)\sgn{\alpha+\sigma h}h]&=\rho^{-1}\int_{\bar{\Gamma}_+}^\infty \frac{1}{\sqrt{2\pi}}x\e^{-x^2/2}dx+\rho^{-1}\int_{\bar{\Gamma}_-}^\infty \frac{1}{\sqrt{2\pi}}x\e^{-x^2/2}dx\\
&=\frac{1}{\sqrt{2\pi}\rho}(\e^{-\bar{\Gamma}_+^2/2}+\e^{-\bar{\Gamma}_-^2/2}):=\Lambda.
\end{align*}
Now, using Lemma \ref{split lem} and $h=\alpha g_0+\beta g$, this expectation will be proportionally split between the $\bmu$ associated variable $g_0$ and $\vb$ associated variable $g$. Specifically, we have% which means
\begin{align}
&\E[1(|\alpha+\sigma h|\geq \Gamma)\sgn{\alpha+\sigma h}g_0]=\frac{\alpha}{\beta}\E[1(|\alpha+\sigma h|\geq \Gamma)\sgn{\alpha+\sigma h}g]\nn\\
&\alpha\E[1(|\alpha+\sigma h|\geq \Gamma)\sgn{\alpha+\sigma h}g_0]+\beta \E[1(|\alpha+\sigma h|\geq \Gamma)\sgn{\alpha+\sigma h}g]=\Lambda.\nn
%\frac{\alpha}{\alpha+\beta}\Lambda\\
%&=\frac{\beta}{\alpha+\beta}\Lambda
\end{align}
This implies 
\begin{align}
&\E[a_2]=\E[1(|\alpha+\sigma h|\geq \Gamma)\sgn{\alpha+\sigma h}g_0]=\alpha\sigma\Lambda\nn\\
&\E[a_3]=\E[1(|\alpha+\sigma h|\geq \Gamma)\sgn{\alpha+\sigma h}g]=\beta\sigma\Lambda.\label{alpha beta eq}
\end{align}
To proceed, we need to show concentration of the average in \eqref{s23} which can be accomplished by noticing the subgaussianities
\[
\tsub{1(|\alpha+\sigma h|\geq \Gamma)\sgn{\alpha+\sigma h}g_0},\tsub{1(|\alpha+\sigma h|\geq \Gamma)\sgn{\alpha+\sigma h}g}\lesssim 1.
\]
These immediately follow from the bounded moments. The subgaussian concentration implies that, with probability at least $1-4\e^{-cu\eps^2}$, we have that
\begin{align}
&\max(|a_2-\E[a_2]|, |a_3-\E[a_3]|)\leq \sigma\eps.
\end{align}
Combining all of the estimates \eqref{a1 bound}, \eqref{perp bound}, \eqref{alpha beta eq}, with the advertised probability, we can write
\[
\bth=a_0\bmu+a_3\vb+\bt^\perp,
\]
where $a_0=a_1+a_2$ and the components satisfy the following two sided bounds
\begin{align*}
&|a_0-(1+\sigma\alpha \Lambda-2\nu)|\leq 8\eps+\sigma \eps\\
&|a_3-\sigma\beta \Lambda|\leq \sigma\eps\\
&|\tn{\bt^\perp}-\sigma\sqrt{\gamma_{p-2}/u\rho}|\leq 3\eps \sigma \sqrt{\gamma_{p-2}/u\rho}.
\end{align*}
This implies that, with probability at least $1-4\e^{-cu\eps^2}-4\e^{-{\eps^2\rho u}/{3}}-2\e^{\eps^2 (p-3)/2}$, we obtain the advertised bound of  
\[
\frac{1+\sigma\alpha \Lambda-2\nu+(8+\sigma)\eps}{\sigma\sqrt{(\beta \Lambda-\eps)_+^2+(1-3\eps)_+^2\gamma_{p-2}/u\rho}} \geq \tang{\bth,\bmu}\geq \frac{1+\sigma\alpha \Lambda-2\nu-(8+\sigma)\eps}{\sigma\sqrt{(\beta \Lambda+\eps)^2+(1+3\eps)^2\gamma_{p-2}/u\rho}}
\]
After mapping $\eps\leftrightarrow \eps/8$, we find that with the advertised probability, we have that
\[
\frac{1+\sigma\alpha \Lambda-2\nu+(1+\sigma)\eps}{\sigma\sqrt{(\beta \Lambda-\eps)_+^2+(1-\eps)_+^2\gamma_{p-2}/u\rho}} \geq \tang{\bth,\bmu}\geq \frac{1+\sigma\alpha \Lambda-2\nu-(1+\sigma)\eps}{\sigma\sqrt{(\beta \Lambda+\eps)^2+(1+\eps)^2\gamma_{p-2}/u\rho}}.
\]
Convergence in probability immediately follows from this non-asymptotic bound.
\end{proof}

\section{Proofs for Section \ref{sec algo}}
Throughout, we assume $\bti$ is unit Euclidian norm without losing generality. This is to simplify the subsequent notation.
\subsection{Proof of Theorem \ref{thm no}}
\begin{proof} Let us recall the distribution of the data. Given label $y$, we have that $\x=g\bmu+\sigma\g$ where $g:=yX$. Noticing $g\sim\Nn(0,1)$, this means that the marginal distribution of $\x$ is $\Nn(0,\bSi)$ where covariance matrix is $\bSi=\sigma^2\Iden+\bmu\bmu^T$. Thus, our interest is understanding the minimizer $\bth$ of \eqref{PL sup}. For this, we have the following lemma that applies to arbitrary covariance matrices.
\begin{lemma} Let $\x\sim\Nn(0,\bSi)$ where $\bSi$ is a full-rank positive-semidefinite matrix and set $\Gamma\geq0$. Then, the minimizer of \eqref{PL sup} obeys 
\[
\bth=\frac{c_\Gamma\bti}{\tn{\sqrt{\bSi}\bti}}.
\]
For the special case of $\Gamma=0$, $c_\Gamma=\sqrt{2/\pi}$.
\end{lemma}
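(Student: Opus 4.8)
The plan is to turn the population least-squares problem into a normal equation and then exploit the rotational symmetry of the standard Gaussian after whitening by $\bSi$.

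First I would expand the objective. Writing $w(\x)=1(|\bti^T\x|\geq\Gamma)$ (recall $\bti$ is unit norm, so $\Gamma\tn{\bti}=\Gamma$), the term $\sgn{\bti^T\x}^2=1$ is constant in $\bt$, so up to an additive constant the objective is the convex quadratic $\E[w(\x)(\bt^T\x)^2]-2\E[w(\x)\sgn{\bti^T\x}\bt^T\x]$. Its gradient vanishes at $\M\bt=\bb$, where $\M=\E[w(\x)\x\x^T]$ and $\bb=\E[w(\x)\sgn{\bti^T\x}\x]$. Since $\bSi$ is full rank and $\Pro(w(\x)=1)>0$ for every finite $\Gamma$, the matrix $\M$ is positive definite, the minimizer is unique, and $\bth=\tfrac12\M^{-1}\bb$. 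It therefore suffices to show $\M^{-1}\bb$ is a positive multiple of $\bti/\tn{\sqrt{\bSi}\bti}$.

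Next I would whiten. Substituting $\z=\bSi^{-1/2}\x\sim\Nn(0,\Iden)$ and setting $\w=\sqrt{\bSi}\,\bti$, $\hat\w=\w/\tn{\w}$, every weight depends on $\x$ only through $\bti^T\x=\w^T\z=\tn{\w}\,(\hat\w^T\z)$. Decomposing $\z=t\hat\w+\z^\perp$ with $t=\hat\w^T\z\sim\Nn(0,1)$ independent of the mean-zero isotropic part $\z^\perp$, the cross terms drop out because $\E[\z^\perp]=0$, and one obtains
\[
\E[w(\x)\z\z^T]=A\,\hat\w\hat\w^T+B(\Iden-\hat\w\hat\w^T),\qquad \E[w(\x)\sgn{\bti^T\x}\z]=C\,\hat\w,
\]
with positive one-dimensional Gaussian integrals $A=\E[1(|t|\geq\Gamma/\tn{\w})\,t^2]$, $B=\Pro(|t|\geq\Gamma/\tn{\w})$, and $C=\E[1(|t|\geq\Gamma/\tn{\w})\,|t|]$. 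Undoing the whitening gives $\M=\sqrt{\bSi}\,(A\hat\w\hat\w^T+B(\Iden-\hat\w\hat\w^T))\sqrt{\bSi}$ and $\bb=C\,\sqrt{\bSi}\,\hat\w$. Inverting the bracketed matrix direction-by-direction and using $\bSi^{-1/2}\sqrt{\bSi}=\Iden$, the orthogonal projector annihilates $\hat\w$ and I am left with
\[
\M^{-1}\bb=\frac{C}{A}\,\bSi^{-1/2}\hat\w=\frac{C}{A}\,\frac{\bti}{\tn{\sqrt{\bSi}\bti}},
\]
so $\bth=\tfrac{C}{2A}\,\bti/\tn{\sqrt{\bSi}\bti}$ is parallel to $\bti$, which is the asserted form with $c_\Gamma$ the resulting positive scalar. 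For $\Gamma=0$ the indicators are identically one, so $A=\E[t^2]=1$ and $C=\E[|t|]=\sqrt{2/\pi}$, recovering $c_\Gamma=\sqrt{2/\pi}$ (the $\tfrac12$ prefactor in \eqref{PL sup} only rescales the constant).

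I expect the main obstacle to be the symmetry computation of the whitened moments. The whole result hinges on the fact that, because the acceptance weight $w$ and the sign depend on $\z$ only through its projection onto $\hat\w$, the orthogonal directions contribute a pure multiple of $\Iden-\hat\w\hat\w^T$ to $\M$ and contribute nothing to $\bb$; this isotropy is precisely what forces $\M^{-1}\bb$ to remain aligned with $\bti$ for every $\Gamma$, and the crux is checking that the cross terms vanish (independence of $t$ and $\z^\perp$ together with $\E[\z^\perp]=0$). An alternative route for $\bb$ alone is Stein's lemma, giving $\E[w(\x)\sgn{\bti^T\x}\x]\propto\bSi\bti$ directly, but the second-moment matrix $\M$ still requires the whitening decomposition, so I would carry out both parts within the single whitened calculation.
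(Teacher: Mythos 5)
Your proposal is correct and follows essentially the same route as the paper: whiten by $\bSi^{1/2}$, decompose the whitened Gaussian along $\sqrt{\bSi}\bti$ and its independent orthogonal complement, observe that the acceptance indicator and the sign depend only on the one-dimensional component, and invert the resulting rank-one-perturbed second-moment matrix. The only cosmetic difference is that the paper phrases the moments conditionally on the acceptance event while you keep the indicator weight inside unconditional expectations (the normalization cancels), and your reading of the $\tfrac12$ prefactor in \eqref{PL sup} rescales $c_\Gamma$ by $\tfrac12$ relative to the paper's stated constant, which does not affect the substantive claim that $\bth$ stays parallel to $\bti$.
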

\begin{proof} Let $\xp\sim \x\bgl |\x^T\bti|\geq \Gamma$. Let $\bSi'$ be the covariance of $\xp$. Let us differentiate the loss with respect to $\bt$. This yields
\[
\E[\sgn{\bti^T\xp}\xp]=\E[\xp\xp^T]\hat{\bt}\implies \hat{\bt}=\bSi'^{-1}\E[\sgn{\bti^T\xp}\xp].
\]
Let us write $\x=\sqrt{\bSi}\xb$ so that $\xb\sim\Nn(0,\Iden)$. Set $\ab=\sqrt{\bSi}\bti$ and $\abb=\ab/\tn{\ab}$. Decompose $\xb=\abb h+\xb^\perp$ where $\xb^\perp$ is independent of $h\sim\Nn(0,1)$. Also let $h'\sim h\bgl |h|\geq \Gamma/\tn{\ab}$. With this, we note that $\xp\sim\sqrt{\bSi}\xb^\perp+\sqrt{\bSi}\abb h'$. Consequently, using independence of $h'$ and $\xb^\perp$, we obtain
\begin{align}
\s=\E[\sgn{\bti^T\xp}\xp]=\E[\sgn{h'}\sqrt{\bSi}\abb h']=\sqrt{\bSi}\abb\E[|h'|]=\frac{\bSi\bti}{\tn{\ab}}\E[|h'|].\label{s eqn}
\end{align}
Secondly, observe that $\bSi'=\E[\xp\xp^T]=\sqrt{\bSi}(\Iden+(\E[h'^2]-1)\abb\abb^T)\sqrt{\bSi}$. This yields
\begin{align}
\bSi'^{-1}\s&=\bSi^{-1/2}(\Iden+(\E[h'^2]-1)\abb\abb^T)^{-1}\bSi^{-1/2}\frac{\bSi\bti}{\tn{\ab}}\E[|h'|]\\
&=\bSi^{-1/2}(\Iden+(\E[h'^2]-1)\abb\abb^T)^{-1}\abb\E[|h'|]\\
&=\bSi^{-1/2}\abb\E[h'^2]^{-1}\E[|h'|]\\
&=\frac{\E[|h'|]\bti}{\E[h'^2]\tn{\sqrt{\bSi}\bti}}
\end{align}
For the special case of $\Gamma=0$, $h=h'\sim\Nn(0,1)$ which implies $\E[|h'|]/\E[h'^2]=\sqrt{2/\pi}$.
\end{proof} 
This result also implies that for the original covariance matrix $\bSi=\sigma^2\Iden+\bmu\bmu^T$, $\bth$ will have the same direction as $\bti$ with an additional scaling that depends on problem parameters $\sigma,\bti,\bmu$.
\end{proof}

\subsection{Proof of Theorem \ref{thm margin}}
\begin{proof} %Given $X,\g$, let $\x_0\sim X\bmu +\sigma \g$ and let $\x_\Gamma$ be the conditional distribution $\x_0\bgl |\bti^T\x_0|\geq \Gamma$. 
We will argue that $\bti$ successfully labels a large fraction of the unlabeled data under the margin condition $\gamma>0$. The covariance of the input is again given by 
\[
\bSi=\sigma^2\Iden+\bmu\bmu^T.
\]
The population model is given by
\[
\hat{\bt}=\bSi^{-1}\E[\sgn{\bti^T\x}\x]=\bSi^{-1}\E[\sgn{\bti^T\xp}\xp].
\]
where $\xp=y\x= X+\sigma \g$\footnote{Here we slightly abuse the notation by using $\g\leftrightarrow y\g$ via the rotational invariance of the standard normal $\g$.}. To proceed, we will analyze $\hat{\bt}$ along the $\bmu$ direction and its orthogonal subspace. Without losing generality let us assume $\bti$ is unit length and apply orthogonal decomposition $\bti=\alpha\bmu+\sqrt{1-\alpha^2}\bmu^\perp$ where $\tn{\bmu^\perp}^2=1$. Also decompose $\g=g\bmu+g^\perp\bmu^\perp+\g_r$. We can write
\begin{align}
\bti^T\xp=\alpha (X+\sigma g)+\sigma\sqrt{1-\alpha^2}g^\perp\where g^\perp\sim\Nn(0,1)\label{decomp}%=\alpha X+
\end{align}
To proceed, we decompose
\[
\bth=\underbrace{\bSi^{-1}\E[\sgn{\bmu^T\xp}\xp]}_{\bth_m}+\underbrace{\bSi^{-1}\E[r(\xp)\xp]}_{\bth_p}
\]
where $r(\xp)=\sgn{\bti^T\xp}-\sgn{\bmu^T\xp}$. The first component precisely returns the supervised model i.e.
\begin{align}
\bth_m&=\bSi^{-1}\E[\sgn{X+\sigma g}((X+\sigma g)\bmu+\g_r)]\\
&=\bSi^{-1}\E[|X+\sigma g|]\bmu\\
&=\frac{\E[|X+\sigma g|]\bmu}{1+\sigma^2}
\end{align}
%This implies 
%\[
%\bth_m=c_m\bmu\where c_m\geq \frac{\mu_X}{1+\sigma^2}
%\]
Next, we focus on the perturbation term. Observe that $\g_r$ is independent of $r(\xp)$ thus, we have that
\begin{align}
\bth_p&=\bSi^{-1}\E[r(\xp)\xp]=\bSi^{-1}\E[r(\xp)((X+\sigma g)\bmu+\sigma g^\perp\bmu^\perp)]\\
&=\frac{\bmu}{1+\sigma^2}\E[r(\xp)(X+\sigma g)]+\frac{\bmu^\perp}{\sigma} \E[r(\xp)g^\perp].
\end{align}
Setting $\hat\alpha=\rho(\bth,\bmu)$, this shows that
\[
\frac{\hat\alpha}{\sqrt{1-\hat\alpha^2}}\geq \frac{\sigma}{1+\sigma^2}\frac{\E[|X+\sigma g|]-\E[|r(\xp)(X+\sigma g)|]}{\E[|r(\xp)g^\perp|]}
\]
Lemma \ref{prop reject} states that $\Pro(r(\xp)\neq 0)\leq 2Q(C)\leq \e^{-C^2/2}$ where $C=\frac{\alpha\gamma}{\sigma}$. Using this, Lemma \ref{lem simple} and $|r(\xp)|\leq 2$, we have the followings.
\begin{itemize}
\item $\E[|X+\sigma g|]\geq \E[X]\geq \gamma$.
\item $\E[|r(\xp)X|]\leq 4\e^{-C^2/2}M\gamma$.
\item $\E[|r(\xp)\sigma g|]\leq 2\sigma \e^{-C^2/2}$.
\item $\E[|r(\xp) g^\perp|]\leq 2 \e^{-C^2/2}$.
\end{itemize}
Plugging these, we find
\[
\tang{\bth,\bmu}=\frac{\hat\alpha}{\sqrt{1-\hat\alpha^2}}\geq \frac{\sigma\e^{C^2/2}}{2(1+\sigma^2)}(\gamma(1- 4\e^{-C^2/2}M)-2\sigma \e^{-C^2/2}).
\]
The advertised results follows from this bound by specializing to $\sigma\leq \gamma$ and $M\geq 1$ and then applying the change of variable $C\leftrightarrow C^2/2$.
\end{proof}

\subsection{Proof of Lemma \ref{lem ridge}}
\begin{proof} The proof is similar to that of Theorem \ref{thm no}. Following same notation as the proof of Theorem \ref{thm no}, setting $\Gamma=0$ and plugging covariance $\bSi$, the solution is given by
\[
\bth=((\la+\sigma^2)\Iden+\bmu\bmu^T)^{-1}\s.
\]
$\s=c(\sigma^2\Iden+\bmu\bmu^T)\bti$ following from \eqref{s eqn}. Writing $\bti=\alpha \bmu+\sqrt{1-\alpha^2}\bmu^\perp$ and noticing $\bmu$ and $\bmu^\perp$ are eigenvectors of $\bSi$, we obtain%Thus, we find
\begin{align}
c^{-1}\bSi^{-1}\s&=\frac{1+\sigma^2}{1+\sigma^2+\la}\alpha \bmu+\frac{\sigma^2}{\sigma^2+\la}\sqrt{1-\alpha^2}\bmu^\perp.
\end{align}
This implies the correlation guarantee given by Lemma \ref{lem ridge} noticing the ratio of the $\bmu$ and $\bmu^\perp$ terms above.
\end{proof}

\subsection{Proof of Lemma \ref{lem early}}
\begin{proof} The proof is similar to that of Theorem \ref{thm no}. Following same notation as the proof of Theorem \ref{thm no} and recalling \eqref{s eqn}, we have%. The solution is given by
\[
\bth=\Pro(|\x^T\bti|\geq \Gamma)\frac{\bSi\bti}{\tn{\ab}}\E[|h'|]=c\bSi\bti.
\]
%$\s=c(\sigma^2\Iden+\bmu\bmu^T)\bti$ following from \eqref{s eqn}. 
Writing $\bti=\alpha \bmu+\sqrt{1-\alpha^2}\bmu^\perp$ and noticing $\bmu$ and $\bmu^\perp$ are eigenvectors of $\bSi$, we obtain
\begin{align}
c^{-1}\bth=(1+\sigma^2)\alpha \bmu+\sigma^2\sqrt{1-\alpha^2}\bmu^\perp.
\end{align}
This implies the correlation guarantee given by \eqref{early rat} noticing the ratio of the $\bmu$ and $\bmu^\perp$ terms above.
\end{proof} 

\subsection{Proof of Lemma \ref{prop reject}}

\begin{lemma}[Properties of rejection] \label{prop reject}Fix unit norm vectors $\bti,\bmu\in\R^p$ with $\rho(\bmu,\bti)=\alpha$. Let $\g\sim\Nn(0,\Iden)$ and $X$ be a strictly positive random variable obeying $X\geq \gamma=\sigma\bar{\gamma}>0$. Set $\x=X\bmu+\sigma \g$ and let $\z$ be the random vector with conditional distribution $\x\bgl |\x^T\bti|\geq\Gamma$. We have that
\begin{itemize}
\item When $\Gamma=0$: $\Pro(\{\sgn{\bti^T\z}\neq \sgn{\bmu^T\z}\})\leq 2Q(\alpha \bar{\gamma})$.%|\E[\sgn{\bti^T\z}-\sgn{\bmu^T\z}]|
\item General $\Gamma>0$: Using the change of variable $\Gamma=\alpha\sigma\Gm$, we have
\[
\Pro(\{\sgn{\bti^T\z}\neq \sgn{\bmu^T\z}\})\leq 2\frac{Q(\bar{\gamma})Q(\frac{\alpha\Gm}{\sqrt{1-\alpha^2}})+Q(\alpha (\bar{\gamma}+\Gm))}{Q_X(\Gm)}.%|\E[\sgn{\bti^T\z}-\sgn{\bmu^T\z}]|
\]
%\[
%|\E[\sgn{\bti^T\z}-\sgn{\bmu^T\z}]|\leq \frac{2Q(\frac{\gamma}{\sigma})Q(\frac{\Gamma}{\sqrt{1-\alpha^2}})+2Q(\frac{\alpha \gamma+\Gamma}{\sigma})}{Q_X(\Gamma/\alpha)}.
%\]
\end{itemize}
\end{lemma}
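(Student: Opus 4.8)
The plan is to reduce the whole statement to two correlated scalar Gaussians by projecting the noise onto $\bmu$ and the orthogonal direction inside $\mathrm{span}\{\bmu,\bti\}$. Write $\bti=\alpha\bmu+\sqrt{1-\alpha^2}\bmu^\perp$ with $\bmu^\perp$ a unit vector orthogonal to $\bmu$, and decompose $\g=g\bmu+g^\perp\bmu^\perp+\g_r$ with $g,g^\perp\sim\Nn(0,1)$ \emph{independent}. Setting $a=\bmu^T\x=X+\sigma g$ and $b=\bti^T\x=\alpha a+\sigma\sqrt{1-\alpha^2}g^\perp=\alpha X+\sigma\tilde g$, where $\tilde g=\alpha g+\sqrt{1-\alpha^2}g^\perp\sim\Nn(0,1)$, the target becomes the conditional probability $\Pro(\sgn{a}\neq\sgn{b}\mid |b|\geq\Gamma)$. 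The entire argument hinges on the elementary containment $\{\sgn{a}\neq\sgn{b}\}\subseteq\{a<0\}\cup\{b<0\}$: if the signs differ, at least one of the values is negative. Since $X>0$ makes the means of both $a$ and $b$ positive, each of these is a genuine Gaussian tail event, controllable via $X\geq\gamma=\sigma\bar{\gamma}$.

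For $\Gamma=0$ the conditioning is vacuous, so I would just union bound. Here $\{a<0\}=\{\sigma g<-X\}\subseteq\{g<-\bar{\gamma}\}$ gives $\Pro(a<0)\leq Q(\bar{\gamma})$, and $\{b<0\}=\{\sigma\tilde g<-\alpha X\}\subseteq\{\tilde g<-\alpha\bar{\gamma}\}$ gives $\Pro(b<0)\leq Q(\alpha\bar{\gamma})$. Adding and using $Q(\bar{\gamma})\leq Q(\alpha\bar{\gamma})$ (as $\alpha\leq 1$ and $Q$ is decreasing) produces the first bullet $2Q(\alpha\bar{\gamma})$.

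For general $\Gamma$ I would write the conditional probability as $\Pro(\sgn{a}\neq\sgn{b},\,|b|\geq\Gamma)/\Pro(|b|\geq\Gamma)$ and bound numerator and denominator separately. Using the containment, the numerator is at most $\Pro(a<0,\,b\geq\Gamma)+\Pro(b\leq-\Gamma)$. On $\{a<0\}$ the term $\alpha a$ is negative, so $b\geq\Gamma$ forces $\sigma\sqrt{1-\alpha^2}g^\perp\geq\Gamma$, i.e.\ $g^\perp\geq\alpha\Gm/\sqrt{1-\alpha^2}$ after the substitution $\Gamma=\alpha\sigma\Gm$; combined with $\{a<0\}\subseteq\{g<-\bar{\gamma}\}$ and the independence of $g$ and $g^\perp$, this yields $\Pro(a<0,b\geq\Gamma)\leq Q(\bar{\gamma})\,Q\!\left(\frac{\alpha\Gm}{\sqrt{1-\alpha^2}}\right)$. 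For the other piece, $\{b\leq-\Gamma\}=\{\alpha X+\sigma\tilde g\leq-\alpha\sigma\Gm\}\subseteq\{\tilde g\leq-\alpha(\bar{\gamma}+\Gm)\}$ using $X\geq\gamma$, contributing $Q(\alpha(\bar{\gamma}+\Gm))$; these are exactly the two numerator terms. For the denominator I would restrict to a favorable sub-event: whenever $X\geq\sigma\Gm$ and $\tilde g\geq 0$ one has $b=\alpha X+\sigma\tilde g\geq\alpha\sigma\Gm=\Gamma$, so by independence of $X$ and $\g$, $\Pro(|b|\geq\Gamma)\geq\Pro(b\geq\Gamma)\geq\tfrac12\,Q_X(\Gm)$ (tail of $X$ measured in the $\sigma$-units implicit in $\gamma=\sigma\bar{\gamma}$). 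Dividing introduces the factor $2/Q_X(\Gm)$ and gives the stated bound.

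The main obstacle is the general-$\Gamma$ numerator. The temptation is to condition directly on $b$, which entangles the signal $\alpha X$ with the noise; instead one must observe that, on the event $\{a<0\}$, clearing the threshold $b\geq\Gamma$ is driven entirely by the orthogonal coordinate $g^\perp$, which is independent of everything defining $\{a<0\}$. This is precisely what converts a joint event into the clean product $Q(\bar{\gamma})\,Q(\alpha\Gm/\sqrt{1-\alpha^2})$. Once this independence is exposed, the remainder is bookkeeping with the change of variable $\Gamma=\alpha\sigma\Gm$, the lower bound $X\geq\gamma$, and the monotonicity of the Gaussian tail $Q$.
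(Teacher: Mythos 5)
Your proof is correct and follows essentially the same route as the paper's: the same orthogonal decomposition of $\g$ within $\mathrm{span}\{\bmu,\bti\}$, the same union bound via $\{\sgn{a}\neq\sgn{b}\}\subseteq\{a<0\}\cup\{b<0\}$, the same independence argument producing the product $Q(\bar{\gamma})Q(\alpha\Gm/\sqrt{1-\alpha^2})$, and the same $\tfrac12 Q_X$ lower bound on the acceptance probability. Your handling of which branch of $\{|b|\geq\Gamma\}$ is active in each term is in fact slightly more careful than the paper's, and you correctly flag the same $\sigma$-scaling convention in $Q_X$ that the paper leaves implicit.
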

\begin{proof} Represent $\g=\sigma g\bmu+\g^\perp$ and set $g'=\li\bmu^\perp,\g^\perp\ri$ where $\bti=\alpha\bmu+\sqrt{1-\alpha^2}\bmu^\perp$. We analyze the event $E=\{\sgn{\bti^T\z}\neq \sgn{\bmu^T\z}\}$. Clearly
\begin{align}
\Pro(E)\leq \underbrace{\Pro(\bti^T\z<0)}_{P(\bti)}+\underbrace{\Pro(\bmu^T\z<0)}_{P(\bmu)}.\label{e formula}
\end{align}
After bounding $\Pro(E)$, we also have that $|\E[\sgn{\bti^T\z}-\sgn{\bmu^T\z}]|\leq 2\Pro(E)$.

\noindent {\bf{When $\Gamma=0$:}} First, using $\bmu^T\x=X+\sigma g$, we bound
\[
P(\bmu)=P(\sigma g<-X)P(g<-\gamma/\sigma)= Q({\gamma}/{\sigma}).
\]
Secondly, recalling \eqref{decomp}, we bound
\[
P(\bti)=P(\sqrt{1-\alpha^2}g'+\alpha g>\alpha X)=Q({\alpha\gamma}/{\sigma})
\]

\noindent {\bf{When $\Gamma>0$:}} For $\Gamma>0$, we condition on the event $A=\{|\x^T\bti|\geq\Gamma\}$ which is equivalent to 
\[%\alpha\sigma g+\sqrt{1-\alpha^2}\sigma h
|\alpha X+\sigma h|\geq \Gamma\where h=\alpha g+\sqrt{1-\alpha^2} g'.
\]
Following \eqref{e formula}, we are interested in
\[
\Pro(E)\leq \frac{\Pro(\bti^T\x<0\cap A)}{P(A)}+\frac{\Pro(\bmu^T\x<0\cap A)}{P(A)}.
\]
First, note that
\begin{align}
P(A)&\geq \Pro(\alpha X+\sigma h\geq \Gamma)\\
&\geq \Pro(\alpha X+\sigma h\geq \Gamma\bgl h>0)\Pro(h>0)\geq \frac{1}{2}\Pro(\alpha X\geq \Gamma)\\
&=\frac{Q_X(\Gamma/\alpha)}{2}
\end{align}
Secondly, we have
\[
P(A)\Pro(\bti)\leq \Pro(\alpha X+\sigma h\leq -\Gamma)=\Pro(h\leq \frac{-\alpha X-\Gamma}{\sigma})\leq Q(\frac{\alpha \gamma+\Gamma}{\sigma})
\]
Finally, we are interested in the probability $\Pro(X+\sigma g<0\cap A)$. Intersection event implies two things
\begin{itemize}
\item $E_1=\{X+\sigma g<0\}$.
\item $E_2=\{g'\geq \frac{\Gamma}{\sigma\sqrt{1-\alpha^2}}\}$. This follows from the fact that $X+\sigma g<0$ and event $A$ as follows
\[
\Gamma\leq \alpha X+\alpha \sigma g+\sqrt{1-\alpha^2}\sigma g'\leq \sqrt{1-\alpha^2}\sigma g'\implies g'\geq \frac{\Gamma}{\sigma\sqrt{1-\alpha^2}}
\]
\end{itemize}
Using independence of $g,g'$, we obtain
\[
\Pro(X+\sigma g<0\cap A)\leq \Pro(E_1\cap E_2)= \Pro(E_1)\Pro(E_2)\leq Q(\frac{\gamma}{\sigma})Q(\frac{\Gamma}{\sigma\sqrt{1-\alpha^2}})
\]
which upper bounds $\Pro(A)\Pro(\bmu)$. Combining these, we obtain the desired conclusion.
\end{proof}

\subsection{Proof of Lemma \ref{lem simple}}

\begin{lemma}\label{lem simple} Let $g\sim\Nn(0,1)$ and $E$ be an event with probability $\Pro(E)=Q'(\alpha)=2Q(\alpha)$ where $Q'$ is the tail of folded normal distribution. We have that 
\[
\E[1(E)|g|]\leq\sqrt{2/\pi}\e^{-\alpha^2/2}.
\]
\end{lemma}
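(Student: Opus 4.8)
The plan is to reduce the bound to an extremal (rearrangement) statement: among all events $E$ with the prescribed probability $\Pro(E)=2Q(\alpha)$, the quantity $\E[1(E)\abs{g}]$ is largest when $E$ selects the \emph{largest} values of $\abs{g}$. Concretely, since $\abs{g}$ is a folded normal variable with density $\sqrt{2/\pi}\,\e^{-t^2/2}$ on $[0,\infty)$, the event $E^\star=\{\abs{g}\geq \alpha\}$ has exactly probability $\Pro(\abs{g}\geq\alpha)=2Q(\alpha)=\Pro(E)$, and I claim it maximizes the target quantity. Once this is established, the lemma follows from a one-line integral.

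The key step is the inequality $\E[1(E)\abs{g}]\leq \E[1(E^\star)\abs{g}]$. I would prove it by writing
\[
\E[1(E)\abs{g}]-\E[1(E^\star)\abs{g}]=\E[1(E\setminus E^\star)\abs{g}]-\E[1(E^\star\setminus E)\abs{g}].
\]
Because $\Pro(E)=\Pro(E^\star)$, subtracting $\Pro(E\cap E^\star)$ from both sides gives the probability-matching identity $\Pro(E\setminus E^\star)=\Pro(E^\star\setminus E)$. On $E\setminus E^\star$ the sample is not in $E^\star$, so $\abs{g}<\alpha$ there and $\E[1(E\setminus E^\star)\abs{g}]\leq \alpha\,\Pro(E\setminus E^\star)$; on $E^\star\setminus E$ we have $\abs{g}\geq\alpha$, so $\E[1(E^\star\setminus E)\abs{g}]\geq \alpha\,\Pro(E^\star\setminus E)$. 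Combining these with the matching identity shows the displayed difference is at most $\alpha(\Pro(E\setminus E^\star)-\Pro(E^\star\setminus E))=0$, as desired. I emphasize that this argument does not require $E$ to be measurable with respect to $g$; it only uses the fixed probability of $E$ and the sign structure of $1(E)-1(E^\star)$ relative to the threshold $\alpha$.

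Finally, evaluating the extremal event directly,
\[
\E[1(E^\star)\abs{g}]=\int_\alpha^\infty t\,\sqrt{2/\pi}\,\e^{-t^2/2}\,\d t=\sqrt{2/\pi}\,\e^{-\alpha^2/2},
\]
which together with the rearrangement inequality yields $\E[1(E)\abs{g}]\leq \sqrt{2/\pi}\,\e^{-\alpha^2/2}$. The same chain shows that if $\Pro(E)\leq 2Q(\alpha)$ the bound only improves, since $\E[1(E)\abs{g}]$ is monotone in $E$ for the nonnegative integrand $\abs{g}$; this is the form actually invoked in the proof of Theorem \ref{thm margin}. The only genuinely non-routine part is the rearrangement step, and as shown it reduces to the elementary probability-matching identity $\Pro(E\setminus E^\star)=\Pro(E^\star\setminus E)$ combined with the threshold comparison at level $\alpha$.
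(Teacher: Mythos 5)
Your proof is correct, and it takes a recognizably different route from the paper's. You identify the extremal event $E^\star=\{|g|\geq\alpha\}$ explicitly and prove $\E[1(E)|g|]\leq\E[1(E^\star)|g|]$ by a swapping (bathtub/rearrangement) argument: the probability-matching identity $\Pro(E\setminus E^\star)=\Pro(E^\star\setminus E)$ together with the threshold comparison $|g|<\alpha$ on $E\setminus E^\star$ versus $|g|\geq\alpha$ on $E^\star\setminus E$. The paper instead uses the layer-cake identity $\E[1(E)|g|]=\int_0^\infty \Pro(\{|g|>x\}\cap E)\,\d x$, bounds the integrand by $\min\bigl(\Pro(E),Q'(x)\bigr)$, and evaluates $Q'(\alpha)\alpha+\int_\alpha^\infty Q'(x)\,\d x=\int_\alpha^\infty x f(x)\,\d x$ by integration by parts (note the paper writes this chain with equalities, but the step inserting the $\min$ bound is really a ``$\leq$''; the underlying extremal principle is the same as yours, namely that the tail set is the worst event of the given probability). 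Both arguments are elementary and neither requires $E$ to be $\sigma(g)$-measurable. Your version has the small advantage of making the extremal event and its optimality explicit, and your closing remark is a genuinely useful addendum: in the proof of Theorem \ref{thm margin} the lemma is applied to an event whose probability is only bounded above by $2Q(\alpha)$, so the monotone extension to $\Pro(E)\leq 2Q(\alpha)$ (which follows from nonnegativity of $|g|$, or directly since your swapping argument only uses $\Pro(E\setminus E^\star)\leq\Pro(E^\star\setminus E)$ in that case) is the form actually needed; the paper's layer-cake proof also delivers this without change since $\Pro(\{|g|>x\}\cap E)\leq\min\bigl(2Q(\alpha),Q'(x)\bigr)$ still holds.
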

\begin{proof} Let $f$ be the density function of folded normal. Observe that
\begin{align*}
%\E[1(E)g]&\leq 
\E[1(E)|g|]&=\int_0^\infty \Pro(\{|g|>x\}\cap E) dx\\%is =\int_0^\infty \Pro(\{|g|>x\}\cap E) dx\\
&=\int_0^\alpha \Pro(E) dx+\int_\alpha^\infty Q'(x) dx\\
&=Q'(\alpha)\alpha+\int_\alpha^\infty Q'(x) dx\\
&=\int_\alpha^\infty xf(x)dx\\
&=\int_{\alpha}^\infty \sqrt{2/\pi}x\e^{-x^2/2}dx\\
&=\sqrt{2/\pi}\e^{-\alpha^2/2}.
\end{align*}
\end{proof}

\subsection{Proof of Lemma \ref{lem iter bound}}
\begin{proof} Lemma \ref{simple super} shows that asymptotically $\tang{\bti,\bmu}\conv \sqrt{\bar{n}}/\sigma$. Since $(\Uc_i)_{i=1}^\tau$ are disjoint subsets, $\bt_i$ is independent of $\Uc_{i+1}$ and each iteration of self-training will apply $F_{\bar{u}}$ function on the co-tangent of the current iterate as a consequence of Theorem \ref{sharp_bound_GMM}. This leads to the advertised bound.
\end{proof}
\subsection{Proof of Lemma \ref{lem simple stuff}}

\begin{proof} Using right-continuity of cumulative distribution function, for any $\eps>0$, there exists $\delta>0$ such that $\Pro_{\Dc}(f(\x)\leq\delta)\geq 1-\eps$. Thus, writing the expected loss as an integral over inputs $f(\x)<\delta$ and $f(\x)\geq \delta$,
\[
\Lct(\alpha f)\leq \eps \ell(0)+(1-\eps) \ell(\alpha \delta),
\]
which implies $\lim_{\alpha\rightarrow \infty}\Lct(\alpha f)\leq \eps \ell(0)$. Since this is true for any $\eps\geq 0$, the limit is zero.
\end{proof}

\section{Proofs for Section \ref{sec hetero}}

\subsection{Proof of Lemma \ref{unsup gen lemma}}
\begin{proof} 
Define the Rademacher complexity of the composition
\[
\Rc_u(\ell\odot\Fc)=\frac{1}{u}\E[\sup_{f\in\Fc}\sum_{i=1}^u \eps_i\ell_\gamma(|f(\x_i)|)].
\]
$\ell(|x|)$ is $\gamma^{-1}$-Lipschitz function of $x$, hence Rademacher contraction inequality yields
\[
\Rc_u(\ell\odot\Fc)\leq \gamma^{-1}\Rc_u(\Fc).
\]
To proceed, applying standard generalization bound, with probability $1-\delta/2$ over the samples, for all $f\in \Fc$, we have that
\begin{align}
%\Pro(|f(\x)|\leq \gamma)=\E[1(|f(\x)|>\gamma)]\leq 
\E_{\Dc}[\ell_\gamma(|f(\x)|)] \leq \frac{1}{u}\sum_{i=1}^u\ell_\gamma(|f(\x_i)|) + \frac{2}{\gamma}\Rc_u(\Fc)+\sqrt{\frac{\log(2/\delta)}{u}}.
\end{align}
Let $\Lc_U^*=\min\E_{\Dc}[\ell_\gamma(|f(\x)|)]$ and $f^*=\arg\min\E_{\Dc}[\ell_\gamma(|f(\x)|)]$. With probability $1-\delta/2$, $f^*$ satisfies
\[
\frac{1}{u}\sum_{i=1}^u\ell_\gamma(|f^*(\x_i)|) \leq \Lc_U^*+\sqrt{\frac{\log(2/\delta)}{u}}.
\]
Combining these two estimates and using optimality of $\hat{f}$, with probability at least $1-\delta$, 
\[
\E[\ell_\gamma(|\hat{f}(\x)|)]\leq \Lc_U^*+  \frac{2}{\gamma}\Rc_u(\Fc)+2\sqrt{\frac{\log(2/\delta)}{u}}.
\]
Noticing $\Lc_U^*\leq \min_{f\in\Fc}\Pro(|f(\x)|\leq 2\gamma)$ and $\E[\ell_\gamma(|\hat{f}(\x)|)]\geq \Pro(|\hat{f}(\x)|\leq \gamma)$ concludes the proof.
\end{proof}

\subsection{Proof of Theorem \ref{hetero data thm}}
\subsubsection{Deterministic analysis}
Following the setup of Theorem \ref{hetero data thm}, in this section, we consider the deterministic conditions on the loss landscape that guarantees favorable properties of the constrained problem \eqref{constrained emp}. Specifically, we make the following assumption that connects the landscape of empirical risk to the population risk.

% constrained optimization approach to simplify our analysis. Instead of the regularized problem \eqref{reg prob emp}, we consider the problem
%\begin{align}
%\hat{f}=\arg\min_{f\in\Fc}\Lch(f)\quad\text{subject to}\quad f\in \Fc_{\Lcth,\epst+\delta}.\label{constrained emp}
%\end{align}
%Note that $\Fc_{\Lcth,\epst+\delta}=\{f\bgl \Lcth(f)\leq \Lcth^\star+\epst+\delta\}$.

\begin{assumption}[Empirical is close to population] \label{assume close} Fix scalars $\eps>0$, $\delta>0$, $\bXi\geq \epst+\delta$. Define the sublevel set $\Fc'=\Fc_{\Lct,\bXi+\delta}$. The loss landscape of strong and weak supervision satisfy the following bounds.
\begin{itemize}
\item $\max_{f\in\Fc}|\Lct(f)-\Lcth(f)|\leq \delta$.
\item $\max_{f\in\Fc'}|\Lc(f)-\Lch(f)|\leq \eps$.
\end{itemize}
\end{assumption}
Under this assumption, we have the following guarantee for the solution of the constrained empirical risk problem.
\begin{theorem} \label{thm deter}Suppose Assumption \ref{assume close} holds for an $(\eps,\delta)$ pair. Then, the solution to \eqref{constrained emp} with choice $\Xi=\bXi+\min_{f\in\Fc}\Lct(f)$ with $\bXi\geq \epst+\delta$ satisfies 
\[
\Lc(\hat{f})\leq \Lc^\star+3\eps.
\]
\end{theorem}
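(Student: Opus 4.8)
The plan is to exploit the definition of $\epst$ to produce a single function that is simultaneously near-optimal for the strong-supervision population risk $\Lc$ and feasible for the empirical constraint, and then to chain the two closeness bounds of Assumption \ref{assume close} around the optimality of $\hat f$. Write $\Lc^\star=\min_{f\in\Fc}\Lc(f)$ and $\Lct^\star=\min_{f\in\Fc}\Lct(f)$, so that $\Xi=\bXi+\Lct^\star$. By Definition \ref{sublevel def} the intersection $\Fc_{\Lc,\eps}\cap\Fc_{\Lct,\epst}$ is nonempty, so I would first fix a witness $f^\diamond$ in it; by construction it satisfies $\Lc(f^\diamond)\leq\Lc^\star+\eps$ and $\Lct(f^\diamond)\leq\Lct^\star+\epst$.

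The next step is to check that $f^\diamond$ is feasible for \eqref{constrained emp}. Applying the first bullet of Assumption \ref{assume close} (which is uniform over all of $\Fc$) gives $\Lcth(f^\diamond)\leq\Lct(f^\diamond)+\delta\leq\Lct^\star+\epst+\delta$; since $\bXi\geq\epst+\delta$ this is at most $\Lct^\star+\bXi=\Xi$, so $f^\diamond$ obeys the constraint. Optimality of $\hat f$ then yields $\Lch(\hat f)\leq\Lch(f^\diamond)$.

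The crux is transferring this empirical inequality to the population level, for which I need both $\hat f$ and $f^\diamond$ to lie in the sublevel set $\Fc'=\Fc_{\Lct,\bXi+\delta}$ where the second bullet of Assumption \ref{assume close} applies. For $f^\diamond$ this is immediate, since $\epst\leq\bXi+\delta$. For $\hat f$ I would again invoke the first bullet: feasibility $\Lcth(\hat f)\leq\Xi=\Lct^\star+\bXi$ combined with $\Lct(\hat f)\leq\Lcth(\hat f)+\delta$ gives $\Lct(\hat f)\leq\Lct^\star+\bXi+\delta$, i.e.\ $\hat f\in\Fc'$. This membership bookkeeping is the main obstacle of the argument: the slack condition $\bXi\geq\epst+\delta$ is exactly what is needed to absorb both the sublevel offset $\epst$ (so the witness stays feasible) and the generalization gap $\delta$ (so the empirical minimizer still lands inside $\Fc'$).

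Finally, with $\hat f,f^\diamond\in\Fc'$ I would close the argument by the chain
\[
\Lc(\hat f)\leq\Lch(\hat f)+\eps\leq\Lch(f^\diamond)+\eps\leq\Lc(f^\diamond)+2\eps\leq\Lc^\star+3\eps,
\]
where the first and third inequalities use $\max_{f\in\Fc'}|\Lc(f)-\Lch(f)|\leq\eps$, the middle inequality uses the optimality of $\hat f$ established above, and the last uses $f^\diamond\in\Fc_{\Lc,\eps}$. No nontrivial estimate is involved; the entire content is the verification that both functions remain inside $\Fc'$, after which the two uniform-closeness bounds do all the work.
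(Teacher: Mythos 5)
Your proposal is correct and follows essentially the same route as the paper's proof: the paper phrases the bookkeeping as sublevel-set inclusions ($\Fc_{\Lct,\epst}\subset\Fc_{\Lcth,\bXi}\subset\Fc_{\Lct,\bXi+\delta}$) whereas you track the witness and $\hat f$ individually, but the feasibility check, the membership in $\Fc'$, and the final chain of inequalities are identical in substance.
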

\begin{proof} Observe that, the constraint set of our problem is the sublevel set $\Fc_{\Lcth,\bXi}=\{f\in\Fc\bgl \Lcth(f)\leq \Xi\}$. The first statement of Assumption \ref{assume close} implies that the sublevel sets with respect to $\Lcth$ can be bounded via%a superset of the constraint set is given by
\[
\Fc_{\Lct,\alpha}\subseteq \Fc_{\Lcth,\alpha+\delta}\subseteq\Fc_{\Lct,\alpha+2\delta}\quad\text{for all}\quad \alpha\geq 0
%\Lct(f)-\delta/2\leq \Lcth(f)\leq \Lcth^\star+\delta/2\leq \Lct^\star+\delta/3\implies \Lct(f)\leq \Lct^\star+\delta/2.
\]
Consequently, using $\bXi\geq \epst+\delta$. we find that
\[
\Fc_{\Lct,\epst}\subset\Fc_{\Lcth,\bXi}\subset \Fc_{\Lct,\bXi+\delta}.
\]
Following the definition of $\epst$ (i.e.~\eqref{non empty}), this implies that there exists $f'\in\Fc_{\Lcth,\bXi}$ such that $\Lc(f')\leq \Lc^\star+\eps$.

%The next assumption allows us to control the empirical loss $\Lch$ over the constraint set.
%\begin{assumption} \label{strong sample}$n$ is sufficiently large to ensure $\Rc_{\Sc}(\Fc_{\Lcth,\epst+2\delta})$ is small enough so that $|\Lc(f)-\Lch(f)|\leq \eps$.
%\end{assumption}
To proceed, the second statement of Assumption \ref{assume close} guarantees that for all $f\in\Fc_{\Lcth,\bXi+\delta}$ (thus for all feasible $f\in \Fc_{\Lcth,\bXi}$) we have that $|\Lc(f)-\Lch(f)|\leq \eps$. Consequently, using the fact that $\hat{f}$ minimizes the empirical risk over the feasible set (which includes $f'$), we find that
\[
\Lc(\hat{f})-\eps\leq \Lch(\hat{f})\leq \Lch(f')\leq \Lc(f')+\eps\leq \Lc^\star+2\eps,
\]
concluding the proof.
%Here, the critical observation is that Assumption \ref{weak sample} requires Rademacher complexity over the full set (hence works well with large $\tilde{n}$) whereas Assumption \ref{strong sample} requires it over a level set (hence works well with small $n$).
\end{proof}

\subsubsection{Finishing the proof (analysis for random data)}
The proof will be concluded by plugging in the necessary sample complexity bounds to guarantee that Assumption \ref{assume close} holds. We pick $\delta=\epst$ and $\bXi\geq 2\epst$ in our bound and in light of Theorem \ref{thm deter}, we would like to guarantee that
\[
\max_{f\in\Fc}|\Lct(f)-\Lcth(f)|\leq \epst\quad\text{and}\quad \max_{f\in\Fc'}|\Lc(f)-\Lch(f)|\leq \eps
\]
where $\Fc_{\Lct,\bXi+\epst}\subseteq \Fc'=\Fc_{\Lct,2\bXi}$ (We remark that, we will ensure the desired bound holds over the larger set $\Fc'$).

Recall that $\ell,\tell:\R\rightarrow[0,1]$ are both $L$ Lipschitz functions. Thus, standard Rademacher complexity based concentration bound \cite{bartlett2002rademacher} implies that, we have that
\begin{align}
\sup_{f\in \Fc}|\Lct(f)-\Lcth(f)|\leq 2L\Rc_u(\Fc)+\frac{t}{\sqrt{u}}\\
\sup_{f\in \Fc'}|\Lc(f)-\Lch(f)|\leq 2L\Rc_n(\Fc')+\frac{t}{\sqrt{n}}.
\end{align}
each with probability at least $1-2\e^{-t^2}$. Thus, if $n$ and $u$ satisfies the advertised bounds, we find that $\sup_{f\in \Fc}|\Lct(f)-\Lcth(f)|\leq \epst$ and $\sup_{f\in \Fc'}|\Lc(f)-\Lch(f)|\leq \eps$. Plugging these in Theorem \ref{thm deter} concludes the proof.

\section{Self-Training with Fresh Samples Can Beat Supervised Learning}
Following the setup of Figure \ref{fig:performance}, we consider the following question: Can Fresh-ST, self-training with $u$ fresh unlabeled data at every iteration, beat supervised learning with $u$ labels? There is no good reason for the answer to be negative however the answer is not clearly visible from Figure \ref{fig3g}. In this section, we zoom into Figure \ref{fig3g} by plotting the accuracy gap between supervised learning and Fresh-ST which is displayed in Figure \ref{fig8}. y-axis shows the accuracy gap $\text{acc(Fresh-ST}(\tau))-\text{acc(supervised)}$ where $\tau$ is the number of iterations. As $\tau$ increases, the accuracy gap achieves positive values proving that Fresh-ST can go beyond supervised learning. Note that this claim is already very visible for logistic regression (see Fig.~\ref{fig2}). This section clarifies this for averaging estimator as well.
%\subsection{}

\begin{figure}[t!]
\centering
		\includegraphics[scale=0.4]{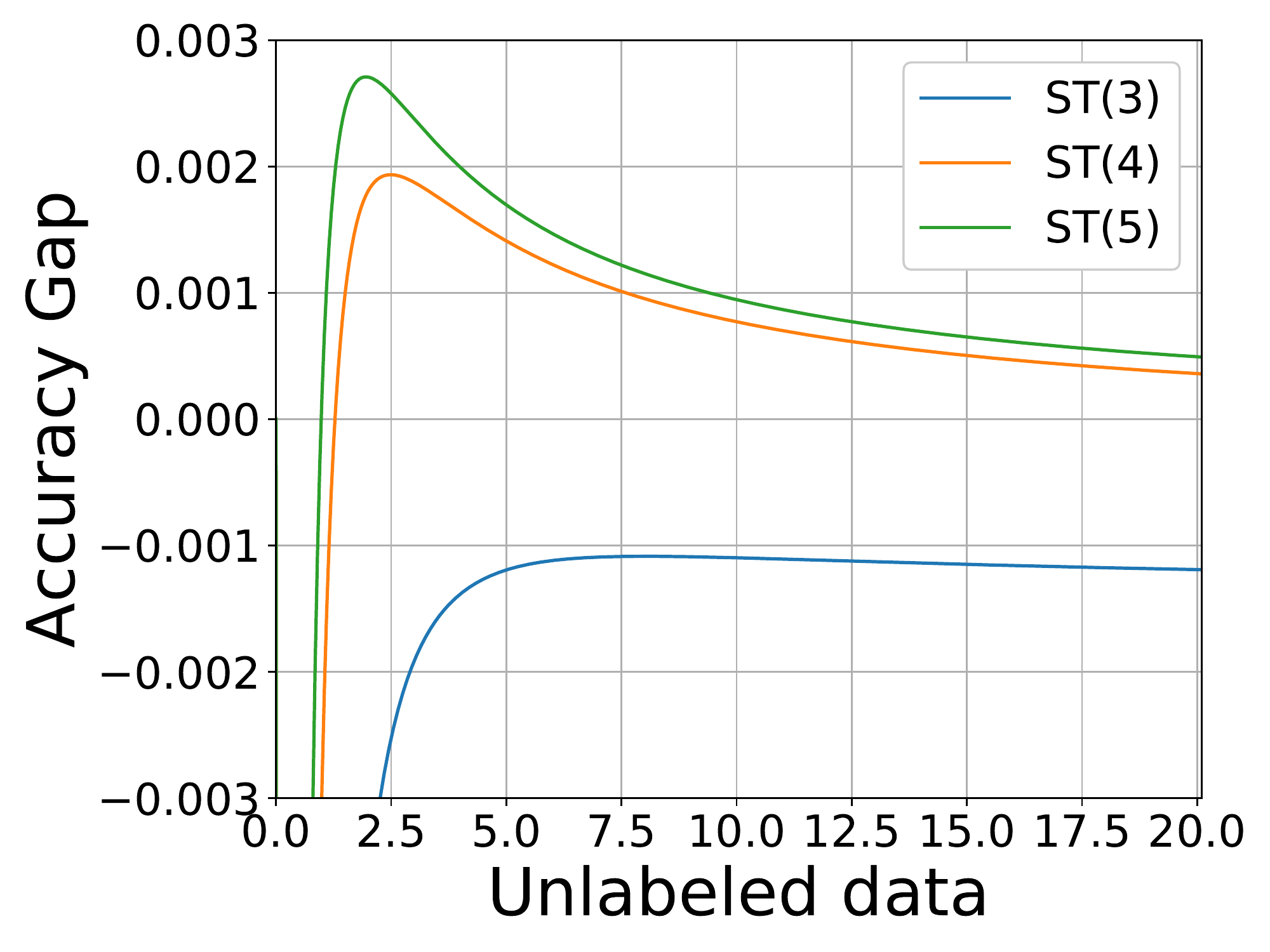}
		\caption{The gap between Fresh-ST (self-training with fresh unlabeled data) and supervised learning. Both uses $u$ samples. This figure shows that supervised learning does not upper bound Fresh-ST and a few self-training iteration can provably go beyond supervised bound. The setup is same as in Figure \ref{fig:performance}.}% (iterative self-training with fresh samples)
	% but gradient of bottom entries vanishes after a few epoch
	\label{fig8} \vspace{-0.2cm}
	% ($m\odot\nabla\Lc(\bt_\tau)$)
\end{figure}

\end{document}